\documentclass{article}

\usepackage{microtype}
\usepackage{graphicx}
\usepackage{subcaption}
\usepackage{booktabs} % for professional tables
\usepackage{tcolorbox}
\tcbuselibrary{skins,breakable}
\usepackage{xcolor}
\definecolor{appendixblue}{RGB}{0,51,153} % Match the navy blue in your image

\usepackage[colorinlistoftodos]{todonotes}

\usepackage{hyperref}

\usepackage[preprint]{icml2026}

\usepackage{amsmath}
\usepackage{amssymb}
\usepackage{mathtools}
\usepackage{amsthm}
\usepackage{microtype}

\usepackage{inconsolata}

\usepackage{graphicx}
\usepackage{amsmath}      % Core math environments (equation, align, etc.)
\usepackage{amssymb}      % Math symbols (mathbb, mathcal, etc.)
\usepackage{amsfonts}     % Math fonts
\usepackage{amsthm}       % Theorem environments
\usepackage{mathtools}    % Extensions to amsmath (dcases, coloneqq, etc.)
\usepackage{bm}           % Bold math symbols: \bm{x}
\usepackage{bbm}          % Blackboard bold for indicators: \mathbbm{1}

\usepackage{algorithm}           % Algorithm float environment
\usepackage{algorithmic}         % Basic pseudocode (used in your method)
\usepackage{pifont}      % For \ding symbols (checkmark/cross)
\usepackage{multirow}    % For multi-row headers
\usepackage{booktabs}    % For \toprule, \midrule, \bottomrule
\usepackage{colortbl}    % For \rowcolor
\usepackage{xcolor}      % For colors

\newcommand{\cmark}{\textcolor{green!70!black}{\ding{51}}}
\newcommand{\xmark}{\textcolor{red!70!black}{\ding{55}}}
\usepackage{booktabs}     % Professional tables: \toprule, \midrule, \bottomrule
\usepackage{multirow}     % Multi-row cells in tables
\usepackage{multicol}     % Multi-column layouts
\usepackage{array}        % Extended column definitions
\usepackage{tabularx}     % Auto-width columns

\usepackage{graphicx}     % Include images: \includegraphics
\usepackage{subcaption}   % Subfigures: \subcaptionbox, subfigure env
\usepackage{float}        % Better float control: [H] placement
\usepackage{wrapfig}      % Wrap text around figures (optional)

\usepackage{xcolor}       % Colors for text and highlighting
\definecolor{flamecolor}{RGB}{255, 87, 51}    % FLAME orange
\definecolor{codegreen}{RGB}{0, 128, 0}
\definecolor{codegray}{RGB}{128, 128, 128}
\definecolor{codeblue}{RGB}{0, 0, 255}
\definecolor{textglue}{RGB}{255, 230, 238}      % Light pink \definecolor{textglue}{RGB}{255, 230, 238}      % Light pink - Text (NLU)
\definecolor{textglue}{RGB}{220, 250, 220}      % Mint lime - Text (GLUE)
\definecolor{textreason}{RGB}{230, 250, 200}    % Yellow-lime - Text (Reasoning)
\definecolor{videocolor}{RGB}{200, 245, 220}    % Teal-lime - Video
\definecolor{vqacolor}{RGB}{215, 255, 210}      % Bright lime - Image (VQA)
\definecolor{vtabcolor}{RGB}{200, 240, 200}     % Classic lime - Image (VTAB)
\definecolor{imagecolor}{RGB}{215, 255, 210}    % Bright lime - ImagePeach - Image
\usepackage{hyperref}     % Clickable links
\hypersetup{
    colorlinks=true,
    linkcolor=blue,
    filecolor=magenta,
    urlcolor=cyan,
    citecolor=blue,
}
\usepackage{cleveref}    
\usepackage{listings}     % Code blocks
\newcommand{\titleicon}{%
  \raisebox{-1.90em}{\includegraphics[height=3.3em]{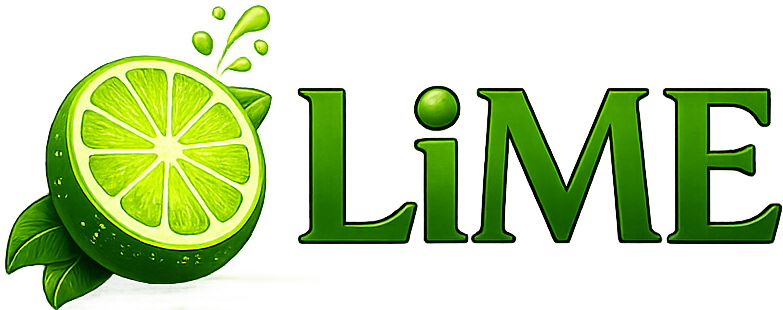}}%
}
\usepackage{tikz}
\usetikzlibrary{
    arrows,
    arrows.meta,
    positioning,
    shapes,
    shapes.geometric,
    calc,
    fit,
    backgrounds,
    decorations.pathreplacing,
    matrix,
}
\usepackage{pgfplots}     % For plots
\pgfplotsset{compat=1.18}

\usepackage{enumitem}     % Customizable lists
\usepackage{xspace}       % Smart spacing after macros
\usepackage{microtype}    % Better typography (kerning, etc.)
\usepackage{balance}      % Balance columns on last page (optional)
\usepackage[table]{xcolor}   % REQUIRED for \rowcolor
\usepackage{colortbl}        % Table color support
\usepackage{booktabs}        % \toprule, \midrule, \bottomrule
\usepackage{makecell}  % For \makecell (line breaks in cells)

\usepackage{tikz}
\usepackage{booktabs}
\usepackage{array}
\usepackage{tabularx}
\usepackage{multirow}
\usepackage{colortbl}
\usepackage{xcolor}
\usetikzlibrary{arrows.meta, positioning, shapes.geometric}

\newcolumntype{Y}{>{\centering\arraybackslash}X}

\usepackage{xcolor}

\definecolor{HeaderBlue}{RGB}{200,220,255}
\definecolor{HighlightPink}{RGB}{232, 245, 224}

\definecolor{brightnavy}{HTML}{1E40FF}
\definecolor{trainpurple}{HTML}{9C27FF} % strong readable purple
\definecolor{limecolor}{HTML}{4CBB17}   % professional lime

\newcommand{\frozen}[1]{\textcolor{brightnavy}{\mathbf{#1}}}
\newcommand{\trainable}[1]{\textcolor{trainpurple}{\mathbf{#1}}}
\newcommand{\highlighttitle}[1]{\textcolor{limecolor}{#1}}

\usepackage{tcolorbox}

\newtcolorbox{notebox}{
    colback=limecolor!10,
    colframe=black,
    boxrule=0.3pt,
    arc=0pt,
    left=3pt, right=3pt, top=1.5pt, bottom=1.5pt,
    boxsep=1pt,
    before skip=2pt,
    after skip=2pt,
    fontupper=\small
}
\tcbuselibrary{theorems, skins}

\newtcolorbox{theorembox}[1][]{
    colback=gray!5,
    colframe=black,
    boxrule=0.4pt,
    arc=0pt,
    left=3pt, right=3pt, top=1.5pt, bottom=1.5pt,
    boxsep=2pt,
    before skip=2pt,
    after skip=2pt,
    fonttitle=\small,
    title=#1,
    fontupper=\small
}

\makeatletter
\newcommand{\customlabel}[2]{%
    \def\@currentlabel{#2}%
    \label{#1}%
}
\makeatother

\definecolor{AlgoBackground}{RGB}{248, 249, 250}  % Light gray background
\definecolor{AlgoFrame}{RGB}{52, 73, 94}  

\definecolor{FrozenBlue}{RGB}{70, 130, 180}       % Steel blue - frozen params
\definecolor{TrainableOrange}{RGB}{230, 126, 34}  % Carrot orange - trainable params
\definecolor{CommentGreen}{RGB}{39, 174, 96}      % Emerald green - comments
\usepackage{algorithm}
\usepackage{algorithmic}
\usepackage{xcolor}
\usepackage{colortbl}

\definecolor{stagecolor}{RGB}{0, 102, 204}      % Blue for stage headers
\definecolor{commentcolor}{RGB}{34, 139, 34}    % Green for comments
\definecolor{frozencolor}{RGB}{100, 100, 100}   % Gray for frozen
\definecolor{traincolor}{RGB}{204, 0, 0}        % Red for trainable
\definecolor{emacolor}{RGB}{153, 51, 255}       % Purple for EMA
\definecolor{keywordcolor}{RGB}{0, 0, 139}      % Dark blue for keywords
\definecolor{bgcolor}{RGB}{248, 248, 255}       % Light background

\theoremstyle{plain}
\newtheorem{theorem}{Theorem}[section]

\theoremstyle{definition}
\newtheorem{definition}[theorem]{Definition}

\theoremstyle{remark}

\icmltitlerunning{Submission and Formatting Instructions for ICML 2026}

\usepackage{xcolor}
\usepackage{titletoc}

\definecolor{appendixblue}{RGB}{0,51,102} % Adjust to match: try also navyblue, blue!70!black, etc.

\titlecontents{appsection}[0pt]
  {\bfseries\color{appendixblue}\vspace{0.4em}}
  {\makebox[1.2em][l]{\thecontentslabel}\hspace{0.5em}}
  {}
  {\titlerule*[0.7pc]{.}\bfseries\color{appendixblue}\contentspage}
  [\vspace{0.2em}]

\titlecontents{appsubsection}[2.8em]
  {\color{appendixblue}\vspace{0.1em}}
  {\makebox[2.8em][l]{\thecontentslabel}\hspace{0.5em}} % Width accommodates D.10, D.11 etc.
  {}
  {\titlerule*[0.7pc]{.}\color{appendixblue}\contentspage}
  [\vspace{0.1em}]

\begin{document}

\twocolumn[

  \icmltitle{\vspace*{-2.0cm} \titleicon \kern0.10em \\ Lightweight Mixture of Experts for Efficient  Multimodal Multi-task Learning}
      \vspace{-0.5cm}
  % It is OKAY to include author information, even for blind submissions: the
  % style file will automatically remove it for you unless you've provided
  % the [accepted] option to the icml2026 package.

  % List of affiliations: The first argument should be a (short) identifier you
  % will use later to specify author affiliations Academic affiliations
  % should list Department, University, City, Region, Country Industry
  % affiliations should list Company, City, Region, Country

  % You can specify symbols, otherwise they are numbered in order. Ideally, you
  % should not use this facility. Affiliations will be numbered in order of
  % appearance and this is the preferred way.
  \icmlsetsymbol{equal}{*}

  \begin{icmlauthorlist}
    \icmlauthor{Md Kowsher}{yyy}
    \icmlauthor{Haris Mansoor}{sch}
    \icmlauthor{Nusrat Jahan Prottasha}{yyy}
    \icmlauthor{Ozlem Garibay}{yyy} \\
    \icmlauthor{Victor Zhu}{comp}
    \icmlauthor{Zhengping Ji}{comp}
    \icmlauthor{Chen Chen}{yyy}

    %\icmlauthor{}{sch}
    %\icmlauthor{}{sch}
  \end{icmlauthorlist}

  \icmlaffiliation{yyy}{UCF}
  \icmlaffiliation{sch}{Coventry University}
  \icmlaffiliation{comp}{Axon}

  \icmlcorrespondingauthor{Md Kowsher}{ga.kowsher@gmail.com}

  % You may provide any keywords that you find helpful for describing your
  % paper; these are used to populate the "keywords" metadata in the PDF but
  % will not be shown in the document
  \icmlkeywords{Machine Learning, ICML}

% === CLICKABLE LOGO LINKS ===
\begin{center}
  \href{https://kowsher.github.io/LiME/}{%
    \includegraphics[height=1.9em]{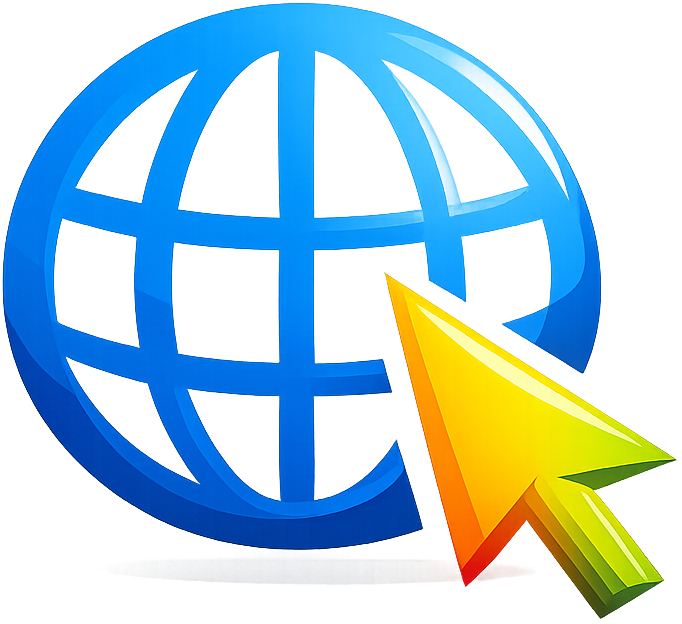}%
  }%
  \hspace{1.9em}%
  \href{https://huggingface.co/datasets/Kowsher/MMT-47}{%
    \includegraphics[height=1.9em]{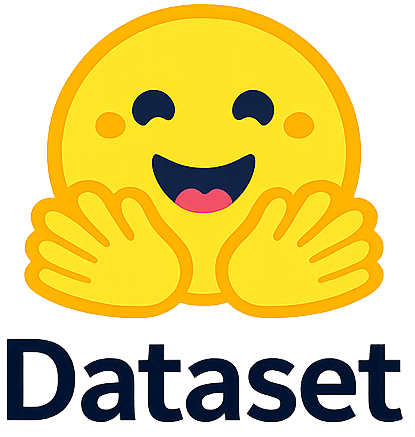}%
  }%
  \hspace{1.9em}%
  \href{https://github.com/Kowsher/LiME}{%
    \includegraphics[height=1.9em]{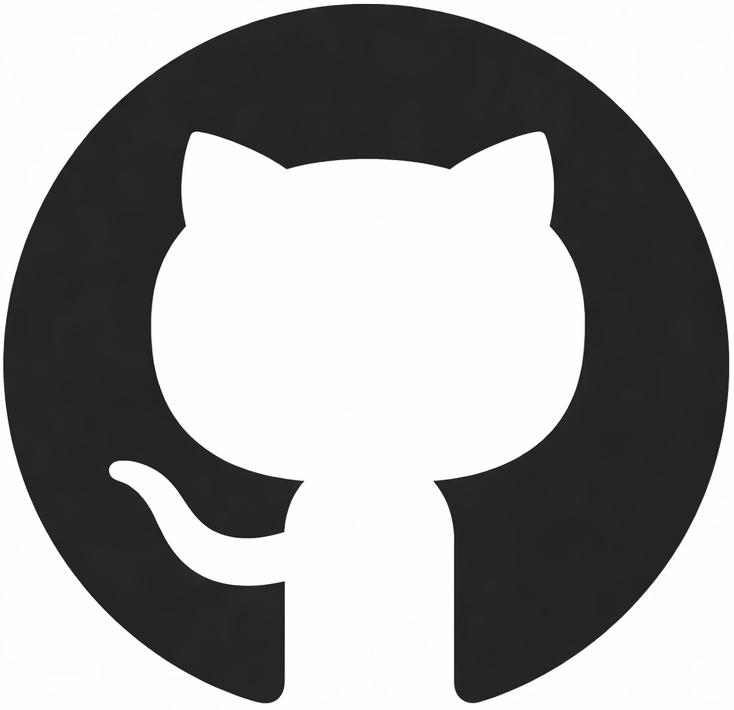}%
  }%
\end{center}
% === END LOGO LINKS ===

\vspace{-0.2cm}
{%
\begin{center}
  \includegraphics[width=1.0\linewidth]{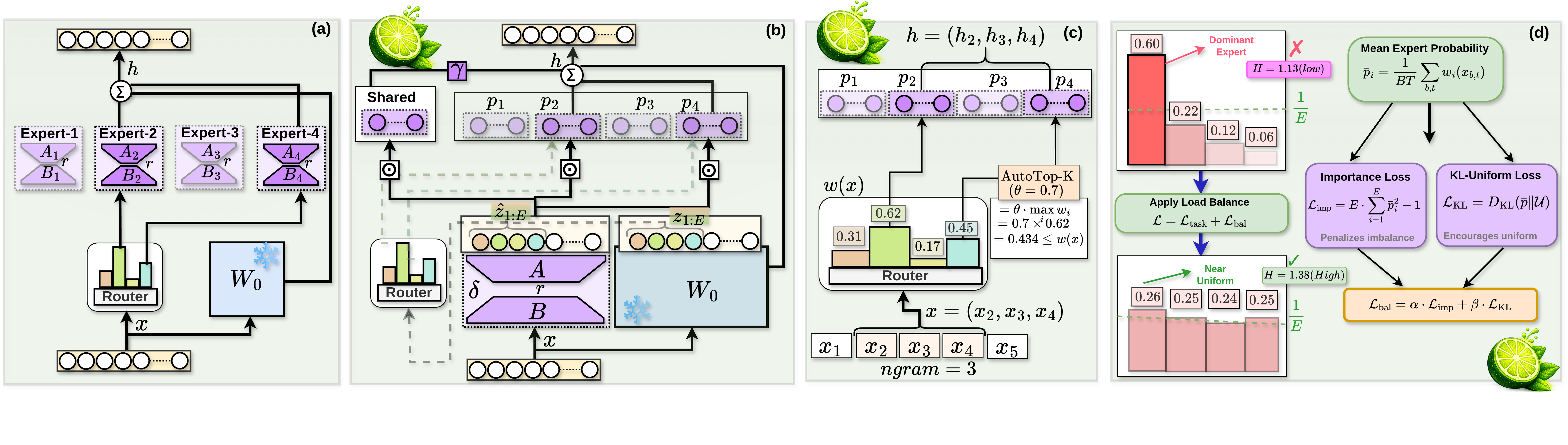}
  \vspace{-1.1cm}
  \captionof{figure}{LiME is compatible with any PEFT method; we use LoRA only as an example. (a) MoE-LoRA replicates LoRA adapters ($A_i,B_i$) for each expert and uses a learned router, requiring $E\times|\phi|$ adapter parameters plus ${d_{\text{i}}}\times E$ router parameters. (b) LiME shares a single PEFT module (LoRA here) and uses lightweight expert modulators $\trainable{p}_i\in\mathbb{R}^{d_{\text{o}}}$, reducing trainable MoE parameters to $|\phi|+E {d_{\text{o}}}$. \emph{Router reuse:} routing is computed directly from representations already produced in the forward pass—an $E$-dimensional slice of the frozen output $z_{1:E}$ and the PEFT-modified output $\hat{z}_{1:E}$—so no separate router weights are introduced (dashed router). This PEFT block can be replaced by other PEFT strategies (e.g., DoRA, Prompt Tuning, SliceFine) without changing LiME. (c) N-gram routing shares one routing decision within each window (e.g., $n{=}3$), and Auto Top-K selects experts with $w_i \ge \theta \times \max_j w_j$. (d) Load balancing losses prevent expert collapse and encourage more uniform utilization (Details in \S\ref{sec:method}).}
  \label{fig:moi3}
\end{center}
}%
\vskip 0.2in
  
]

% this must go after the closing bracket ] following \twocolumn[ ...

% This command actually creates the footnote in the first column listing the
% affiliations and the copyright notice. The command takes one argument, which
% is text to display at the start of the footnote. The \icmlEqualContribution
% command is standard text for equal contribution. Remove it (just {}) if you
% do not need this facility.

% Use ONE of the following lines. DO NOT remove the command.
% If you have no special notice, KEEP empty braces:
\printAffiliationsAndNotice{}  % no special notice (required even if empty)
% Or, if applicable, use the standard equal contribution text:
% \printAffiliationsAndNotice{\icmlEqualContribution}
\begin{abstract}

MoE-PEFT methods combine Mixture of Experts with parameter-efficient fine-tuning for multi-task adaptation, but require separate adapters per expert—causing trainable parameters to scale linearly with expert count and limiting applicability to adapter-based architectures. We propose \highlighttitle{\textbf{LiME}} (\highlighttitle{\textbf{Li}}ghtweight \highlighttitle{\textbf{M}}ixture of \highlighttitle{\textbf{E}}xperts), which achieves expert specialization through lightweight modulation rather than adapter replication. Instead of separate adapters, LiME uses a single shared PEFT module and modulates its output with lightweight expert vectors, reducing expert parameters while generalizing to any PEFT method. Notably, LiME introduces zero-parameter routing by leveraging existing frozen and adapted representations—eliminating learned router parameters typically required per layer. Theoretically, we prove that (i) more experts preserve more task-relevant information and (ii) modulation approximates full expert-specific PEFT with bounded error. LiME further incorporates n-gram windowed routing and adaptive expert selection (Auto Top-K) based on routing confidence. Experiments on MMT-47, a multimodal multi-task benchmark with 47 tasks spanning text, image, and video, demonstrate that LiME achieves competitive or superior performance while using up to $4\times$ fewer trainable parameters and up to 29\% faster training compared to corresponding MoE-PEFT baselines.

\end{abstract}
\vspace{-0.5cm}

\section{Introduction}

Parameter-efficient fine-tuning (PEFT) has emerged as the dominant paradigm for adapting large pre-trained models to downstream tasks~\citep{hu2022lora, prottasha2024parameter, prottasha2025peft, kowsher2025rocoft}.  By updating only a small fraction of parameters while keeping the backbone frozen, PEFT methods dramatically reduce computational and memory requirements compared to full fine-tuning. However, a fundamental limitation persists: current PEFT methods apply the same adaptation uniformly across all inputs, ignoring the inherent diversity in real-world data~\citep{jahan2026monkey}.

Mixture of Experts (MoE) offers a natural solution by routing different inputs to specialized sub-networks~\citep{cai2025survey}. This raises a natural question: is adding more experts actually beneficial?
\begin{theorembox}[Theorem 1: Adding Experts Is Information-Preserving] \customlabel{thm:mi}{1}
Let $X \in \mathbb{R}^d$ be an input and $Y$ the target. For an $n$-expert MoE with output $Z_n$ and an $(n\!-\!1)$-expert MoE with output $Z_{n-1}$,
{\setlength{\abovedisplayskip}{0pt}
 \setlength{\belowdisplayskip}{0pt}
\[
I(Y; Z_n) \geq I(Y; Z_{n-1}).
\]
}
(Full proof in Appendix~\ref{app:theorem_mi}.)
\end{theorembox}
Theorem~\ref{thm:mi} suggests that scaling the number of experts is beneficial in principle, more experts allow finer input partitioning without losing task-relevant information. However, realizing this benefit with existing MoE-PEFT methods is costly. Recent work has combined MoE with PEFT, using separate adapters per expert to achieve input-specific adaptation~\citep{wu2024mixture, dou2024loramoe}. While effective, this approach introduces three inefficiencies: (i) \textbf{parameter explosion}—with $E$ experts, methods like MoLA~\citep{gao2024higher} or LoRAMoE~\citep{dou2024loramoe} replicate adapters, causing parameters to grow with $E$; (ii) \textbf{router overhead}—a learned router adds $d \times E$ parameters per layer, where $d$ is the hidden dimension; and (iii) \textbf{architecture dependence}—existing MoE-PEFT methods are largely restricted to LoRA-style adapters~\citep{li2024mixlora, dou2024loramoe, luo2024moelora}, excluding non-adapter PEFT methods such as Prompt Tuning~\citep{lester2021power}, SliceFine~\citep{kowsher2025slicefine}, and LayerNorm Tuning~\citep{zhao2023tuning}.

%These costs are at odds with PEFT’s goal. %Theorem~\ref{thm:mi} suggests scaling experts is a principled way to add capacity, 
These costs conflict with the core objective of PEFT. Existing MoE-PEFT scales by replicating adapters and adding routers, making trainable parameters grow quickly with $E$. This raises a question: \textit{can we scale to a large number of experts while keeping tuning overhead minimal, enabling expert specialization for any PEFT method with zero routing parameters?} We show that the answer is yes, by re-examining and relaxing two common assumptions underlying current MoE-PEFT designs.

\textbf{Assumption 1:} \textit{``Separate adapters are necessary for expert specialization.''} We revisit this assumption. Our view is that expert specialization does not require replicating full PEFT modules; instead, experts can share a common adaptation and differ through lightweight \emph{feature-wise} rescaling vectors (\S\ref{sec:method}). This is motivated by the fact that large pre-trained models already encode substantial knowledge, so effective adaptation for fine-tuning often amounts to small perturbations rather than architectural changes~\citep{wu2024reft}. Empirically, element-wise rescaling of pretrained activations can match more complex adapters using only scaling vectors~\citep{liu2022few, hyeon2021fedpara, lian2022scaling, kowsher2025propulsion}. Similarly, partial parameter sharing in LoRA (e.g., a shared $A$ with expert-specific $B$) remains effective~\citep{tian2024hydralora}. These results support our design choice: expert-specific behavior can emerge by rescaling the output of a shared PEFT update rather than replicating separate expert modules \cite{kowsher2025does}, and Theorem~\ref{thm:peft-moe3} formalizes this by showing that LiME’s modulation can approximate expert-specific PEFT with bounded error.

\begin{table*}[t]
\centering
\caption{Comparison of MoE-based PEFT methods for a layer $\frozen{W}_0 \in \mathbb{R}^{d_o \times d_i}$. LiME uniquely combines: (1) \textbf{zero router parameters}, (2) \textbf{adaptive expert selection} (Auto Top-K), (3) \textbf{n-gram routing granularity}, (4) \textbf{any PEFT compatibility} (not restricted to LoRA), and (5) \textbf{shared PEFT module} (trainable). $d_i$: input dimension, $d_o$: output dimension, $E$: number of experts, $|\phi|$: PEFT adapter parameters (e.g., $r(d_i+d_o)$ for LoRA, $r$: rank).}
\scriptsize
\setlength{\tabcolsep}{4.0pt}
\renewcommand{\arraystretch}{1.1}
\begin{tabular}{@{}l c c c c c c c c c c@{}}
\toprule \toprule
\textbf{Method} & \makecell{\textbf{Expert}\\\textbf{Selection}} & \makecell{\textbf{Routing}\\\textbf{Granularity}} & \makecell{\textbf{Router}\\\textbf{Params}} & \makecell{\textbf{Any}\\\textbf{PEFT}} & \makecell{\textbf{Shared}\\\textbf{Expert}} & 
\makecell{\textbf{Shared}\\\textbf{PEFT}} &
\makecell{\textbf{Expert}\\\textbf{Type}} & \makecell{\textbf{Expert}\\\textbf{Params}} & \makecell{\textbf{Total}\\\textbf{Params}} &
\makecell{\textbf{Load}\\\textbf{Balance}}   \\
\midrule
MoCLE \cite{gou2023mixture} & Top-1 & Token & $d_i \times E$ & \xmark & \cmark & \xmark &  LoRA  & $Er(d_i+d_o)$ & $d_iE + Er(d_i+d_o)$ & \xmark \\
LoRAMoE \cite{dou2024loramoe} & Soft & Token & $d_i \times E$  & \xmark & \xmark & \xmark &  LoRA  & $Er(d_i+d_o)$ & $d_iE + Er(d_i+d_o)$ & \cmark \\
MoELoRA \cite{luo2024moelora} & Top-$K$ & Token & $d_i \times E$ & \xmark & \xmark & \xmark &  LoRA  & $Er(d_i+d_o)$ & $d_iE + Er(d_i+d_o)$ & \cmark  \\
MoRAL \cite{yang2024moral} & Top-$K$ & Token & $d_i \times E$ & \xmark & \xmark & \xmark &  LoRA & $Er(d_i+d_o)$ & $d_iE + Er(d_i+d_o)$ & \xmark \\
MixLoRA \cite{li2024mixlora} & Top-$K$ & Token & $d_i \times E$ & \xmark & \xmark & \xmark&  LoRA  & $Er(d_i+d_o)$ & $d_iE + Er(d_i+d_o)$ & \cmark\\
HydraLoRA \cite{tian2024hydralora} & Soft & Token & $d_i \times E$ & \xmark &  \xmark & \cmark & LoRA  & $r(d_i+Ed_o)$ & $d_iE + r(d_i+Ed_o)$ & \xmark  \\
MOLA \cite{gao2025mola} & Top-$K$ & Token & $d_i \times E$ & \xmark & \xmark & \xmark &  LoRA  & $Er(d_i+d_o)$ & $d_iE + Er(d_i+d_o)$  & \cmark  \\
LoRACoE \cite{li2025loracoe} & Soft & Token & $d_i \times E $ & \xmark & \cmark & \xmark &  LoRA  & $Er(d_i+d_o)$ & $d_iE + Er(d_i+d_o)$ & \xmark \\
DynMoLE \cite{li2025dynmole} & Hybrid & Token & $d_i \times E$ & \xmark & \xmark  & \xmark & LoRA & $Er(d_i+d_o)$ & $d_iE + Er(d_i+d_o)$  & \cmark \\
LD-MoLE \cite{zhuang2025ld} & Sparse & Token & $d_i \times E$ & \xmark & \xmark & \xmark & LoRA & $Er(d_i+d_o)$  & $d_iE + Er(d_i+d_o)$  & \cmark \\
MaLoRA \cite{wang2025malora} & Top-K & Token & $d_i \times E$ & \xmark & \cmark & \xmark & LoRA & $Er(d_i+d_o)$ & $d_iE + Er(d_i+d_o)$ & \xmark \\
HMoRA \cite{liao2025hmora} & Top-K & Token and Task & $d_i \times E$ & \xmark  & \xmark  & \xmark & LoRA &  $Er(d_i+d_o)$ & $d_iE + Er(d_i+d_o)$ & \xmark \\
MoSLD \cite{zhao2025mosld} & Top-K & Token & $d_i \times E$ & \xmark & \xmark & \cmark & LoRA & $r(d_i+Ed_o)$ & $d_iE + r(d_i+Ed_o)$ & \cmark  \\
\midrule
\rowcolor{HighlightPink}
\textbf{LiME (Ours)} & \textbf{Auto-$K$} & \textbf{N-gram} & $\mathbf{0}$ & \cmark & \cmark & \cmark & \textbf{Vector} & $\mathbf{Ed_o}$ & $\mathbf{|\phi| + Ed_o}$ & \cmark \\
\bottomrule \bottomrule
\end{tabular}
\label{tab:comparison}
\end{table*}

\textbf{Assumption 2:} \textit{``A learned router is necessary for routing.''} Standard MoE learns a router to compute routing weights, increasing computational and parameter costs. However, recent work demonstrates that internal representations of pretrained models contain rich semantic information, enabling tasks like clustering without external models~\citep{lee2025efficient, aharoni2020unsupervised}. Similarly, PEFT adaptation learns to encode task-relevant features—inputs requiring similar adaptations naturally cluster together~\citep{gou2023mixture}. \textit{Together, the frozen pretrained representations and the PEFT adaptation provide sufficient signal for routing, eliminating the need for additional routing parameters (\S \ref{app:zero_routing_motivation})}.

Building on these insights, we present \textbf{LiME} (illustrated in Figure~\ref{fig:moi3}) to address these limitations through two key components: \textbf{(i) Lightweight experts}, a scaling-based approach that mitigates \textit{parameter explosion} and \textit{architecture dependence}. Instead of replicating full adapters for each expert, LiME applies expert-specific scaling vectors to rescale a shared PEFT output element-wise. This design substantially reduces expert-specific parameters and is compatible with any PEFT method, rather than being restricted to LoRA-style adapters. 
\textbf{(ii) Zero-parameter routing}, which eliminates \textit{router overhead}. Instead of a learned router, LiME uses the frozen layer output and the PEFT output to compute expert selection probabilities. We take an $E$-dimensional slice from these representations (with $E \ll d$) and map it to a distribution over the $E$ experts, without introducing any additional routing parameters.

Beyond its core design, LiME incorporates mechanisms that address common MoE training challenges: \textbf{Auto Top-K} adaptively selects experts based on routing confidence, activating fewer experts when confident and more when uncertain—this improves over fixed top-k selection by avoiding both wasted computation on irrelevant experts and loss of useful expert combinations. \textbf{N-gram windowed routing} groups adjacent tokens within a fixed window to share routing decisions, encouraging local coherence in expert assignments based on the observation that neighboring tokens often share semantic context \cite{zaheer2020big, beltagy2020longformer}. \textbf{Load balancing losses} encourage uniform expert utilization, preventing the expert collapse problem where routing concentrates on few experts while others remain underutilized~\citep{shazeer2017outrageously, fedus2022switch}.

To evaluate multimodal multi-task performance, we compile \textbf{MMT-47}, a unified suite of 47 existing benchmarks spanning text understanding, commonsense reasoning, video understanding, and image understanding. MMT-47 combines these tasks into a single training mixture with task-specific evaluations (details in Appendix~\ref{app:datasets}).

\textbf{Contributions:}
(i) \textbf{LiME}, a framework that achieves expert specialization via element-wise rescaling on top of any PEFT method, with zero routing parameters;
(ii) \textbf{Practical mechanisms} including Auto Top-K for adaptive expert selection, n-gram routing for local semantic coherence, and load balancing to prevent expert collapse;
(iii) \textbf{Theoretical foundations} that motivate LiME, including an information-preservation result for scaling experts and an approximation guarantee showing that modulation can match expert-specific PEFT; and
(iv) \textbf{Extensive evaluation} across 47 tasks, showing competitive or superior performance with up to $4\times$ fewer parameters and 29\% faster training compared to corresponding MoE-PEFT baselines.

\textbf{Related Work.} We briefly discuss related work, comparing LiME 
with existing MoE-PEFT methods in Table~\ref{tab:comparison}, and provide a detailed 
literature review in Appendix~\ref{app:related_works}. Broadly, this work connects to 
PEFT methods~\citep{hu2022lora, liu2024dora} and MoE-based 
PEFT~\citep{shi2025ldmole, dou2024loramoe, li2024mixlora}. Existing MoE-PEFT approaches 
replicate adapters per expert, scaling parameters, 
requiring learned routers, and restricting applicability to 
adapter-based methods. LiME instead modulates a shared PEFT output 
with lightweight expert vectors and zero-parameter routing, 
enabling compatibility with any PEFT method.

\section{Preliminaries} 

We consider adapting large pre-trained models to multiple downstream tasks across different modalities. Let $f_{\frozen{\Theta}}: \mathcal{X} \rightarrow \mathcal{Y}$ denote a pre-trained model with frozen parameters $\frozen{\Theta}$. Given $K$ tasks spanning $M$ modalities, our goal is to efficiently adapt $f_{\frozen{\Theta}}$ while minimizing trainable parameters.

\textbf{Parameter-Efficient Fine-Tuning.} Rather than updating all parameters, PEFT methods introduce a small set of trainable parameters $\trainable{\phi}$ while keeping $\frozen{\Theta}$ frozen. The adapted output becomes $h = f_{\frozen{\Theta}}(x) + g_{\trainable{\phi}}(x)$, where $g_{\trainable{\phi}}(\cdot)$ is a lightweight module—encompassing methods like LoRA, adapters, and prompt tuning. For a linear layer $\frozen{W}_0 \in \mathbb{R}^{d_{\text{o}} \times d_{\text{i}}}$, this simplifies to $h = \frozen{W}_0 x + \trainable{\delta}(x)$, where $\trainable{\delta}(x) \in \mathbb{R}^{d_{\text{o}}}$ is the adaptation term. The key benefit is $|\trainable{\phi}| \ll |\frozen{\Theta}|$. However, standard PEFT applies the same adaptation to all inputs, ignoring the diversity of inputs in multi-task settings.

\textbf{Mixture of Experts.} MoE addresses this limitation by routing different inputs to specialized experts. Given $E$ experts $\{\mathcal{E}_1, \ldots, \mathcal{E}_E\}$ and routing weights $w(x) \in \Delta^{E-1}$, the output is
$y = \sum_{i=1}^{E} w_i(x) \cdot \mathcal{E}_i(x)$.
 However, traditional MoE in PEFT requires separate parameters per expert, leading to linear parameter growth and increased data requirements—each expert sees fewer samples as $E$ increases, making training less efficient. This motivates our lightweight expert modulation approach.

\textbf{Notation.} We denote frozen parameters by $\frozen{W}_0$ and trainable parameters by $\trainable{\phi}$. We use $\odot$ for element-wise multiplication, $\|\cdot\|$ for $\ell_2$ norm, $\|\cdot\|_\infty$ for the max norm, $[E]$ for the set $\{1, \ldots, E\}$, and $\Delta^{E-1} = \{w \in \mathbb{R}^E : w_i \geq 0, \sum_i w_i = 1\}$ for the probability simplex.

%==============================================================================
\section{LiME} \label{sec:method}

\begin{notebox}
\textit{\textbf{Core Idea:} Instead of replicating PEFT modules for each expert, use a single shared PEFT module and modulate its output with lightweight expert-specific vectors.}
\end{notebox}

We present LiME in four parts: \textit{(i)} expert modulation that achieves specialization through lightweight vectors, \textit{(ii)} zero-parameter routing derived from existing computations, \textit{(iii)} practical mechanisms including n-gram windowed routing and adaptive expert selection, and \textit{(iv)} training considerations including load balancing.

%------------------------------------------------------------------------------
\textbf{Lightweight Experts.} Let $z = \frozen{W}_0 x \in \mathbb{R}^{d_{\text{o}}}$ be the frozen output and $\hat{z} = \trainable{\delta}(x) \in \mathbb{R}^{d_{\text{o}}}$ be the PEFT output. LiME rescales the PEFT output with expert-specific scaling vectors. Let $\trainable{p}_i \in \mathbb{R}^{d_{\text{o}}}$ for $i \in [E]$ be the $E$ expert scaling vectors, and let $w(x) \in \mathbb{R}^E$ be routing weights (defined below). The scaled output is:
{\setlength{\abovedisplayskip}{1pt}
 \setlength{\belowdisplayskip}{1pt}
\begin{equation}
h = z + \hat{z} \odot \mathcal{P}(x)
\end{equation}
} 
where $\mathcal{P}(x) = \sum_{i=1}^{E} w_i(x) \cdot \trainable{p}_i$ combines expert scaling vectors weighted by routing scores. Optionally, we include a shared scaling vector $\trainable{p}_s \in \mathbb{R}^{d_{\text{o}}}$ with a learnable scalar $\trainable{\gamma}$:
{\setlength{\abovedisplayskip}{1pt}
 \setlength{\belowdisplayskip}{-1pt}
\begin{equation}
h = z + \hat{z} \odot \mathcal{P}(x) + \trainable{\gamma} \cdot (\hat{z} \odot \trainable{p}_s)
\end{equation}
}

A natural question arises: can this lightweight modulation match full expert-specific PEFT, where each expert has its own adapter? We provide a theoretical guarantee:

\begin{theorembox}[Theorem 2: LiME Approximates Expert-Specific PEFT]
\customlabel{thm:peft-moe3}{2}
Let $Z_{\text{MoE}}$ be the output of expert-specific PEFT (separate $\trainable{\phi}_e$ per expert) and $Z_{\text{LiME}}$ be LiME output (shared $\trainable{\phi}$ with modulators $\trainable{p}_e$). If the approximation error is bounded by $\bar{\varepsilon}$, then:
{\setlength{\abovedisplayskip}{1pt}
 \setlength{\belowdisplayskip}{1pt}
\[
\bigl| \mathcal{R}^*(Z_{\text{LiME}}) - \mathcal{R}^*(Z_{\text{MoE}}) \bigr|
\le \mathcal{O}(\bar{\varepsilon}),
\]
}
where $\mathcal{R}^*$ is the optimal risk (proof in Appendix~\ref{app:theorem_approx}).
\end{theorembox}

This means LiME can closely approximate full expert-specific PEFT while using fewer parameters: $|\trainable{\phi}| + Ed_o$ instead of $E \times |\trainable{\phi}|$. Since pretrained representations are highly redundant, full expert-specific adapters are often over-parameterized~\cite{tian2025less}: much of the adaptation can be expressed by selectively scaling a small subset of useful feature dimensions. Supporting this view, \citet{luo2023less} show that using only $\sim$1\% of the most important feature dimensions can recover the performance achieved by the full representation. Theorem~\ref{thm:mi} motivates this efficient design: by keeping parameter costs low, LiME can scale to more experts without significant parameter growth.
%------------------------------------------------------------------------------

\textbf{Zero-Parameter Routing.} Standard MoE layers learn a router $\trainable{W}_r \in \mathbb{R}^{d \times E}$ to produce routing weights, adding $d \times E$ parameters per layer. LiME removes this cost by computing routing weights directly from representations that are already available in the forward pass.

\begin{notebox}
\textit{\textbf{Key Insight:} We reuse existing representations as \emph{routing features}. The frozen output $z$ provides general semantic information, while the PEFT output $\hat{z}$ captures task- and input-dependent corrections. Combining them yields effective routing with \textbf{zero} learned router parameters.}
\end{notebox}

This design is motivated by two observations. \textit{(i) Forward-pass representations are informative.} The frozen layer output $z$ captures general semantic information, while the PEFT output $\hat{z}$ captures task-specific changes induced by fine-tuning. Inputs that require similar expert processing tend to produce similar patterns in these representations (\S\ref{ab:routing_represenation}), making them useful signals for expert selection. \textit{(ii) A low-dimensional slice is sufficient.} Since $E \ll d$, routing only needs a small number of degrees of freedom to choose among $E$ experts, and a small slice of these already-computed representations provides enough signal in practice (Appendix~\ref{app:zero_routing_motivation}, \S\ref{ab:zero_routing}).

Formally, we compute routing scores from $E$ dimensions of both representations; the specific dimensions are not critical, and we use the first $E$ for simplicity (Figure~\ref{fig:routing}a):
{\setlength{\abovedisplayskip}{1pt}
 \setlength{\belowdisplayskip}{1pt}
\begin{equation}
w(x) = \mathrm{softmax}\left( \frac{(1-\gamma_r)\cdot \tilde{z}_{1:E} + \gamma_r \cdot \tilde{\hat{z}}_{1:E}}{\tau} \right).
\end{equation}
}
Here $\tilde{z}_{1:E} = z_{1:E}/\|z_{1:E}\|_\infty$ and $\tilde{\hat{z}}_{1:E} = \hat{z}_{1:E}/\|\hat{z}_{1:E}\|_\infty$ are normalized slices, $\gamma_r \in [0,1]$ balances frozen and PEFT-modified signals (\S\ref{ab:routing_balance}), and $\tau>0$ is the temperature (\S\ref{ab:temparature}). The normalization stabilizes routing across layers and inputs.

%------------------------------------------------------------------------------
\textbf{N-gram Windowed Routing.} LiME supports different routing granularities. In token-level routing, each token chooses experts independently, resulting in $T$ routing decisions for a length-$T$ sequence. In sequence-level routing, a single pooled representation is used to make one routing decision for the entire sequence, but this can miss within-sequence variation. We adopt n-gram windowed routing as a middle ground: we partition the sequence into windows of size $n$, and all tokens in a window share one routing decision (i.e., $T/n$ decisions per sequence). This reduces sensitivity to token-level noise and encourages locally consistent expert assignments, since neighboring tokens often form coherent semantic units (\S\ref{ab:ngram}).

For causal models, we use the last token of each window as the routing representative. This choice is theoretically motivated:

\begin{theorembox}[Theorem 3: Informativeness in Causal N-gram Windows]
\customlabel{thm:ngram_routing}{3}
Let $h_1, \ldots, h_n \in \mathbb{R}^d$ be hidden states within an n-gram window under causal attention, and let $Y$ denote the task objective. Then the routing informativeness is maximized at the window's final position:
{\setlength{\abovedisplayskip}{2pt}
 \setlength{\belowdisplayskip}{1pt}
\small
\[
I(Y; h_n) \geq I(Y; h_{n-1}) \geq \cdots \geq I(Y; h_1)
\]
\normalsize
}
(Full proof in Appendix~\ref{app:theorem_ngram}).
\end{theorembox}

Since later positions aggregate more context via causal attention, the last token of each n-gram window captures the most task-relevant information, making it the optimal choice for routing decisions~\citep{kowsher2025predicting}.

%------------------------------------------------------------------------------
\textbf{Auto Top-K.} After computing routing weights $w(x)$, we select which experts to activate. The standard approach---fixed top-$k$---always selects exactly $k$ experts regardless of the score distribution (\S\ref{ab:topk}). This can be inefficient: when one expert clearly dominates (e.g., 0.52 vs 0.12), selecting additional experts wastes computation; when multiple experts have similar scores (e.g., 0.35 vs 0.34 vs 0.33), restricting to $k=2$ can discard useful expert combinations. We therefore use an adaptive policy that activates fewer experts when routing is confident and more when it is uncertain. This is consistent with Theorem~\ref{thm:mi}, which suggests that retaining more experts is beneficial when multiple experts carry comparable task-relevant information. Auto Top-K uses a relative threshold:
{\setlength{\abovedisplayskip}{1pt}
 \setlength{\belowdisplayskip}{1pt}
\begin{equation}
    \mathcal{S}_\theta(x) = \left\{ i : w_i(x) \geq \theta \cdot \max_{j} w_j(x) \right\},
\end{equation}}
where $\theta \in (0, 1]$ controls selection aggressiveness. For example, with $\theta = 0.5$, we keep all experts scoring at least half as high as the best. Selected experts are renormalized as $\tilde{w}_i(x) = w_i(x) / \sum_{j \in \mathcal{S}_\theta(x)} w_j(x)$. See Appendix~\ref{app:topk_strategies} for comparisons with other strategies.

%------------------------------------------------------------------------------
\textbf{Layer Formulation.} LiME applies to any layer equipped with PEFT (\S\ref{app:algorithm}). For a frozen linear layer $\frozen{W}_0$, the forward pass is:
\textit{(1)} compute the base output $z=\frozen{W}_0x$;
\textit{(2)} compute the PEFT update $\hat{z}\in\mathbb{R}^{d_{\text{o}}}$;
\textit{(3)} compute routing weights $w(x)$ from $z$ and $\hat{z}$, then select experts $\mathcal{S}_\theta(x)$ and renormalize weights to $\tilde{w}(x)$;
\textit{(4)} form the expert modulator
$\mathcal{P}(x)=\sum_{i\in \mathcal{S}_\theta(x)}\tilde{w}_i(x)\cdot \trainable{p}_i$;
and \textit{(5)} produce the final output: $
h = z + \hat{z}\odot \mathcal{P}(x) + \trainable{\gamma}\cdot(\hat{z}\odot \trainable{p}_s).
$
Routing granularity (token or window) is chosen based on the task (\S\ref{ab:ngram}).

\textbf{Initialization.} Expert modulators are initialized near unity: $\trainable{p}_i \sim \mathcal{U}(1-\sigma, 1+\sigma)$ with $\sigma=0.1$, so LiME starts close to standard PEFT and specialization emerges during training, following \citet{kowsher2025propulsion} (\S\ref{ab:initialization}).

\textbf{Parameter Count.} For a model with $L$ LiME layers and $E$ experts, let $|\phi|$ denote the PEFT parameters per layer. The total number of trainable parameters is:
{\setlength{\abovedisplayskip}{1pt}
 \setlength{\belowdisplayskip}{2pt}
\begin{equation*}
\small
|\trainable{\phi}_{\text{LiME}}|
= L \cdot \Big( \underbrace{|\phi|}_{\text{shared PEFT}}
+ \underbrace{E\cdot d_{\text{o}}}_{\text{expert modulators}}
+ \underbrace{d_{\text{o}}+1}_{\text{shared modulator }+\gamma} \Big).
\end{equation*}
}
In contrast, traditional MoE-PEFT requires $L\cdot E\cdot |\phi|$ parameters, which scales linearly with the number of experts. LiME adds only $E\cdot d_{\text{o}}$ parameters per layer for expert modulators, independent of the PEFT method.

%------------------------------------------------------------------------------
\textbf{Load Balancing.} MoE models can suffer from \emph{expert collapse}, where routing concentrates on a few experts while others are underutilized. We add auxiliary losses to encourage balanced utilization. Let
$\bar{p}_i=\frac{1}{BT}\sum_{b,t} w_i(x_{b,t})$
be the mean routing probability of expert $i$ over a batch of $B$ sequences with $T$ tokens (so $\sum_i \bar p_i=1$). We use two complementary losses: \textit{(i) Importance Loss}
$\mathcal{L}_{\text{imp}} = E\sum_{i=1}^{E}\bar{p}_i^2 - 1$,
and \textit{(ii) KL-Uniform Loss}
$\mathcal{L}_{\text{KL}} = D_{\text{KL}}(\bar{p}\,\|\,\mathcal{U})$,
where $\mathcal{U}$ is uniform. The total objective is:
{\setlength{\abovedisplayskip}{1pt}
 \setlength{\belowdisplayskip}{1pt}
\begin{equation}
\mathcal{L} = \mathcal{L}_{\text{task}} + \alpha\,\mathcal{L}_{\text{imp}} + \beta\,\mathcal{L}_{\text{KL}},
\end{equation}
}
where $\alpha$ and $\beta$ control the balancing strength (\S\ref{ab:load_balance}, \S\ref{ab:load_balance_expert_uses}).

\begin{table*}[t]
\centering
\caption{Average results across benchmark categories. Highlighted rows show our LiME variants. \textbf{Bold}: best; \underline{underline}: second best. \#TTP: total trainable parameters. Detailed per-task results in Tables~\ref{tab:glue_results}, \ref{tab:qa_results}, \ref{tab:image_cls}, \ref{tab:vqa}, \ref{tab:video1}, \ref{tab:video2}, \ref{tab:video3}.}
\scriptsize
\small
\resizebox{0.9\textwidth}{!}{%
\renewcommand{\arraystretch}{1.15}
\begin{tabular}{lcccccccc}
\toprule \toprule
\textbf{Method} & \textbf{\#TTP} & \begin{tabular}[c]{@{}c@{}}\textbf{Vision} \\ \textbf{Benchmark}\end{tabular} & \begin{tabular}[c]{@{}c@{}} \textbf{Image} \\ \textbf{Classification} \end{tabular} & \begin{tabular}[c]{@{}c@{}} \textbf{Commonsense} \\ \textbf{Reasoning} \end{tabular} & \textbf{GLUE} & \begin{tabular}[c]{@{}c@{}} \textbf{High Level} \\ \textbf{Reasoning} \end{tabular} & \begin{tabular}[c]{@{}c@{}} \textbf{Object Motion} \\ \textbf{\& Spatial} \end{tabular} & \begin{tabular}[c]{@{}c@{}} \textbf{Action} \\ \textbf{Understanding}\end{tabular} \\
\midrule
PromptTuning & 0.07M & $72.63_{\pm 2.24}$ & $50.22_{\pm 0.96}$ & $77.50_{\pm 1.53}$ & $71.91_{\pm 1.35}$ & $15.86_{\pm 2.29}$ & $14.51_{\pm 2.64}$ & $15.86_{\pm 2.42}$ \\
LoRA & 1.74M & $77.02_{\pm 1.59}$ & $93.92_{\pm 0.56}$ & $83.80_{\pm 1.27}$ & $90.64_{\pm 0.97}$ & $43.23_{\pm 1.88}$ & $62.85_{\pm 2.19}$ & $50.99_{\pm 1.96}$ \\
DoRA & 2.09M & $76.35_{\pm 1.62}$ & $93.35_{\pm 0.68}$ & $83.53_{\pm 1.01}$ & $90.37_{\pm 0.99}$ & $42.56_{\pm 1.88}$ & $62.14_{\pm 2.13}$ & $50.41_{\pm 2.07}$ \\
LoRA-FA & 0.70M & $76.31_{\pm 1.62}$ & $93.50_{\pm 0.55}$ & $83.45_{\pm 1.11}$ & $90.31_{\pm 0.81}$ & $42.47_{\pm 1.94}$ & $61.97_{\pm 2.17}$ & $50.30_{\pm 2.07}$ \\
SliceFine & 1.74M & $77.06_{\pm 1.53}$ & $93.98_{\pm 0.53}$ & $83.84_{\pm 1.22}$ & $90.54_{\pm 0.97}$ & $43.30_{\pm 1.86}$ & $62.91_{\pm 2.12}$ & $51.06_{\pm 1.94}$ \\
\midrule
MoCLE & 5.92M & $77.59_{\pm 1.39}$ & $94.25_{\pm 0.42}$ & $84.18_{\pm 1.03}$ & $90.66_{\pm 0.85}$ & $43.78_{\pm 1.87}$ & $63.43_{\pm 2.10}$ & $51.53_{\pm 1.94}$ \\
MoELoRA & 10.79M & $77.27_{\pm 1.41}$ & $93.97_{\pm 0.57}$ & $84.08_{\pm 1.06}$ & $\textbf{91.21}_{\pm 0.87}$ & $43.84_{\pm 1.89}$ & $63.07_{\pm 1.95}$ & $51.13_{\pm 2.00}$ \\
MixLoRA & 3.83M & $77.08_{\pm 1.30}$ & $93.25_{\pm 0.60}$ & $83.27_{\pm 0.99}$ & $90.50_{\pm 0.86}$ & $43.86_{\pm 1.91}$ & $63.54_{\pm 2.04}$ & $51.70_{\pm 1.88}$ \\
HydraLoRA & 5.92M & $\underline{78.11}_{\pm 1.45}$ & $94.58_{\pm 0.52}$ & $84.43_{\pm 1.03}$ & $91.05_{\pm 0.88}$ & $\underline{45.79}_{\pm 1.92}$ & $64.95_{\pm 2.07}$ & $53.28_{\pm 2.01}$ \\
MoLA & 9.04M & $77.10_{\pm 1.48}$ & $\textbf{94.74}_{\pm 0.54}$ & $84.07_{\pm 0.96}$ & $90.88_{\pm 0.94}$ & $42.97_{\pm 1.90}$ & $62.74_{\pm 2.08}$ & $50.97_{\pm 1.99}$ \\
MoRe & 3.59M & $77.98_{\pm 1.30}$ & $93.84_{\pm 0.57}$ & $84.44_{\pm 1.09}$ & $90.74_{\pm 0.97}$ & $45.58_{\pm 1.95}$ & $64.95_{\pm 2.06}$ & $\textbf{53.48}_{\pm 1.98}$ \\
MoEDoRA & 12.19M & $78.07_{\pm 1.43}$ & $\underline{94.72}_{\pm 0.71}$ & $84.11_{\pm 0.98}$ & $90.98_{\pm 0.97}$ & $\textbf{45.92}_{\pm 1.90}$ & $\underline{65.16}_{\pm 2.06}$ & $52.54_{\pm 2.01}$ \\
\midrule
\rowcolor{HighlightPink} LiMEPromptTuning & 0.09M & $73.48_{\pm 1.91}$ & $51.07_{\pm 0.75}$ & $78.38_{\pm 1.28}$ & $72.68_{\pm 1.12}$ & $16.73_{\pm 2.04}$ & $15.39_{\pm 2.33}$ & $16.95_{\pm 2.17}$ \\
\rowcolor{HighlightPink} LiMELoRA & 3.49M & $78.01_{\pm 1.31}$ & $94.47_{\pm 0.53}$ & $\textbf{84.98}_{\pm 0.86}$ & $91.02_{\pm 0.84}$ & $45.00_{\pm 1.86}$ & $65.05_{\pm 2.08}$ & $53.19_{\pm 1.93}$ \\
\rowcolor{HighlightPink} LiMEDoRA & 3.84M & $\textbf{78.12}_{\pm 1.48}$ & $94.50_{\pm 0.52}$ & $\underline{84.76}_{\pm 0.88}$ & $91.11_{\pm 0.92}$ & $45.65_{\pm 1.88}$ & $\textbf{65.41}_{\pm 2.14}$ & $\underline{53.39}_{\pm 1.97}$ \\
\rowcolor{HighlightPink} LiMELoRAFA & 2.44M & $77.77_{\pm 1.47}$ & $94.61_{\pm 0.67}$ & $84.49_{\pm 0.84}$ & $\underline{91.14}_{\pm 0.87}$ & $44.91_{\pm 1.99}$ & $64.89_{\pm 2.12}$ & $53.01_{\pm 1.94}$ \\
\rowcolor{HighlightPink} LiMESliceFine & 2.44M & $77.77_{\pm 1.41}$ & $94.34_{\pm 0.52}$ & $84.50_{\pm 0.96}$ & $91.19_{\pm 0.82}$ & $45.34_{\pm 1.84}$ & $65.14_{\pm 2.03}$ & $52.73_{\pm 1.90}$ \\
\bottomrule \bottomrule
\end{tabular}%
}
\label{tab:avg_results}
\end{table*}

\section{Experiments}

We evaluate LiME on multimodal multi-task dataset to test whether it works well across different modalities and tasks. LiME can be combined with any trainable PEFT method. To validate this,  We instantiate LiME with adapter-based PEFT (LoRA~\citep{hu2022lora}, DoRA~\citep{liu2024dora}, LoRA-FA~\citep{zhang2023lora}) and non-adapter PEFT (SliceFine~\citep{kowsher2025slicefine}, Prompt Tuning~\citep{lester2021power}), denoted as LiMELoRA, LiMEDoRA, LiMELoRA-FA, LiMESliceFine, and LiMEPromptTuning.

We compare against (1) standard PEFT baselines (LoRA, DoRA, LoRA-FA, SliceFine, Prompt Tuning) and (2) MoE-PEFT baselines (MoCLE~\citep{lee2025efficient}, MoELoRA~\citep{luo2024moelora}, MixLoRA~\citep{li2024mixlora}, HydraLoRA~\citep{tian2024hydralora}, MoLA~\citep{gao2025mola}, MoRe~\citep{zhang2025more}), and we also implement MoEDoRA and MoELoRA-FA following standard MoE-PEFT procedures. We use LLaVA-OneVision-Qwen2-7B~\citep{li2024llava} as the base model (and LLaVA-OneVision-Qwen2-0.5B for ablations). Experiments are conducted on MMT-47, which we compile from existing benchmarks into a unified training mixture (158K samples) with 47 test sets spanning text, commonsense, video, and image tasks (Appendix~\ref{app:datasets}). We additionally evaluate on Molmo2-8B~\citep{clark2026molmo2} to validate generalization across vision-language architectures (Appendix~\ref{app:molmo}).

We train for 3 epochs with 5 seeds, using differential learning rates ($2{\times}10^{-4}$ for PEFT parameters and $1{\times}10^{-3}$ for expert modulators), a cosine schedule, and 3\% warmup. Unless stated otherwise, LiME uses 4 experts with Auto Top-K $\theta{=}0.7$ and n-gram window size 3. Full hyperparameters are in Appendix~\ref{app:implementation}.

\textbf{Main Results.} Table~\ref{tab:avg_results} presents performance across seven benchmark categories spanning vision, language, and reasoning tasks. LiME variants consistently achieve competitive performance compared to both standard PEFT and MoE-PEFT methods. On Vision Benchmark, LiMEDoRA achieves the best result ($78.12\%$), outperforming HydraLoRA ($78.11\%$). On Commonsense Reasoning, LiMELoRA achieves $84.98\%$, the highest among all methods. On Object Motion \& Spatial reasoning, LiMEDoRA ($65.41\%$) outperforms the strongest baseline MoEDoRA ($65.16\%$). On GLUE, LiMELoRAFA ($91.14\%$) and LiMESliceFine ($91.19\%$) closely match the best baseline MoELoRA ($91.21\%$). Notably, all LiME variants consistently outperform their corresponding base PEFT methods across all categories, demonstrating that lightweight expert modulation effectively captures input-specific specialization. Detailed per-task results are provided in Appendix~\ref{app:detailed_results}.

\begin{figure*}[!htb]
    \centering
    \includegraphics[width=\linewidth]{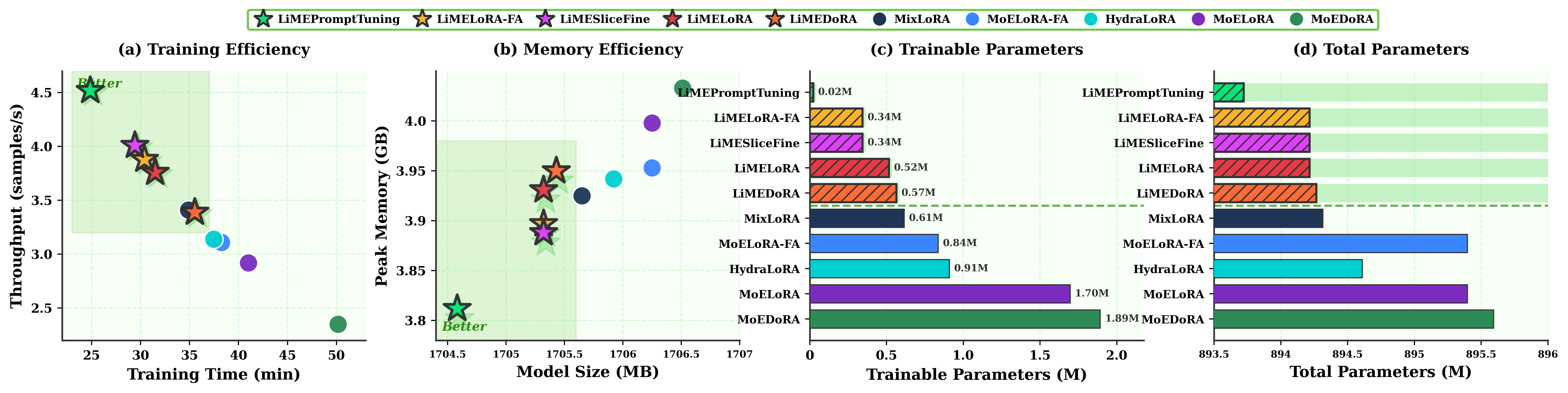}
    \caption{\textbf{Efficiency comparison of LiME vs.\ MoE-PEFT baselines.} (a)~LiME variants (stars) achieve higher throughput and shorter training time; LiMEPromptTuning is the most efficient (4.52 samples/s, 25 min). (b)~All methods show comparable peak memory due to the dominant frozen backbone. (c)~LiME requires 0.02--0.57M trainable parameters—up to $4\times$ fewer than corresponding MoE-PEFT methods. (d)~Total model size remains comparable ($\sim$894M) across all methods.}
    \label{fig:efficiency}
   
\end{figure*}

\textbf{Efficiency Analysis.} We compare LiME against MoE-PEFT baselines on a single H100 GPU with batch size 2, gradient accumulation 2, and bfloat16 precision for 1 epoch on GLUE. LiME variants achieve higher throughput (3.39--4.52 samples/s) with shorter training time (25--36 min). Comparing corresponding methods: LiMEDoRA (35.5 min) is 29\% faster than MoEDoRA (50.2 min), and LiMELoRA (31.5 min) is 23\% faster than MoELoRA (41.0 min). The parameter comparison reveals the key advantage: LiMELoRA requires 0.52M trainable parameters—$4\times$ fewer than MoELoRA (1.97M)—and LiMEDoRA requires 0.57M— $4\times$ fewer than MoEDoRA (2.16M)—while maintaining comparable total size ($\sim$894M). This stems from our modulation design: instead of replicating adapters per expert ($E \times |\phi|$), LiME uses lightweight expert vectors ($E \times d$). Notably, LiMEPromptTuning achieves the best efficiency with only 0.02M parameters and 4.52 samples/s throughput, demonstrating LiME's versatility across PEFT methods.

\begin{notebox}
\textit{\textbf{Key Finding:} LiME achieves competitive performance with up to $4\times$ fewer trainable parameters and up to 29\% faster training compared to corresponding MoE-PEFT baselines.}
\end{notebox}

\begin{figure*}[!htb]
    \centering
    \includegraphics[width=\linewidth]{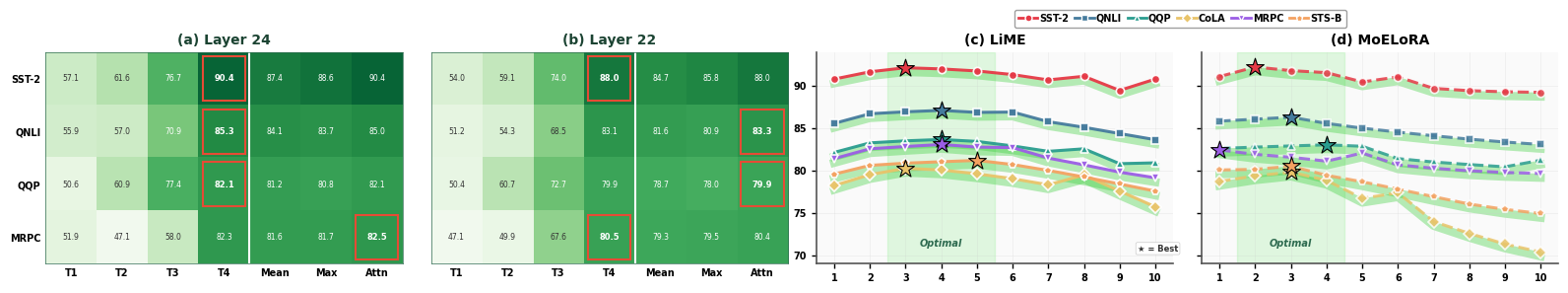}
    \caption{\textbf{Empirical validation of our theory.} (a--b) Linear probe accuracy at different token positions within an n-gram window (layers 24 and 22), supporting Theorem~\ref{thm:ngram_routing}. (c--d) GLUE accuracy versus number of experts for LiME and MoELoRA, supporting Theorem~\ref{thm:mi}; stars mark the best $E$ for each method.}

    \label{fig:empirical}

\end{figure*}

\textbf{Empirical Evidence of Theorem~\ref{thm:mi}.} We empirically study Theorem~\ref{thm:mi}, which states that adding experts makes the MoE output \emph{at least as informative} about the target. Figure~\ref{fig:empirical}(c--d) reports GLUE accuracy as we vary the number of experts from 1 to 10. Both LiME and MoELoRA peak at 3--5 experts (stars), suggesting that a moderate number of experts often provides the best balance between added capacity and how well experts can be trained.

This behavior is expected in practice because Theorem~\ref{thm:mi} describes capacity under an ideal solution, while accuracy depends on finite-data training. If a batch contains $N$ for $E$ experts, each expert receives fewer tokens on average (about $O(N/E)$). With fewer examples per expert, routing and load balancing become harder and experts can be under-trained, which may reduce generalization and hurt performance~\citep{lepikhin2020gshard, geiger2020scaling}. MoE-PEFT is more sensitive to this because it learns a separate adapter for each expert, so each adapter is trained on a smaller slice of data. LiME is more stable because all experts share the same PEFT module and only learn lightweight scaling vectors; the shared adaptation is trained on the full dataset. As a result, LiMELoRA scales to larger $E$ better than MoELoRA in the limited-data setting (\S\ref{ab:expert_scaling}).

\textbf{Empirical Evidence of Theorem~\ref{thm:peft-moe3}.} Theorem~\ref{thm:peft-moe3} predicts that LiME can match expert-specific PEFT up to a small performance gap. We test this by comparing the internal representations of LiME and MoELoRA using Centered Kernel Alignment (CKA), a standard measure of representation similarity that is invariant to rotation and scaling. Across GLUE tasks, LiME and MoELoRA show consistently high similarity: \textit{SST-2 (0.942), QNLI (0.939), QQP (0.931), CoLA (0.939), MRPC (0.931), and STS-B (0.929)}, with a mean CKA of 0.935 (Table~\ref{tab:cka_layers}). This suggests that LiME learns representations very close to those of full MoE-PEFT, while using $4\times$ fewer parameters. Together with the accuracy results in Table~\ref{tab:avg_results}, these findings support the theorem's implication that expert scaling ($\hat{z} \odot \trainable{p}_e$) can serve as an effective substitute for separate expert adapters. See Appendix~\ref{app:cka_analysis} for layer-wise analysis.

\begin{notebox}
\textit{\textbf{Finding:} LiME reaches a mean CKA of 0.935 with MoELoRA, indicating a similar representations using $4\times$ fewer parameters.}
\end{notebox}

\textbf{Empirical Evidence of Theorem~\ref{thm:ngram_routing}.} We test the theorem’s prediction that later tokens in an n-gram window carry more task-relevant information in causal models. We take hidden states from different positions in a window (T1, T2, T3, T4 from early to late) and train a linear probe to predict the label. Figure~\ref{fig:empirical}(a-b) shows results (layers 24 and 22) on GLUE. Probe accuracy consistently increases with token position, rising from $\sim$51--57\% at T1 to 80--90\% at T4 across tasks. This supports the intuition that later tokens have more context under causal attention, and thus encode more useful information for prediction. We also compare simple pooling choices: using the last token (T4) matches or outperforms mean/max pooling and is competitive with attention pooling, while avoiding the extra learnable parameters required by attention pooling. Overall, these results motivate our use of last-token routing for n-gram windowed routing in causal models.

\begin{notebox}
\textit{\textbf{Finding:} Probe accuracy increases from $\sim$51--57\% (T1) to 80--90\% (T4), indicating later tokens encode more task-relevant information. Last-token routing is competitive with attention pooling while requiring no additional parameters.}
\end{notebox}

\subsection{Ablation}

\begin{figure*}[t]
    \centering
    \includegraphics[width=\linewidth]{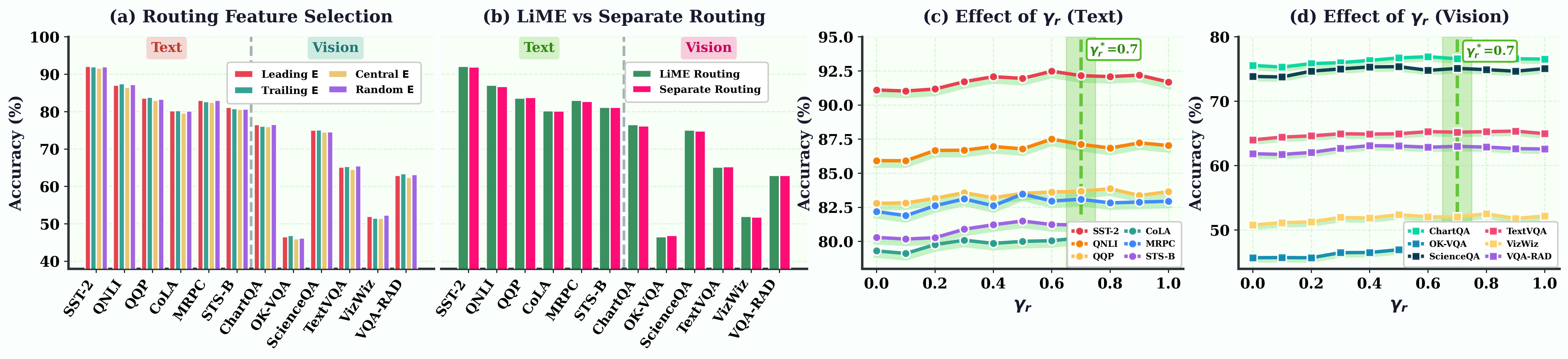}
    \caption{Routing ablations. (a) Feature selection for 
    routing is robust. 
    (b) Zero-parameter routing matches learned routing performance. (c-d) Routing 
    balance $\gamma_r \in [0.6, 0.8]$ yields optimal performance by combining frozen 
    and adapted signals.}
    \label{fig:routing}
\vspace{-0.2cm}
\end{figure*}

\textbf{Routing Feature Selection (Figure~\ref{fig:routing}a).} \label{ab:zero_routing} We examine which 
dimensions to use for routing: leading $E$ (first dimensions), central $E$ (middle), 
trailing $E$ (last), or random $E$ dimensions. Results show consistent performance 
across all strategies, demonstrating that routing features can be drawn from any 
subset of the feature space without significant degradation. This supports our 
low-rank redundancy hypothesis—discriminative routing signal is distributed across 
dimensions rather than concentrated in specific positions.

\textbf{Zero-Parameter vs Learned Routing (Figure~\ref{fig:routing}b).} \label{ab:zerovsLearn} We compare 
our zero-parameter routing with standard separate (learned) routing that introduces 
$d \times E$ parameters per layer. Both approaches achieve comparable performance 
across text and vision tasks, confirming that existing representations provide 
sufficient routing signal without dedicated router parameters.

\textbf{Routing Balance $\gamma_r$ (Figure~\ref{fig:routing}c-d).}\label{ab:routing_balance} We analyze the 
effect of $\gamma_r$, which balances frozen representations ($\gamma_r=0$) and PEFT 
adaptations ($\gamma_r=1$) in routing. On text tasks (c), performance improves from 
$\gamma_r=0$ to $\gamma_r=0.7$, then slightly decreases. Vision tasks (d) show 
similar trends. The optimal range $\gamma_r \in [0.6, 0.8]$ indicates that combining 
both signals outperforms using either alone, validating our dual-signal routing design.

\begin{notebox}
\textit{\textbf{Key Validation:} Zero-parameter routing matches learned routing 
performance, is robust to feature position, and benefits from combining frozen 
and adapted signals ($\gamma_r \approx 0.7$)—confirming that dedicated router 
parameters are unnecessary.}
\end{notebox}

\begin{figure*}[t]
\centering
\includegraphics[width=\textwidth]{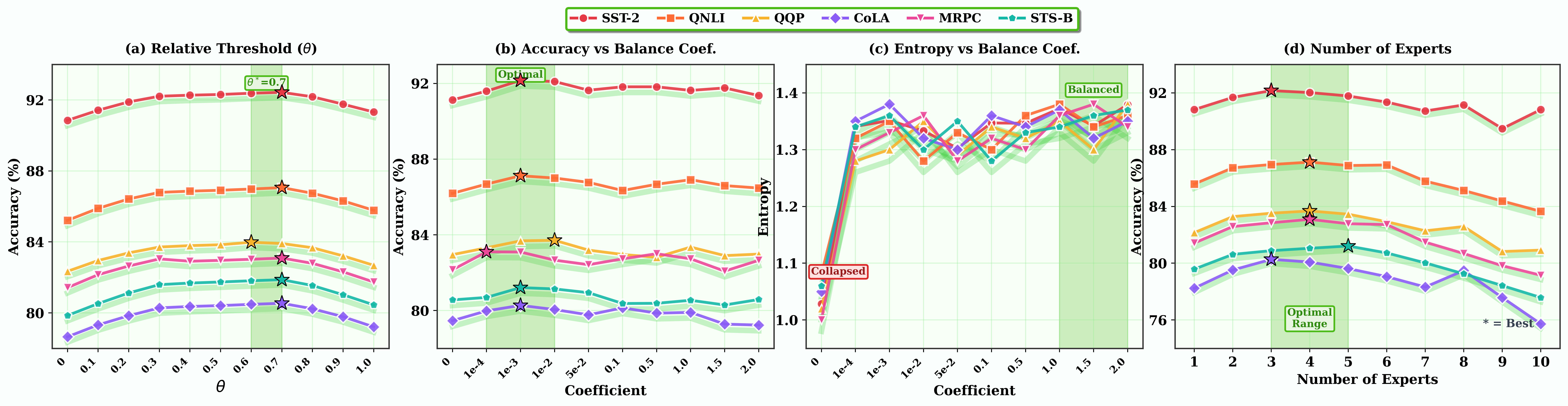}
\caption{(a) Auto Top-K outperforms fixed Top-K. (b-c) Moderate 
load balancing prevents collapse while preserving specialization; over-balancing  hurts accuracy. (d) Optimal expert count is $E \in [4, 6]$; beyond this, insufficient data limits further gains.}
\label{fig:ablation_expert}

\end{figure*}

\textbf{Relative Threshold $\theta$ (Figure~\ref{fig:ablation_expert}a).} \label{ab:realtive_thres} We analyze the 
effect of selection threshold $\theta$ in Auto Top-K. Recall that experts are 
selected if $w_i(x) \geq \theta \cdot \max_j w_j(x)$. At $\theta=0$, all experts 
are activated regardless of routing confidence, diluting specialization with 
irrelevant experts. As $\theta$ increases, selection becomes more stringent, 
improving performance by filtering low-confidence experts. On average, performance 
peaks around $\theta \approx 0.7$, though optimal values vary slightly across tasks. 
Beyond this, overly aggressive selection ($\theta \to 1.0$) approaches fixed Top-1 
behavior, potentially discarding useful secondary experts.

\begin{notebox}
\textit{\textbf{Finding:} Optimal $\theta = 0.7$ balances two failure modes: 
too lenient ($\theta \to 0$) activates noisy experts, while too strict 
($\theta \to 1$) loses beneficial expert combinations.}
\end{notebox}

\paragraph{Load Balancing Coefficient (Figure~\ref{fig:ablation_expert}b-c).} \label{ab:load_balance} We analyze the 
effect of balancing coefficient (applied to both $\mathcal{L}_{\text{imp}}$ and 
$\mathcal{L}_{\text{KL}}$). Figure~\ref{fig:ablation_expert}b shows accuracy versus 
coefficient: without balancing (coefficient$=0$), performance degrades due to 
expert collapse. Moderate coefficients ($\sim$1e-2 to 0.1) achieve optimal accuracy, 
while excessive balancing ($>$1.0) hurts performance. Figure~\ref{fig:ablation_expert}c 
reveals why: at coefficient$=0$, entropy is low ($\sim$1.0), indicating routing 
collapses to few experts. Increasing the coefficient raises entropy toward balanced 
utilization. However, forcing perfect uniformity prevents natural specialization—some 
inputs genuinely benefit from specific experts, and over-regularization suppresses 
this task-appropriate routing (\S\ref{ab:load_balance_expert_uses}, \S\ref{ab:routing_represenation}).

\textbf{Number of Experts (Figure~\ref{fig:ablation_expert}d).} \label{ab:numberofexpert} We vary the number of 
experts $E \in \{1, 2, \ldots, 10\}$. Performance improves from $E=1$ to $E=4$-$5$, 
consistent with Theorem~\ref{thm:mi} that more experts preserve more task-relevant 
information. However, beyond $E=6$, accuracy plateaus or slightly decreases. This 
occurs because each expert receives fewer training samples as $E$ increases, making 
specialization harder without proportionally more diverse training data. The 
modulator vectors also require sufficient gradient signal to differentiate, which 
diminishes with many underutilized experts.

\begin{notebox}
\textit{\textbf{Observation:} $E \in [4, 6]$ provides the best accuracy-efficiency 
trade-off. More experts require more diverse data to achieve meaningful specialization.}
\end{notebox}

% \begin{notebox}
% \textit{\textbf{Additional ablation studies are provided in the Appendix:} CKA (\S\ref{app:cka_analysis}) Analysis Auto Top-K motivation (\S\ref{ab:auto_topk}), n-gram window size (\S\ref{ab:ngram}), fixed Top-K selection (\S\ref{ab:topk}), expert scaling comparison (\S\ref{ab:expert_scaling}), MoE layer count (\S\ref{ab:moe_count}), routing temperature $\tau$ (\S\ref{ab:temparature}), expert utilization analysis (\S\ref{ab:load_balance_expert_uses}), and routing representation visualization (\S\ref{ab:routing_represenation}).}
% \end{notebox}

\begin{notebox}
\textit{\textbf{Additional ablation studies are provided in the Appendix:} CKA (\S\ref{app:cka_analysis}), Auto Top-K motivation (\S\ref{ab:auto_topk}), n-gram window size (\S\ref{ab:ngram}), fixed Top-K selection (\S\ref{ab:topk}), number of experts (\S\ref{ab:expert}), expert scaling comparison (\S\ref{ab:expert_scaling}), MoE layer count (\S\ref{ab:moe_count}), routing temperature (\S\ref{ab:temparature}), expert utilization analysis (\S\ref{ab:load_balance_expert_uses}), expert modulator initialization (\S\ref{ab:initialization}), target module selection (\S\ref{ab:target_module}), shared modulator design (\S\ref{ab:shared_modulator}), and routing representation visualization (\S\ref{ab:routing_represenation}).}
\end{notebox}

% Acknowledgements should only appear in the accepted version.
\section{Conclusion}

We presented LiME, a lightweight approach to combining MoE with PEFT. By replacing separate expert adapters with lightweight modulation vectors and deriving routing from existing frozen and adapted representations, LiME achieves expert specialization with zero routing parameters and minimal overhead. Our theoretical analysis establishes that more experts preserve more task-relevant information, that modulation can approximate full expert-specific PEFT with bounded error, and that last-token routing is optimal for causal n-gram windows. Experiments on MMT-47 demonstrate that LiME achieves competitive performance compared to MoE-PEFT baselines while using up to $4\times$ fewer trainable parameters and 29\% faster training. LiME is compatible with any PEFT method—and efficiency make it a practical choice for multi-task adaptation of large models.

\section*{Impact Statement}

This paper presents LiME, a method for efficient multi-task adaptation of large pre-trained models. Our work aims to advance the field of Machine Learning by reducing the computational and memory costs associated with fine-tuning, making model adaptation more accessible to researchers and practitioners with limited resources. The efficiency gains may also contribute to reduced energy consumption in training workflows.

As with any method that facilitates model adaptation, LiME could potentially be used to fine-tune models for harmful applications. However, this concern is not specific to our approach and applies broadly to the field of parameter-efficient fine-tuning. We do not foresee any unique ethical risks arising from our contributions beyond those inherent to advancing general-purpose machine learning techniques.

% In the unusual situation where you want a paper to appear in the
% references without citing it in the main text, use \nocite

\bibliography{example_paper}
\bibliographystyle{icml2026}

%%%%%%%%%%%%%%%%%%%%%%%%%%%%%%%%%%%%%%%%%%%%%%%%%%%%%%%%%%%%%%%%%%%%%%%%%%%%%%%
%%%%%%%%%%%%%%%%%%%%%%%%%%%%%%%%%%%%%%%%%%%%%%%%%%%%%%%%%%%%%%%%%%%%%%%%%%%%%%%
% APPENDIX
%%%%%%%%%%%%%%%%%%%%%%%%%%%%%%%%%%%%%%%%%%%%%%%%%%%%%%%%%%%%%%%%%%%%%%%%%%%%%%%
%%%%%%%%%%%%%%%%%%%%%%%%%%%%%%%%%%%%%%%%%%%%%%%%%%%%%%%%%%%%%%%%%%%%%%%%%%%%%%%

\newpage

% In your appendix section:
%\appendix
%\onecolumn

%%%%%%%%%%%%%%%%%%%%%%

\clearpage
\appendix
\onecolumn

\section*{\color{appendixblue}Appendix Contents}
\vspace{0.5em}

% Main entry: \appentry{Letter}{Title}{Page}
\newcommand{\appentry}[3]{%
  \noindent
  \textbf{\color{appendixblue}\makebox[1.2em][l]{#1}#2}%
  {\color{appendixblue}\titlerule*[0.7pc]{.}\textbf{#3}}%
  \par\vspace{0.5em}
}

% Subsection entry: \appsubentry{X.X}{Title}{Page}
\newcommand{\appsubentry}[3]{%
  \noindent\hspace{1.5em}%
  {\color{appendixblue}\makebox[2.8em][l]{#1}#2}%
  {\color{appendixblue}\titlerule*[0.7pc]{.}#3}%
  \par\vspace{0.3em}
}

% Requires tocloft or titletoc package

% Use \pageref for automatic page numbers (requires \label after each section)
% Or hardcode the numbers for simplicity

% Main Theorem Proofs
\appentry{A}{Proof of Theorem 1}{\pageref{app:theorem_mi}}
\appentry{B}{Proof of Theorem 2}{\pageref{app:theorem_approx}}
\appsubentry{B.1}{CKA Analysis}{\pageref{app:cka_analysis}}
\appentry{C}{Proof of Theorem 3}{\pageref{app:theorem_ngram}}

% Expert Selection Strategies
\appentry{D}{Expert Selection Strategies}{\pageref{app:topk_strategies}}
\appsubentry{D.1}{Fixed Top-K}{\pageref{app:d_1}}
\appsubentry{D.2}{Absolute Threshold}{\pageref{app:d_2}}
\appsubentry{D.3}{Entropy-Based Selection}{\pageref{app:d_3}}
\appsubentry{D.4}{Gini-Based Selection}{\pageref{app:d_4}}
\appsubentry{D.5}{Cumulative Probability Selection}{\pageref{app:d_5}}
\appsubentry{D.6}{Top-K with Minimum Gap}{\pageref{app:d_6}}
\appsubentry{D.7}{Relative Threshold (Ours)}{\pageref{app:d_7}}
\appsubentry{D.8}{Empirical Comparison}{\pageref{app:d_8}}

% Routing & Ablation Studies
\appentry{E}{Motivation for Dual-Use Routing Features}{\pageref{app:zero_routing_motivation}}

\appentry{F}{More Ablation Studies}{\pageref{app:More Ablation Study}}
\appsubentry{F.1}{Auto Top-K Motivation and Analysis}{\pageref{ab:auto_topk}}
\appsubentry{F.2}{N-gram Window Size}{\pageref{ab:ngram}}
\appsubentry{F.3}{Fixed Top-K Selection}{\pageref{ab:topk}}
\appsubentry{F.4}{Number of Experts}{\pageref{ab:expert}}
\appsubentry{F.5}{Expert Scaling: LiME vs MoELoRA}{\pageref{ab:expert_scaling}}
\appsubentry{F.6}{MoE Layer Count}{\pageref{ab:moe_count}}
\appsubentry{F.7}{Routing Temperature}{\pageref{ab:temparature}}
\appsubentry{F.8}{Expert Utilization vs Balance Coefficient}{\pageref{ab:load_balance_expert_uses}}
\appsubentry{F.9}{Expert Modulator Initialization}{\pageref{ab:initialization}}
\appsubentry{F.10}{Target Module Selection}{\pageref{ab:target_module}}
\appsubentry{F.11}{Shared Modulator}{\pageref{ab:shared_modulator}}
\appsubentry{F.12}{Routing Representation Analysis}{\pageref{ab:routing_represenation}}

% Results & Implementation
% G - Detailed Results Analysis (with subsections)
\appentry{G}{Detailed Results Analysis}{\pageref{app:detailed_results}}
\appsubentry{G.1}{Text Understanding (GLUE)}{\pageref{Text Understanding (GLUE)}}
\appsubentry{G.2}{Commonsense Reasoning}{\pageref{Commonsense Reasoning}}
\appsubentry{G.3}{Image Classification (VTAB)}{\pageref{image classification (VTAB)}}
\appsubentry{G.4}{Visual Question Answering}{\pageref{visual question answering}}
\appsubentry{G.5}{Video Understanding: Action Recognition}{\pageref{video understanding:action}}
\appsubentry{G.6}{Video Understanding: Object and Motion Reasoning}{\pageref{video understanding: object and motion rreasoning}}
\appsubentry{G.7}{Video Understanding: High-Level Reasoning}{\pageref{video understandin: high-level reasoning}}
\appsubentry{G.8}{Results on Molmo2-8B}{\pageref{app:molmo}}
\appsubentry{G.9}{Cross-Task Analysis}{\pageref{cross talk analysis}}

\appentry{H}{Algorithm Details}{\pageref{app:algorithm}}
\appentry{I}{Extended Related Work}{\pageref{app:related_works}}
\appentry{J}{Implementation and Hyperparameters}{\pageref{app:implementation}}
\appentry{K}{Evaluation Details}{\pageref{app:evaluation}}

% L - Dataset Details (with subsections)
\appentry{L}{Dataset Details}{\pageref{app:datasets}}
\appsubentry{L.1}{Text (NLU) — GLUE}{\pageref{text (nlu)- GLUE}}
\appsubentry{L.2}{Text (Reasoning) — Commonsense}{\pageref{text (reasoning) - commonsense}}
\appsubentry{L.3}{Video — MVBench}{\pageref{video-MVbenchmark}}
\appsubentry{L.4}{Image (VQA) — Visual Question Answering}{\pageref{image (VQA)- visual question}}
\appsubentry{L.5}{Image (VTAB) — Visual Task Adaptation}{\pageref{image (vtab) - visula task adaptation}}
\appsubentry{L.6}{Benchmark Design Rationale}{\pageref{benchmark design rationale}}

\vspace{1em}

%%%%%%%%%%%%%%%%%%%%%%

\newpage

\section{Proof of Theorem~\ref{thm:mi}}\label{app:theorem_mi}

\begin{notebox}
\textit{\textbf{Intuition:} Adding more experts refines the input space partitioning. Since the coarser partition's output can be recovered from the finer partition (via a deterministic function), the finer partition cannot lose information about the label $Y$.}
\end{notebox}

\begin{theorem}[More experts cannot reduce label information under refinement]
Let $X:\Omega\to\mathbb{R}^d$ and $Y:\Omega\to\mathcal{Y}$ with $Y$ discrete and $H(Y)<\infty$.  
Fix $n \geq 2$. Let $r_{n-1}:\mathbb{R}^d\to\{1,\dots,n-1\}$ and $r_n:\mathbb{R}^d\to\{1,\dots,n\}$ be measurable routers, and let
\[
I_{n-1}:=r_{n-1}(X),\qquad Z_{n-1}:=A^{(n-1)}_{I_{n-1}}X,
\]
\[
I_n:=r_n(X),\qquad Z_n:=A^{(n)}_{I_n}X,
\]
where $A^{(n-1)}_j, A^{(n)}_e \in\mathbb{R}^{d\times d}$ are fixed linear maps.
Assume:
\begin{enumerate}
\item (Factorization-on-support) For each $e\in\{1,\dots,n\}$ there exists $R_e\in\mathbb{R}^{d\times d}$ such that for all $x\in \mathsf{Supp}(X\mid I_n=e)$,
\[
A^{(n-1)}_{r_{n-1}(x)}x=R_e\,A^{(n)}_{e}x.
\]
\item (Identifiability) There exists measurable $\hat{e}:\mathbb{R}^d\to\{1,\dots,n\}$ such that
\[
I_n=\hat{e}(Z_n)\quad \text{a.s.}
\]
\end{enumerate}
Then
\[
I(Y;Z_n)\;\ge\; I(Y;Z_{n-1}).
\]
\end{theorem}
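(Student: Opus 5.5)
The plan is to show that $Z_{n-1}$ is almost surely a measurable function of $Z_n$, and then apply the data processing inequality. Once we have $Z_{n-1}=g(Z_n)$ a.s. for some measurable $g$, the chain $Y \to Z_n \to Z_{n-1}$ is Markov. Hence $I(Y;Z_{n-1}) \le I(Y;Z_n)$.

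\textbf{Step 1: construct the recovery map.} Using the identifiability assumption, define $g:\mathbb{R}^d\to\mathbb{R}^d$ by
\[
g(z) := R_{\hat e(z)}\, z ,
\]
that is, $g(z)=\sum_{e=1}^n \mathbbm{1}\{\hat e(z)=e\}\,R_e z$. This $g$ is measurable, since $\hat e$ is measurable, each $z\mapsto R_e z$ is continuous, and the sum is finite. Fix $\omega$ in the full-measure event where $I_n=\hat e(Z_n)$, and set $e=I_n(\omega)$ and $x=X(\omega)$. Then $g(Z_n)=R_e A^{(n)}_e x$. By factorization-on-support this equals $A^{(n-1)}_{r_{n-1}(x)}x=Z_{n-1}$, provided $x\in\mathsf{Supp}(X\mid I_n=e)$.

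\textbf{Step 2: handle the support technicality.} This is the step I expect to be the main obstacle, though it is mild. We need $X\in\mathsf{Supp}(X\mid I_n=I_n)$ almost surely. For each $e$ with $P(I_n=e)>0$, the conditional law of $X$ given $I_n=e$ assigns full mass to its support, because $\mathbb{R}^d$ is second countable and so the complement of the support is a countable union of null open sets. Summing over the finitely many $e$ shows that $P(X\notin \mathsf{Supp}(X\mid I_n=I_n))=0$. Intersecting this with the identifiability event gives $Z_{n-1}=g(Z_n)$ a.s.

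\textbf{Step 3: conclude.} Since $Z_{n-1}=g(Z_n)$ a.s., we have $I(Y;Z_{n-1}) = I(Y;g(Z_n))$. The variables are general real-valued, so we use the measure-theoretic definition of mutual information, as a supremum over finite partitions or equivalently as $D_{\mathrm{KL}}$ of the joint law from the product law. The data processing inequality for deterministic maps then gives $I(Y;g(Z_n))\le I(Y;Z_n)$. Concretely, any finite partition of the range of $g(Z_n)$ pulls back to a finite partition of the range of $Z_n$, so the supremum for $Z_n$ is over a larger family. Since $Y$ is discrete with $H(Y)<\infty$, both quantities are finite and bounded by $H(Y)$, so no $\infty-\infty$ issues arise. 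This yields $I(Y;Z_n)\ge I(Y;Z_{n-1})$.
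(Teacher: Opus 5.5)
Your proposal is correct and follows the paper's proof essentially step for step. It uses the same recovery map $h(z)=R_{\hat e(z)}z$, the same two null sets (where identifiability fails, and where $X\notin\mathsf{Supp}(X\mid I_n=e)$ with $e=I_n$), and reaches the same conclusion $Z_{n-1}=h(Z_n)$ a.s. The differences are cosmetic: the paper finishes via $\sigma(Z_{n-1})\subseteq\sigma(Z_n)$ (modulo null sets), hence $H(Y\mid Z_n)\le H(Y\mid Z_{n-1})$, whereas you use the partition-based data processing inequality, and you also justify the full-mass support claim through second countability, which the paper simply asserts.
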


\begin{proof}
Define $h:\mathbb{R}^d\to\mathbb{R}^d$ by
\[
h(z) := R_{\hat{e}(z)}z.
\]
Since $\hat{e}$ is measurable and each $z\mapsto R_e z$ is continuous, $h$ is measurable.

Let
\[
N:=\{\omega\in\Omega:\ I_n(\omega)\neq \hat{e}(Z_n(\omega))\}.
\]
By identifiability, $\mathbb{P}(N)=0$.

For each $e\in\{1,\dots,n\}$ let $S_e:=\mathsf{Supp}(X\mid I_n=e)$. Then
\[
\mathbb{P}(X\in S_e\mid I_n=e)=1 \] 
\[ \mathbb{P}(X\notin S_e\mid I_n=e)=0
\]

Define
\[
M:=\{\omega\in\Omega:\ X(\omega)\notin S_{I_n(\omega)}\}
\]
Then
\begin{align*}
\mathbb{P}(M)
&=\sum_{e=1}^n \mathbb{P}\big(M\cap\{I_n=e\}\big) \\
&=\sum_{e=1}^n \mathbb{P}\big(\{X\notin S_e\}\cap\{I_n=e\}\big) \\
&=\sum_{e=1}^n \mathbb{P}(I_n=e)\,\mathbb{P}(X\notin S_e\mid I_n=e) \\
&=0
\end{align*}

Fix $\omega\in\Omega\setminus(N\cup M)$ and write
\[
x:=X(\omega),\quad e:=I_n(\omega),\quad z:=Z_n(\omega).
\]
Then
\[
z=A^{(n)}_e x,
\qquad
e=\hat{e}(z),
\qquad
x\in S_e.
\]

By the factorization-on-support assumption (applied to this $e$ and $x\in S_e$),
\[
A^{(n-1)}_{r_{n-1}(x)}x=R_e\,A^{(n)}_e x.
\]

Using $r_{n-1}(x)=r_{n-1}(X(\omega))=I_{n-1}(\omega)$ and $A^{(n)}_e x=z$,
\[
Z_{n-1}(\omega)=A^{(n-1)}_{I_{n-1}(\omega)}X(\omega)=R_e\,z.
\]

Using $e=\hat{e}(z)$,
\[
Z_{n-1}(\omega)=R_{\hat{e}(z)}z=h(z)=h(Z_n(\omega)).
\]

Thus,
\[
Z_{n-1}=h(Z_n)\quad \text{a.s.}
\]

Let $U:=Z_n$ and $V:=Z_{n-1}$. Then $V=h(U)$ almost surely. Hence
\[
\sigma(V) \subseteq \sigma(U).
\]

Therefore,
\[
H(Y \mid U) \le H(Y \mid V).
\]

It follows that
% \[
% I(Y;U)
% = H(Y) - H(Y \mid U)
% \ge H(Y) - H(Y \mid V)
% = I(Y;V).
% \]

\begin{align}
I(Y; U) 
&= H(Y) - H(Y \mid U) \\
&\ge H(Y) - H(Y \mid V) \\
&= I(Y; V).
\end{align}

Substituting $U=Z_n$ and $V=Z_{n-1}$ yields
\[
I(Y;Z_n) \ge I(Y;Z_{n-1}).
\]
\end{proof}

\textbf{Corollary 1}
By induction, for any $n \geq 1$:
\[
I(Y; Z_n) \geq I(Y; Z_{n-1}) \geq \cdots \geq I(Y; Z_1)
\]
Thus, MoE with $n$ experts preserves at least as much label information as a single-expert model.

\begin{notebox}
\textit{\textbf{Practical Implication:} While more experts \textit{can} preserve more information, realizing this benefit requires sufficient training data per expert. Traditional MoE-PEFT suffers from data fragmentation as $E$ increases. LiME mitigates this via lightweight expert vectors and a shared PEFT backbone.}
\end{notebox}

\clearpage

\section{Proof of Theorem~\ref{thm:peft-moe3}}
\label{app:theorem_approx}

\begin{notebox}
\textit{\textbf{Overview:} This theorem shows that LiME (shared PEFT with expert modulation) can approximate traditional MoE-PEFT (separate adapters per expert) with bounded performance gap. The key insight is that if the approximation error $\bar{\varepsilon}$ is small, the optimal risk differs by at most $\mathcal{O}(\bar{\varepsilon})$.}
\end{notebox}

\begin{theorem}[Smoothed quantitative equivalence: expert-specific PEFT vs.\ LiME]
\label{thm:peft-moe3-smoothed}
Let $(\Omega,\mathcal{F},\mathbb{P})$ be a probability space.
Let $d\in\mathbb{N}$, $n \geq 2$, and $\sigma>0$.
Let $X:\Omega\to\mathbb{R}^{d}$ and $Y:\Omega\to\mathcal{Y}$ be random variables, where $\mathcal{Y}$ is a standard Borel space.
Let $\frozen{W}_0\in\mathbb{R}^{d\times d}$ be fixed.
Let $r:\mathbb{R}^{d}\to\{1,\dots,n\}$ be measurable and define $E:=r(X)$.

For each $e\in\{1,\dots,n\}$ let $g_{\trainable{\phi_e}}:\mathbb{R}^{d}\to\mathbb{R}^{d}$ be measurable and define the \textbf{traditional MoE-PEFT} output (separate adapters per expert):
\[
Z_{\text{MoE}}
:=
\frozen{W}_0 X
+
g_{\trainable{\phi_E}}(X)
\in\mathbb{R}^{d}.
\]
Let $g_{\trainable{\phi}}:\mathbb{R}^{d}\to\mathbb{R}^{d}$ be measurable and for each $e$ let $\trainable{p}_e\in\mathbb{R}^{d}$.
Define the \textbf{LiME} output (shared PEFT with expert modulation):
\[
Z_{LiME}
:=
\frozen{W}_0 X
+
\big(g_{\trainable{\phi}}(X)\odot \trainable{p}_E\big)
\in\mathbb{R}^{d}.
\]
Assume there exists $\overline{\varepsilon}<\infty$ such that
\[
\big\|
g_{\trainable{\phi_E}}(X)
-
\big(g_{\trainable{\phi}}(X)\odot \trainable{p}_E\big)
\big\|_2
\le
\overline{\varepsilon}
\quad\text{a.s.}
\]

Let $N\sim\mathcal{N}(0,\sigma^2 I_d)$ be independent of $(X,E,Y)$ and define
\[
U^{(\sigma)}_{\text{MoE}}:=Z_{\text{MoE}}+N,
\qquad
U^{(\sigma)}_{LiME}:=Z_{LiME}+N.
\]
Let $\mathcal{A}$ be a standard Borel space and let $\ell:\mathcal{Y}\times\mathcal{A}\to[0,L_{\max}]$ be measurable.
For a random element $U$ taking values in $\mathbb{R}^{d}$ define
\[
\mathcal{R}^*(U)
:=
\inf_{\delta}
\mathbb{E}\big[\ell\big(Y,\delta(U)\big)\big],
\]
where the infimum is over all measurable $\delta:\mathbb{R}^{d}\to\mathcal{A}$.
Then
\[
\big|
\mathcal{R}^*(U^{(\sigma)}_{\text{MoE}})
-
\mathcal{R}^*(U^{(\sigma)}_{LiME})
\big|
\le
L_{\max}\cdot\frac{\overline{\varepsilon}}{2\sigma}.
\]
\end{theorem}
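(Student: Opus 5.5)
The plan is to reduce the risk gap to a total-variation bound between the joint laws of $(Y,U^{(\sigma)}_{\text{MoE}})$ and $(Y,U^{(\sigma)}_{LiME})$, and then control that total variation through the Gaussian smoothing. Both smoothed outputs share the same noise $N$, so I construct them on a common probability space. Conditioning on $(X,Y)$ gives
\[
U^{(\sigma)}_{\text{MoE}}\mid(X,Y)\sim\mathcal{N}(Z_{\text{MoE}},\sigma^2 I_d),\qquad U^{(\sigma)}_{LiME}\mid(X,Y)\sim\mathcal{N}(Z_{LiME},\sigma^2 I_d).
\]
Here $E=r(X)$ is a function of $X$, and the assumption gives $\|Z_{\text{MoE}}-Z_{LiME}\|_2\le\overline{\varepsilon}$ a.s., since the frozen term $\frozen{W}_0X$ cancels.

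\textbf{Step 1 (risk is Lipschitz in TV).} For any fixed measurable decision rule $\delta$, the map $(y,u)\mapsto\ell(y,\delta(u))$ takes values in $[0,L_{\max}]$. Hence
\[
\bigl|\mathbb{E}\ell(Y,\delta(U_1))-\mathbb{E}\ell(Y,\delta(U_2))\bigr|\le L_{\max}\,\mathrm{TV}\bigl(P_{Y,U_1},P_{Y,U_2}\bigr).
\]
This holds because for $f\in[0,L_{\max}]$ we have $|\int f\,d(P-Q)|\le L_{\max}\sup_A|P(A)-Q(A)|$. I then take the infimum over $\delta$ on each side: given $\eta>0$, choose $\delta$ $\eta$-optimal for one risk and use it in the other. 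Exchanging roles gives $|\mathcal{R}^*(U_1)-\mathcal{R}^*(U_2)|\le L_{\max}\mathrm{TV}$. Standard Borel assumptions guarantee the joint laws and disintegrations are well defined.

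\textbf{Step 2 (TV of joint laws via conditioning).} I write each joint law as a mixture over the common law of $(X,Y)$, with conditional kernel $\mathcal{N}(m(X),\sigma^2I)$. Joint convexity of TV, or directly bounding $|P(A)-Q(A)|\le\mathbb{E}[|K_1(X,A_Y)-K_2(X,A_Y)|]$ where $A_Y$ is the $Y$-section of $A$, gives
\[
\mathrm{TV}(P_{Y,U_1},P_{Y,U_2})\le\mathbb{E}\,\mathrm{TV}\bigl(\mathcal{N}(Z_{\text{MoE}},\sigma^2I),\mathcal{N}(Z_{LiME},\sigma^2I)\bigr).
\]

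\textbf{Step 3 (Gaussian TV bound).} For equal covariances, the total variation between the two Gaussians equals $2\Phi(\Delta/2\sigma)-1$ with $\Delta=\|\mu_1-\mu_2\|_2$. This follows by rotating so the mean difference lies along one axis, which reduces the problem to one dimension. Since $2\Phi(t)-1\le 2t\,\varphi(0)=t\sqrt{2/\pi}\le t$, the total variation is at most $\Delta/(2\sigma)$. Alternatively, Pinsker's inequality gives $\sqrt{\mathrm{KL}/2}=\Delta/(2\sigma)$ directly, since $\mathrm{KL}=\Delta^2/(2\sigma^2)$. Using $\Delta\le\overline{\varepsilon}$ a.s. and combining with Steps 1--2 yields the claimed $L_{\max}\overline{\varepsilon}/(2\sigma)$.

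The main obstacle is Step 1's measure-theoretic care. Two points need attention: justifying that the infimum over decision rules commutes with the TV bound, and making sure the bound uses the joint law with $Y$ rather than only the marginals of $U$, since $Y$ and $X$ are dependent. The Gaussian computation itself is routine.
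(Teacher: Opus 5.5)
Your proposal is correct and follows essentially the same route as the paper: Gaussian conditional laws given $X$, which is equivalent to the paper's conditioning on $(X,E,Y)$ since $E=r(X)$. From there both arguments use Pinsker to get a conditional TV bound of $\overline{\varepsilon}/(2\sigma)$, average it to bound the TV between the joint laws of $(Y,U)$, and finish with the bounded-loss estimate and an infimum over $\delta$. Your exact-formula alternative $2\Phi(\Delta/2\sigma)-1\le\sqrt{2/\pi}\,\Delta/(2\sigma)$ is valid and would even sharpen the constant by a factor $\sqrt{2/\pi}$, but it is not needed for the stated bound.
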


\begin{proof}
Define measurable maps $m_{\text{MoE}},m_{LiME}:\mathbb{R}^{d}\times\{1,\dots,n\}\to\mathbb{R}^{d}$ by
\[
m_{\text{MoE}}(x,e)
:=
\frozen{W}_0 x
+
g_{\trainable{\phi_e}}(x)\]
\[m_{LiME}(x,e)
:=
\frozen{W}_0 x
+
\big(g_{\trainable{\phi}}(x)\odot \trainable{p}_e\big)
\]
Then
\[
Z_{\text{MoE}}=m_{\text{MoE}}(X,E),
\qquad
Z_{LiME}=m_{LiME}(X,E),
\]
and
\[
U^{(\sigma)}_{\text{MoE}}
=
m_{\text{MoE}}(X,E)+N\]
\[
U^{(\sigma)}_{LiME}
=
m_{LiME}(X,E)+N
\]
Let $\Sigma:=\sigma^2 I_d$.
Since $N$ is independent of $(X,E,Y)$,
\[
\mathcal{L}\big(U^{(\sigma)}_{\text{MoE}}\mid X,E\big)
=
\mathcal{N}\big(m_{\text{MoE}}(X,E),\Sigma\big)\]
\[\mathcal{L}\big(U^{(\sigma)}_{LiME}\mid X,E\big)
=
\mathcal{N}\big(m_{LiME}(X,E),\Sigma\big).
\]
For any $\mu_1,\mu_2\in\mathbb{R}^d$,
\[
\mathrm{KL}\!\left(
\mathcal{N}(\mu_1,\Sigma)
\,\middle\|\,
\mathcal{N}(\mu_2,\Sigma)
\right)
=
\frac{\|\mu_1-\mu_2\|_2^2}{2\sigma^2}.
\]
Therefore,
\[
\mathrm{KL}\!\left(
\mathcal{L}\big(U^{(\sigma)}_{\text{MoE}}\mid X,E\big)
\,\middle\|\,
\mathcal{L}\big(U^{(\sigma)}_{LiME}\mid X,E\big)
\right)
=\]
\[ \frac{\|m_{\text{MoE}}(X,E)-m_{LiME}(X,E)\|_2^2}{2\sigma^2}.
\]
By Pinsker's inequality,
\[
\mathrm{TV}(P,Q)
\le
\sqrt{\tfrac12\,\mathrm{KL}(P\|Q)}.
\]
Hence,
\[
\mathrm{TV}\!\left(
\mathcal{L}\big(U^{(\sigma)}_{\text{MoE}}\mid X,E\big),
\mathcal{L}\big(U^{(\sigma)}_{LiME}\mid X,E\big)
\right)
\le \]
\[ \hfill \frac{\|m_{\text{MoE}}(X,E)-m_{LiME}(X,E)\|_2}{2\sigma}.
\]
Since
\[
m_{\text{MoE}}(X,E)-m_{LiME}(X,E)
= \]
\[  g_{\trainable{\phi_E}}(X)
-
\big(g_{\trainable{\phi}}(X)\odot \trainable{p}_E\big)
\]
the assumption implies
\small
\[
\mathrm{TV}\!\left(
\mathcal{L}\big(U^{(\sigma)}_{\text{MoE}}\mid X,E\big),
\mathcal{L}\big(U^{(\sigma)}_{LiME}\mid X,E\big)
\right)
\le
\frac{\overline{\varepsilon}}{2\sigma}
\quad\text{a.s.}
\]
\normalsize
Let $P^{\text{MoE}}$ and $P^{LiME}$ denote the joint laws of
$(Y,U^{(\sigma)}_{\text{MoE}})$ and $(Y,U^{(\sigma)}_{LiME})$.
For any measurable $B\subseteq \mathcal{Y}\times\mathbb{R}^d$,
\begin{align*}
&\big|P^{\text{MoE}}(B)-P^{LiME}(B)\big| \\
&= \left|\mathbb{E}\!\left[
\mathbb{P}\big((Y,U^{(\sigma)}_{\text{MoE}})\in B\mid X,E,Y\big)
\right.\right. \\
&\quad \left.\left.
-\mathbb{P}\big((Y,U^{(\sigma)}_{LiME})\in B\mid X,E,Y\big)
\right]\right| \\
&\le \mathbb{E}\!\left[
\mathrm{TV}\!\left(
\mathcal{L}\big(U^{(\sigma)}_{\text{MoE}}\mid X,E\big),
\right.\right. \\
&\quad \left.\left.
\mathcal{L}\big(U^{(\sigma)}_{LiME}\mid X,E\big)
\right)\right].
\end{align*}
Taking the supremum over $B$ gives
\small
\[
\mathrm{TV}\big(P^{\text{MoE}},P^{LiME}\big)
\le \]

\[ \mathbb{E}\!\left[
\mathrm{TV}\!\left(
\mathcal{L}\big(U^{(\sigma)}_{\text{MoE}}\mid X,E\big),
\mathcal{L}\big(U^{(\sigma)}_{LiME}\mid X,E\big)
\right)\right]
\le
\frac{\overline{\varepsilon}}{2\sigma}.
\]
\normalsize
Fix any measurable $\delta:\mathbb{R}^{d}\to\mathcal{A}$ and define
\[
\mathcal{R}_{\text{MoE}}(\delta)
:=
\mathbb{E}\big[\ell\big(Y,\delta(U^{(\sigma)}_{\text{MoE}})\big)\big] \]
\[ \mathcal{R}_{LiME}(\delta)
:=
\mathbb{E}\big[\ell\big(Y,\delta(U^{(\sigma)}_{LiME})\big)\big].
\]
Define
\[
f(y,u)
:=
\frac{\ell\big(y,\delta(u)\big)}{L_{\max}},
\]
so that $0\le f\le 1$. Then
\begin{align}
&\big|
\mathcal{R}_{\text{MoE}}(\delta)
-
\mathcal{R}_{LiME}(\delta)
\big| \nonumber \\
&\quad =
L_{\max}\cdot \big|\mathbb{E}_{P^{\text{MoE}}}[f]-\mathbb{E}_{P^{LiME}}[f]\big| \nonumber \\
&\quad \le
L_{\max}\cdot \mathrm{TV}\big(P^{\text{MoE}},P^{LiME}\big) \nonumber \\
&\quad \le
L_{\max}\cdot\frac{\overline{\varepsilon}}{2\sigma}.
\end{align}
Taking the infimum over $\delta$ yields
\[
\big|
\mathcal{R}^*(U^{(\sigma)}_{\text{MoE}})
-
\mathcal{R}^*(U^{(\sigma)}_{LiME})
\big|
\le
L_{\max}\cdot\frac{\overline{\varepsilon}}{2\sigma}.
\]
\end{proof}

\begin{notebox}
\textit{\textbf{Interpretation:} The bound $L_{\max}\cdot\frac{\bar{\varepsilon}}{2\sigma}$ shows that if the modulation $g_{\trainable{\phi}}(x)\odot \trainable{p}_e$ closely approximates the expert-specific adapter $g_{\trainable{\phi_e}}(x)$, then LiME achieves nearly the same optimal risk as traditional MoE-PEFT. Our empirical results (CKA $\approx$ 0.935) confirm this approximation is tight in practice.}
\end{notebox}

\begin{table}[t]
\centering
\caption{CKA (Centered Kernel Alignment) similarity scores between LiMELoRA and MoELoRA across layers and GLUE datasets. Higher values indicate stronger representational similarity.}
\fontsize{7.5}{9}\selectfont
\setlength{\tabcolsep}{4pt}
\renewcommand{\arraystretch}{1.05}
\small
\resizebox{0.4\textwidth}{!}{
\begin{tabular}{@{}l cccccc c@{}}
\toprule \toprule
\textbf{Layer} & \textbf{SST-2} & \textbf{QNLI} & \textbf{QQP} & \textbf{CoLA} & \textbf{MRPC} & \textbf{STS-B} & \textbf{Mean} \\
\midrule
Layer 0  & 0.994 & 0.995 & 0.995 & 0.993 & 0.995 & 0.994 & \textbf{0.994} \\
Layer 1  & 0.981 & 0.986 & 0.986 & 0.984 & 0.985 & 0.984 & \textbf{0.984} \\
Layer 2  & 0.965 & 0.968 & 0.956 & 0.977 & 0.977 & 0.977 & \textbf{0.970} \\
Layer 3  & 0.942 & 0.947 & 0.945 & 0.950 & 0.956 & 0.954 & \textbf{0.949} \\
Layer 4  & 0.955 & 0.944 & 0.954 & 0.960 & 0.949 & 0.949 & \textbf{0.952} \\
Layer 5  & 0.954 & 0.939 & 0.938 & 0.959 & 0.940 & 0.937 & \textbf{0.945} \\
Layer 6  & 0.947 & 0.940 & 0.942 & 0.945 & 0.940 & 0.940 & \textbf{0.942} \\
Layer 7  & 0.926 & 0.944 & 0.933 & 0.926 & 0.932 & 0.939 & \textbf{0.933} \\
Layer 8  & 0.929 & 0.935 & 0.950 & 0.899 & 0.953 & 0.946 & \textbf{0.935} \\
Layer 9  & 0.866 & 0.886 & 0.892 & 0.860 & 0.899 & 0.876 & \textbf{0.880} \\
Layer 10 & 0.951 & 0.954 & 0.947 & 0.951 & 0.931 & 0.930 & \textbf{0.941} \\
Layer 11 & 0.920 & 0.925 & 0.927 & 0.932 & 0.923 & 0.933 & \textbf{0.927} \\
Layer 12 & 0.945 & 0.932 & 0.922 & 0.943 & 0.920 & 0.920 & \textbf{0.930} \\
Layer 13 & 0.950 & 0.946 & 0.938 & 0.940 & 0.904 & 0.940 & \textbf{0.936} \\
Layer 14 & 0.955 & 0.932 & 0.927 & 0.950 & 0.922 & 0.941 & \textbf{0.938} \\
Layer 15 & 0.951 & 0.934 & 0.928 & 0.941 & 0.926 & 0.941 & \textbf{0.937} \\
Layer 16 & 0.942 & 0.954 & 0.913 & 0.924 & 0.930 & 0.928 & \textbf{0.932} \\
Layer 17 & 0.915 & 0.908 & 0.867 & 0.899 & 0.872 & 0.872 & \textbf{0.889} \\
Layer 18 & 0.925 & 0.916 & 0.903 & 0.920 & 0.912 & 0.896 & \textbf{0.912} \\
Layer 19 & 0.900 & 0.918 & 0.890 & 0.919 & 0.887 & 0.867 & \textbf{0.897} \\
Layer 20 & 0.949 & 0.947 & 0.919 & 0.936 & 0.922 & 0.914 & \textbf{0.931} \\
Layer 21 & 0.931 & 0.896 & 0.903 & 0.929 & 0.903 & 0.878 & \textbf{0.907} \\
Layer 22 & 0.930 & 0.923 & 0.907 & 0.939 & 0.909 & 0.901 & \textbf{0.918} \\
Layer 23 & 0.970 & 0.971 & 0.964 & 0.954 & 0.954 & 0.945 & \textbf{0.960} \\
\midrule
\rowcolor{HighlightPink} 
\textbf{Mean} & \textbf{0.942} & \textbf{0.939} & \textbf{0.931} & \textbf{0.939} & \textbf{0.931} & \textbf{0.929} & \textbf{0.935} \\
\bottomrule \bottomrule
\end{tabular}
}
\label{tab:cka_layers}
\end{table}

\subsection{CKA Analysis} \label{app:cka_analysis}
To provide empirical validation of Theorem~\ref{thm:peft-moe3}, we analyze CKA (Centered Kernel Alignment) similarity between LiMELoRA and MoELoRA representations at each layer. CKA measures representational similarity independent of orthogonal transformations and isotropic scaling, making it suitable for comparing learned representations across architectures. Table~\ref{tab:cka_layers} shows CKA scores for layers 0--23 across all GLUE tasks.

Several observations emerge. First, CKA remains consistently high ($>0.88$) across all layers, indicating that LiME approximates full MoE-PEFT throughout the network, not just at specific layers. Second, early layers (0--3) show the highest CKA (0.949--0.994), suggesting that expert modulation most closely matches separate adapters when processing lower-level features. Third, certain middle layers (9, 17, 19) show relatively lower CKA (0.880--0.897), which may reflect increased task-specific specialization where the modulation approximation is less tight. Fourth, the final layer (23) recovers high CKA (0.960), indicating that output representations converge despite intermediate differences—consistent with our theoretical bound that approximation error remains controlled. The overall mean CKA of 0.935 across all layers and tasks confirms that LiME effectively approximates expert-specific PEFT, empirically supporting Theorem~\ref{thm:peft-moe3}.

\clearpage

\section{Proof of Theorem \ref{thm:ngram_routing}}
\label{app:theorem_ngram}

\begin{notebox}
\textit{\textbf{Intuition:} In causal (autoregressive) models, each position can only attend to previous positions. Therefore, later hidden states within an n-gram window accumulate more context, and adding more context cannot reduce information about the target $Y$.}
\end{notebox}

We first establish the necessary definitions, then prove the main result.

\begin{definition}[Mutual Information]
For random variables $A$ and $B$:
\begin{equation}
I(A; B) = H(B) - H(B \mid A)
\end{equation}
\end{definition}

\begin{definition}[Conditional Mutual Information]
For random variables $A$, $B$, and $C$:
\begin{equation}
I(A; B \mid C) = H(B \mid C) - H(B \mid A, C)
\end{equation}
\end{definition}

\begin{theorem}[Routing Informativeness in Causal N-gram Windows]
Let $h_1, h_2, \ldots, h_n \in \mathbb{R}^d$ be hidden states within an n-gram window under causal attention, and let $Y$ denote the task objective. Then the routing informativeness is maximized at the window's final position:
\begin{equation}
I(Y; h_n) \geq I(Y; h_{n-1}) \geq \cdots \geq I(Y; h_1)
\end{equation}
\end{theorem}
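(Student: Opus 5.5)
The plan is to reduce the statement to the same $\sigma$-algebra/data-processing argument used in the proof of Theorem~\ref{thm:mi}, applied to consecutive positions. First I make precise what ``under causal attention'' buys us. Write the window's token inputs as $x_1,\dots,x_n$, together with any preceding context $C$. Causal attention means there are measurable maps with
\[
h_k = f_k(C, x_1,\dots,x_k), \qquad k=1,\dots,n,
\]
so the prefix $\sigma$-algebras satisfy $\sigma(C,x_{1:k})\subseteq\sigma(C,x_{1:k+1})$. By the chain rule,
\[
I(Y; C,x_{1:k+1}) = I(Y; C,x_{1:k}) + I(Y; x_{k+1}\mid C,x_{1:k}) \ge I(Y; C,x_{1:k}),
\]
where the inequality uses the conditional mutual information defined above and its nonnegativity. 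This settles the claim at the level of the full prefixes.

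The next step transfers this monotonicity from prefixes to the hidden states themselves. This does not follow from causality alone, because $h_{k+1}$ is a lossy function of the prefix and could discard information that $h_k$ kept. I will therefore state the needed hypothesis explicitly, in parallel with the identifiability assumption of Theorem~\ref{thm:mi}. The hypothesis is \emph{causal sufficiency}: for each $k$ there is a measurable $\psi_k$ with $h_k=\psi_k(h_{k+1})$ almost surely.

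An alternative, weaker form of the hypothesis is the Markov chain $Y\to h_{k+1}\to h_k$. This is the formal content of ``later positions aggregate the earlier context'': residual streams and attention to position $k$ let $h_{k+1}$ retain whatever $h_k$ encodes about $Y$.

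Under the functional form of the hypothesis, the argument runs exactly as in Appendix~\ref{app:theorem_mi}. From $\sigma(h_k)\subseteq\sigma(h_{k+1})$ we get $H(Y\mid h_{k+1})\le H(Y\mid h_k)$, and subtracting both sides from $H(Y)$ gives $I(Y;h_{k+1})\ge I(Y;h_k)$. Under the Markov form, the chain rule gives
\[
I(Y;h_{k+1}) = I(Y;h_{k+1},h_k) = I(Y;h_k) + I(Y;h_{k+1}\mid h_k) \ge I(Y;h_k),
\]
where the first equality holds because $I(Y;h_k\mid h_{k+1})=0$. Chaining these inequalities for $k=1,\dots,n-1$ yields the full ordering, so the maximum is attained at $h_n$.

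The main obstacle is the second step. Without a sufficiency or Markov hypothesis the statement can fail: a network could overwrite early information at later positions. So the real work is to state the hypothesis cleanly and to argue that it is reasonable, by pointing to the residual connections and to the fact that position $k+1$ attends to position $k$. The empirical probe results in Figure~\ref{fig:empirical}(a--b) then serve as evidence that the hypothesis approximately holds in practice.
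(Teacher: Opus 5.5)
Your proof is correct under the hypothesis you add, and its core is the same as the paper's: nonnegativity of conditional mutual information via the chain rule, together with an assumption that later positions retain what earlier ones carry.

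The paper works with the hidden-state prefix. It shows $I(Y;h_{1:t+1})\ge I(Y;h_{1:t})$ and then closes the argument with the sentence that causal attention ensures $h_t$ encodes $h_{1:t}$, hence $I(Y;h_t)=I(Y;h_{1:t})$. That sentence is precisely the extra hypothesis you isolate. The paper treats it as a consequence of causality; you correctly observe that causality alone does not give it.

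In functional form the two hypotheses coincide: if every $h_k=\psi_k(h_{k+1})$, composition makes $h_{1:t}$ a function of $h_t$. In Markov form yours is strictly weaker and still suffices. The paper's condition $I(Y;h_{1:t-1}\mid h_t)=0$ implies your $I(Y;h_{t-1}\mid h_t)=0$, but not conversely. For a counterexample, take $h_3=a$, $h_2=b$, $h_1=c$, $Y=a\oplus c$ with $a,b,c$ i.i.d.\ fair bits: both pairwise conditions hold, yet $I(Y;h_{1:2}\mid h_3)=1$.

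Your pairwise argument also never passes through a prefix. So your first step, the chain rule over input prefixes $(C,x_{1:k})$, is never used afterwards and can be dropped.
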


\begin{proof}
Under causal attention, hidden state $h_t$ aggregates information from all preceding positions. Define the cumulative context $h_{1:t} = (h_1, h_2, \ldots, h_t)$. By the data processing inequality, since $h_t$ is a deterministic function of $h_{1:t}$:
\begin{equation}
I(Y; h_t) \leq I(Y; h_{1:t})
\end{equation}

It suffices to show that $I(Y; h_{1:t+1}) \geq I(Y; h_{1:t})$ for an arbitrary index $t \in \{1, 2, \ldots, n-1\}$.

By the chain rule for mutual information:
\begin{equation}
I(Y; h_{1:t+1}) = I(Y; h_{1:t}) + I(Y; h_{t+1} \mid h_{1:t})
\end{equation}

To verify this, we expand the right-hand side using Definitions 1 and 2:
\begin{align}
&I(Y; h_{1:t}) + I(Y; h_{t+1} \mid h_{1:t}) \nonumber \\
&= \big[H(Y) - H(Y \mid h_{1:t})\big] \nonumber \\
&\quad + \big[H(Y \mid h_{1:t}) - H(Y \mid h_{1:t}, h_{t+1})\big] \nonumber \\
&= H(Y) - H(Y \mid h_{1:t}, h_{t+1}) \nonumber \\
&= I(Y; h_{1:t+1})
\end{align}

Since conditional mutual information is non-negative~\citep{cover1999elements}:
\begin{equation}
I(Y; h_{t+1} \mid h_{1:t}) \geq 0
\end{equation}

it follows immediately that:
\begin{equation}
I(Y; h_{1:t+1}) \geq I(Y; h_{1:t})
\end{equation}

Since causal attention ensures $h_t$ encodes $h_{1:t}$, we have $I(Y; h_t) = I(Y; h_{1:t})$. Applying this equality and the above inequality iteratively for $t = n-1, n-2, \ldots, 1$ completes the proof.
\end{proof}

\begin{notebox}
\textit{\textbf{Implication for Routing:} In causal models, the last hidden state $h_n$ of each n-gram window has access to all preceding positions via causal attention, capturing the most task-relevant information. This justifies LiME's last-token routing strategy for n-gram windowed routing.}
\end{notebox}

\clearpage

\section{Expert Selection Strategies}
\label{app:topk_strategies}

Let $w(x) = [w_1(x), w_2(x), \ldots, w_E(x)]$ denote the routing probability distribution over $E$ experts for input $x$, obtained via softmax over routing scores: 
\begin{equation*}
    w_i(x) = \frac{\exp(s_i(x))}{\sum_{j=1}^{E} \exp(s_j(x))}.
\end{equation*}
We define $w_{[i]}(x)$ as the $i$-th largest probability, i.e., $w_{[1]}(x) \geq w_{[2]}(x) \geq \cdots \geq w_{[E]}(x)$. Each selection strategy defines a set of activated experts $\mathcal{S}(x) \subseteq \{1, \ldots, E\}$. Given a selected set $\mathcal{S}(x)$, the renormalized weights are computed as:
\begin{equation*}
    \tilde{w}_i(x) = \begin{cases}
        \displaystyle\frac{w_i(x)}{\sum_{j \in \mathcal{S}(x)} w_j(x)} & \text{if } i \in \mathcal{S}(x) \\[2ex]
        0 & \text{otherwise}
    \end{cases}
\end{equation*}
Each strategy below differs only in how $\mathcal{S}(x)$ is defined.

%==============================================================================
\subsection{Fixed Top-K}\label{app:d_1}

The most common approach selects a fixed number of $k$ experts with highest routing probabilities, regardless of the distribution shape.
\begin{equation*}
    \mathcal{S}_k(x) = \left\{ i : w_i(x) \geq w_{[k]}(x) \right\}
\end{equation*}

\textit{Properties:} Constant computational cost with exactly $k$ experts activated. However, it ignores the confidence signal—a dominant expert at 90\% is treated the same as a marginal leader at 15\%.

%==============================================================================
\subsection{Absolute Threshold}\label{app:d_2}

This strategy selects all experts whose probability exceeds a fixed threshold $\eta \in (0, 1)$.
\begin{equation*}
    \mathcal{S}_\eta(x) = \left\{ i : w_i(x) \geq \eta \right\}
\end{equation*}
To ensure at least one expert is selected, we enforce:
\begin{equation*}
    \mathcal{S}_\eta(x) = \begin{cases}
        \left\{ i : w_i(x) \geq \eta \right\} & \text{if } \exists\, i : w_i(x) \geq \eta \\[1ex]
        \left\{ \arg\max_i w_i(x) \right\} & \text{otherwise}
    \end{cases}
\end{equation*}

\textit{Properties:} Adaptive to routing confidence. However, not scale-invariant with respect to $E$. For uniform distribution, $w_i = 1/E$, so the threshold $\eta$ must satisfy $\eta < 1/E$ to select any expert. This makes $\eta$ difficult to tune across different expert counts.

%==============================================================================
\subsection{Entropy-Based Selection}\label{app:d_3}

This strategy uses the entropy of the routing distribution to determine the number of activated experts. High entropy indicates uncertainty (select more experts); low entropy indicates confidence (select fewer).

The entropy of the routing distribution is:
\begin{equation*}
    H(x) = -\sum_{i=1}^{E} w_i(x) \log w_i(x)
\end{equation*}
Normalized by maximum entropy $H_{\max} = \log E$ (achieved by uniform distribution):
\begin{equation*}
    \bar{H}(x) = \frac{H(x)}{\log E} \in [0, 1]
\end{equation*}
The number of experts is determined by linear interpolation:
\begin{equation*}
    k(x) = k_{\min} + \left\lfloor (k_{\max} - k_{\min}) \cdot \bar{H}(x) \right\rfloor
\end{equation*}
The selected set is then:
\begin{equation*}
    \mathcal{S}_H(x) = \left\{ i : w_i(x) \geq w_{[k(x)]}(x) \right\}
\end{equation*}

\textit{Properties:} Theoretically motivated and scale-invariant. However, requires tuning two hyperparameters ($k_{\min}, k_{\max}$), and the linear mapping may not be optimal.

%==============================================================================
\subsection{Gini-Based Selection}\label{app:d_4}

The Gini coefficient measures inequality in the distribution. High Gini indicates dominance by few experts; low Gini indicates uniform spread.

The Gini coefficient for routing probabilities (noting $\sum_i w_i(x) = 1$):
\begin{equation*}
    G(x) = \frac{1}{2E} \sum_{i=1}^{E} \sum_{j=1}^{E} |w_i(x) - w_j(x)| \in \left[0, 1-\frac{1}{E}\right]
\end{equation*}
An equivalent formulation using sorted probabilities:
\begin{equation*}
    G(x) = \frac{2\sum_{i=1}^{E} i \cdot w_{[E-i+1]}(x)}{E} - \frac{E+1}{E}
\end{equation*}
The number of experts (inverse relationship—high Gini means fewer experts):
\begin{equation*}
    k(x) = k_{\max} - \left\lfloor (k_{\max} - k_{\min}) \cdot \frac{G(x)}{1 - 1/E} \right\rfloor
\end{equation*}
The selected set is then:
\begin{equation*}
    \mathcal{S}_G(x) = \left\{ i : w_i(x) \geq w_{[k(x)]}(x) \right\}
\end{equation*}

\textit{Properties:} More sensitive to distribution tails than entropy. However, $O(E^2)$ computation for the double sum (or $O(E \log E)$ with sorting), and requires two hyperparameters.

%==============================================================================
\subsection{Cumulative Probability Selection}\label{app:d_5}

This strategy selects the minimum number of experts whose cumulative probability reaches a target threshold $\rho \in (0, 1]$.
\begin{equation*}
    k(x) = \min \left\{ k : \sum_{i=1}^{k} w_{[i]}(x) \geq \rho \right\}
\end{equation*}
The selected set contains the top-$k(x)$ experts:
\begin{equation*}
    \mathcal{S}_\rho(x) = \left\{ i : w_i(x) \geq w_{[k(x)]}(x) \right\}
\end{equation*}

\textit{Properties:} Intuitive interpretation—with $\rho = 0.9$, selected experts capture 90\% of the routing signal. Scale-invariant and single hyperparameter. However, requires sorting ($O(E \log E)$) and may over-select on flat distributions (e.g., uniform distribution with $E=4$ requires all 4 experts to reach $\rho=0.9$).

%==============================================================================
\subsection{Top-K with Minimum Gap}\label{app:d_6}

This strategy extends fixed top-$k$ by including additional experts that are within a margin $\Delta$ of the $k$-th expert.
\begin{equation*}
    \mathcal{S}_{k,\Delta}(x) = \left\{ i : w_i(x) \geq w_{[k]}(x) - \Delta \right\}
\end{equation*}
Equivalently:
\begin{equation*}
    \mathcal{S}_{k,\Delta}(x) = \mathcal{S}_k(x) \cup \left\{ i \,\middle|\, w_{[k]}(x) - \Delta \leq w_i(x) < w_{[k]}(x) \right\}
\end{equation*}

\textit{Properties:} Addresses arbitrary cutoffs when experts have similar scores. However, requires tuning two hyperparameters ($k, \Delta$), and $\Delta$ is not scale-invariant.

%==============================================================================
\subsection{Relative Threshold (Ours)}\label{app:d_7}

We select all experts whose probability is at least a fraction $\theta$ of the maximum probability.
\begin{equation*}
    \mathcal{S}_\theta(x) = \left\{ i : w_i(x) \geq \theta \cdot w_{[1]}(x) \right\}
\end{equation*}
where $w_{[1]}(x) = \max_j w_j(x)$.

\textit{Analysis of selected expert count.} Let $r_i(x) = w_i(x) / w_{[1]}(x) \in (0, 1]$ be the relative probability of expert $i$. Then:
\begin{equation*}
    |\mathcal{S}_\theta(x)| = \left| \left\{ i : r_i(x) \geq \theta \right\} \right|
\end{equation*}
For a peaked distribution where $w_{[1]}(x) \gg w_{[2]}(x)$, we have $r_{[2]}(x) \ll 1$, so only the top expert is selected when $\theta > r_{[2]}(x)$. For a flat distribution where $w_{[1]}(x) \approx w_{[2]}(x) \approx \cdots$, we have $r_i(x) \approx 1$ for all $i$, so most experts are selected.

\begin{notebox}
\textit{\textbf{Why Relative Threshold:} (i) Scale-invariant across different $E$. (ii) Single hyperparameter $\theta$. (iii) $O(E)$ computation. (iv) Interpretable: $\theta=0.5$ means ``at least half as good as the best.'' (v) Guarantees at least one expert selected.}
\end{notebox}

\begin{table*}[h]
\centering
\caption{Comparison of expert selection strategies.}
\small
\begin{tabular}{lccc}
\toprule \toprule
\textbf{Strategy} & \textbf{Selection Rule} & \textbf{Hyperparams} & \textbf{Complexity} \\
\midrule
Fixed Top-K & $w_i(x) \geq w_{[k]}(x)$ & $k$ & $O(E)$ \\
Absolute Threshold & $w_i(x) \geq \eta$ & $\eta$ & $O(E)$ \\
Entropy-Based & $w_i(x) \geq w_{[k(H)]}(x)$ & $k_{\min}, k_{\max}$ & $O(E)$ \\
Gini-Based & $w_i(x) \geq w_{[k(G)]}(x)$ & $k_{\min}, k_{\max}$ & $O(E^2)$ \\
Cumulative Prob. & $\sum_{j \leq i} w_{[j]}(x) \geq \rho$ & $\rho$ & $O(E \log E)$ \\
Top-K + Gap & $w_i(x) \geq w_{[k]}(x) - \Delta$ & $k, \Delta$ & $O(E \log E)$ \\
\rowcolor{HighlightPink} 
Relative Threshold (Ours) & $w_i(x) \geq \theta \cdot w_{[1]}(x)$ & $\theta$ & $O(E)$ \\
\bottomrule \bottomrule
\end{tabular}
\label{tab:topk_strategies_math}
\end{table*}

\begin{table*}[h]
\centering
\caption{Comparison of expert selection strategies on GLUE benchmarks. Best results per dataset are \underline{\textbf{underlined}}. Relative threshold ($\theta=0.7$) achieves the best average performance while using only 1.73 experts on average.}
\small
\resizebox{0.6\textwidth}{!}{
\setlength{\tabcolsep}{3.5pt}
\begin{tabular}{lcccccccc}
\toprule \toprule
\textbf{Strategy} & \textbf{Params} & \textbf{Avg. $|\mathcal{S}|$} & \textbf{SST-2} & \textbf{QNLI} & \textbf{QQP} & \textbf{CoLA} & \textbf{MRPC} & \textbf{STS-B} \\
\midrule
Fixed Top-K & $k=1$ & 1.00 & 91.17 & 85.62 & 82.54 & 79.21 & 81.67 & 80.43 \\
Fixed Top-K & $k=2$ & 2.00 & 91.86 & 86.34 & 83.27 & 79.88 & 82.35 & 81.12 \\
Fixed Top-K & $k=3$ & 3.00 & 92.08 & \underline{\textbf{87.14}} & 83.62 & 80.15 & 82.74 & 81.49 \\
Fixed Top-K & $k=4$ & 4.00 & 91.94 & 86.53 & 83.51 & 79.96 & 82.51 & 81.28 \\
\midrule
Absolute Threshold & $\eta=0.05$ & 4.23 & 91.52 & 86.18 & 83.14 & 79.42 & 82.03 & 80.76 \\
Absolute Threshold & $\eta=0.10$ & 2.87 & 91.79 & 86.47 & 83.39 & 79.81 & 82.41 & 81.19 \\
Absolute Threshold & $\eta=0.15$ & 1.94 & 91.63 & 86.21 & 83.18 & 79.58 & 82.14 & 80.91 \\
Absolute Threshold & $\eta=0.20$ & 1.36 & 91.08 & 85.54 & 82.47 & 79.03 & 81.52 & 80.29 \\
\midrule
Entropy-Based & $k \in [1,2]$ & 1.47 & 91.58 & 86.21 & 83.12 & 79.74 & 82.31 & 80.94 \\
Entropy-Based & $k \in [1,3]$ & 1.92 & 92.14 & 86.82 & 83.69 & 80.27 & 83.18 & 81.58 \\
Entropy-Based & $k \in [1,4]$ & 2.41 & 92.36 & 86.97 & 83.87 & 80.43 & \underline{\textbf{83.17}} & 81.79 \\
Entropy-Based & $k \in [2,4]$ & 2.89 & 92.29 & 86.89 & 83.78 & 80.35 & 82.98 & 81.67 \\
\midrule
Gini-Based & $k \in [1,2]$ & 1.51 & 91.54 & 86.17 & 83.08 & 79.69 & 82.26 & 80.89 \\
Gini-Based & $k \in [1,3]$ & 1.88 & 92.09 & 86.76 & 83.64 & 80.21 & 82.83 & 81.52 \\
Gini-Based & $k \in [1,4]$ & 2.35 & 92.31 & 86.94 & 83.82 & 80.38 & 83.04 & 81.74 \\
Gini-Based & $k \in [2,4]$ & 2.94 & 92.24 & 86.85 & 83.71 & 80.29 & 82.91 & 81.61 \\
\midrule
Cumulative Prob. & $\rho=0.80$ & 1.86 & 91.71 & 86.32 & 83.26 & 79.79 & 82.39 & 81.13 \\
Cumulative Prob. & $\rho=0.85$ & 2.24 & 91.89 & 86.51 & 83.46 & 79.98 & 82.58 & 81.31 \\
Cumulative Prob. & $\rho=0.90$ & 2.68 & 92.01 & 86.64 & 83.58 & 80.11 & 82.69 & 81.41 \\
Cumulative Prob. & $\rho=0.95$ & 3.31 & 92.07 & 86.69 & 83.63 & \underline{\textbf{80.62}} & 82.73 & 81.46 \\
\midrule
\rowcolor{HighlightPink} 
Relative Threshold & $\theta=0.3$ & 3.12 & 92.21 & 86.79 & 83.71 & 80.28 & 83.04 & 81.59 \\
\rowcolor{HighlightPink} 
Relative Threshold & $\theta=0.4$ & 2.54 & 92.27 & 86.86 & 83.79 & 80.36 & 82.91 & 81.68 \\
\rowcolor{HighlightPink} 
Relative Threshold & $\theta=0.5$ & 2.18 & 92.31 & 86.91 & 83.84 & 80.41 & 82.96 & 81.74 \\
\rowcolor{HighlightPink} 
Relative Threshold & $\theta=0.6$ & 1.89 & 92.38 & 86.98 & \underline{\textbf{83.98}} & 80.48 & 83.02 & 81.81 \\
\rowcolor{HighlightPink} 
Relative Threshold & $\theta=0.7$ & 1.73 & \underline{\textbf{92.43}} & 87.06 & 83.91 & 80.24 & 83.09 & \underline{\textbf{81.87}} \\
\rowcolor{HighlightPink} 
Relative Threshold & $\theta=0.8$ & 1.42 & 92.18 & 86.74 & 83.67 & 80.23 & 83.18 & 81.54 \\
\bottomrule \bottomrule
\end{tabular}
}
\label{tab:topk_results}
\end{table*}

\subsection{Empirical Comparison}\label{app:d_8}
Table~\ref{tab:topk_results} compares expert selection strategies on GLUE benchmarks. Several observations emerge. First, fixed top-$k$ shows that performance generally improves from $k=1$ to $k=3$, but degrades at $k=4$, suggesting that activating all experts is suboptimal—not all experts are equally relevant for every input. Second, absolute threshold is sensitive to the choice of $\eta$: too high ($\eta=0.20$) selects too few experts, while too low ($\eta=0.05$) activates nearly all, both hurting performance. Third, entropy-based and Gini-based strategies perform similarly, as both measure distribution spread, but require tuning two hyperparameters and incur additional computational overhead. Fourth, cumulative probability works well but tends to over-select on uncertain inputs (avg. $|\mathcal{S}|=3.31$ at $\rho=0.95$). Finally, our relative threshold achieves the best overall performance: $\theta=0.7$ obtains the highest accuracy on SST-2 (92.43) and STS-B (81.87), while using only 1.73 experts on average—fewer than most other adaptive strategies. This confirms that relative threshold effectively balances expressiveness and efficiency: it activates more experts when routing is uncertain and fewer when confident, without the complexity of entropy or Gini computation.

\begin{notebox}
\textit{\textbf{Key Result:} Relative threshold ($\theta=0.7$) achieves best accuracy while using only 1.73 experts on average—fewer than other adaptive strategies, confirming it balances expressiveness and efficiency effectively.}
\end{notebox}
%%%%%%%%%%%%%%%%%%%%%%%%%%%%%%%%%%%%%%%%%%%%%%%%%%%%%%%%%%%%%%%%%%%%%%%%%%%%%%%
%%%%%%%%%%%%%%%%%%%%%%%%%%%%%%%%%%%%%%%%%%%%%%%%%%%%%%%%%%%%%%%%%%%%%%%%%%%%%%%

\clearpage

\section{Motivation for Dual-Use Routing Features} \label{app:zero_routing_motivation}

This section gives intuition for our \emph{zero-parameter routing}. The main idea is simple: instead of training a separate router, we compute routing weights from features that the model already produces during the forward pass. Concretely, we reuse a small slice of existing representations for routing, so the same forward computation supports both \emph{adaptation} and \emph{routing}.

\textbf{1) Any small slice of a transformer representation still carries global information.}
In transformers, each layer mixes information across dimensions through attention projections and feed-forward networks \cite{vaswani2017attention}. As a result, each output dimension is influenced by many (often all) input dimensions. This means that even if we take only $E$ dimensions from a $d$-dimensional hidden state (with $E \ll d$), those dimensions still summarize broad information about the input. Therefore, using a small feature slice for routing can still provide enough signal to choose experts.

\textbf{2) Pretrained features are redundant, so reserving a small slice for routing is low-risk.}
Pretrained representations often contain substantial redundancy. For example, \citet{luo2023less} show that using only $\sim$1\% of the most important feature dimensions can recover performance close to using the full representation. This supports our design in two ways. First, when $d \gg E$ (e.g., $d{=}4096$ and $E{=}4$), using only $E$ dimensions for routing is unlikely to reduce the model’s ability to learn task-specific representations, since most dimensions remain available for adaptation. Second, this same redundancy intuition is consistent with why parameter-efficient methods such as LoRA work well: effective adaptation often does not require full-rank updates, and low-rank parameterization is sufficient for strong performance \citep{hu2022lora}.

\textbf{3) Sharing features for routing can encourage meaningful grouping.}
Using existing representations for routing can also act as a useful inductive bias. If routing depends on the same representations used for adaptation, the model is encouraged to organize its feature space so that inputs with similar semantics (and similar required adaptations) are routed to similar experts. In practice, this helps expert specialization emerge without learning an extra router.

\textbf{Empirical evidence.}
We support this motivation with ablations. In \S\ref{ab:zero_routing}, we show that routing quality is similar whether we use leading, central, trailing, or random $E$ dimensions, suggesting that routing signal is not confined to a specific feature region. In \S\ref{ab:zerovsLearn}, our parameter-free routing matches a learned router with $d{\times}E$ parameters per layer, indicating that existing representations provide sufficient routing information. Finally, the visualization in \S\ref{ab:routing_represenation} shows that inputs assigned to the same expert form clear clusters, suggesting that the routing aligns with meaningful structure in the representation space.

\clearpage 

\section{More Ablation Study}\label{app:More Ablation Study} 

\begin{figure*}[!htb]
    \centering
    \includegraphics[width=\linewidth]{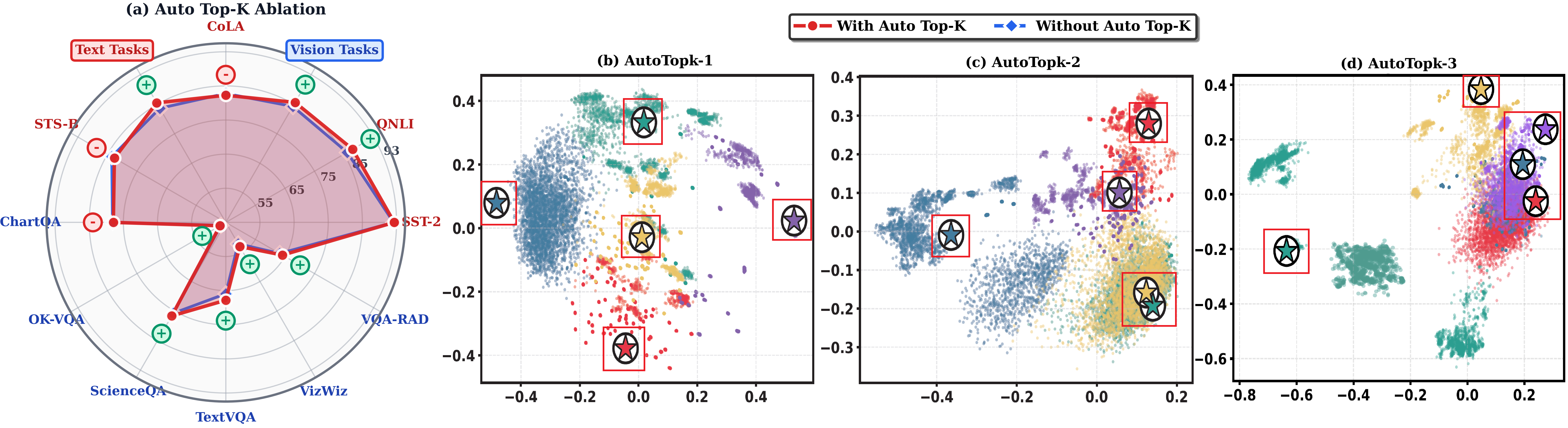}
    \caption{Auto Top-K ablation. (a) Auto Top-K outperforms fixed Top-K=2 on most tasks. (b-d) PCA visualizations show inputs naturally require varying numbers of experts: one (b), two (c), or three (d).}
\label{fig:auto_topk}
\end{figure*}
\subsection{Auto Top-K Motivation and Analysis}\label{ab:auto_topk} We analyze the 
motivation for our Auto Top-K strategy over fixed Top-K selection. The core 
observation is that different inputs inherently require different numbers of 
experts—a property that fixed Top-K cannot capture.

Figure~\ref{fig:auto_topk}(b-d) visualizes PCA projections of routing 
representations across different input samples, colored by their dominant expert 
assignment. These visualizations reveal three distinct patterns:

\textit{(i) Well-separated clusters (Figure~\ref{fig:auto_topk}b).} All five 
clusters are clearly separated in the representation space. Each input strongly 
belongs to a single expert, and the routing distribution is peaked (high confidence). 
In this case, activating additional experts would introduce noise from irrelevant 
specializations. Auto Top-K naturally selects only the dominant expert since 
secondary experts fall below the relative threshold $\theta \cdot \max_j w_j(x)$.

\textit{(ii) Partially overlapping clusters (Figure~\ref{fig:auto_topk}c).} Two 
clusters exhibit significant overlap, indicating that certain inputs share 
characteristics of multiple experts. The routing distribution for these inputs 
is flatter, with two experts receiving similar probabilities. Fixed Top-1 would 
arbitrarily discard one relevant expert, while Auto Top-K includes both since 
they exceed the relative threshold.

\textit{(iii) Multi-cluster overlap (Figure~\ref{fig:auto_topk}d).} Three clusters 
overlap substantially, suggesting these inputs benefit from contributions of 
three experts. This occurs for complex or ambiguous inputs that span multiple 
specializations. Auto Top-K activates all three relevant experts, while fixed 
Top-2 would exclude one.

This natural variation in input complexity—some inputs clearly belonging to one 
expert while others requiring multiple—motivates our relative threshold strategy. 
By selecting experts satisfying $w_i(x) \geq \theta \cdot \max_j w_j(x)$, Auto 
Top-K automatically adjusts selection based on routing confidence: confident 
routing (peaked distribution) activates fewer experts, while uncertain routing 
(flat distribution) activates more.

Figure~\ref{fig:auto_topk}(a) validates this design empirically. Auto Top-K 
consistently outperforms fixed Top-K=2 across both text and vision tasks. 
Improvements are particularly notable on tasks with diverse input complexity: 
CoLA and QNLI (text), ChartQA and VQA-RAD (vision). The radar chart shows that 
Auto Top-K matches or exceeds fixed Top-K on nearly all benchmarks, with no 
significant degradation on any task.

\begin{notebox}
\textit{\textbf{Key Insight:} Input complexity varies naturally—some inputs 
clearly belong to one expert while others span multiple specializations. 
Auto Top-K adapts to this variation via confidence-aware selection, whereas 
fixed Top-K imposes a rigid cardinality that either wastes computation 
(selecting irrelevant experts) or discards useful experts.}
\end{notebox}

\begin{figure*}[!htb]
\centering
\includegraphics[width=\linewidth]{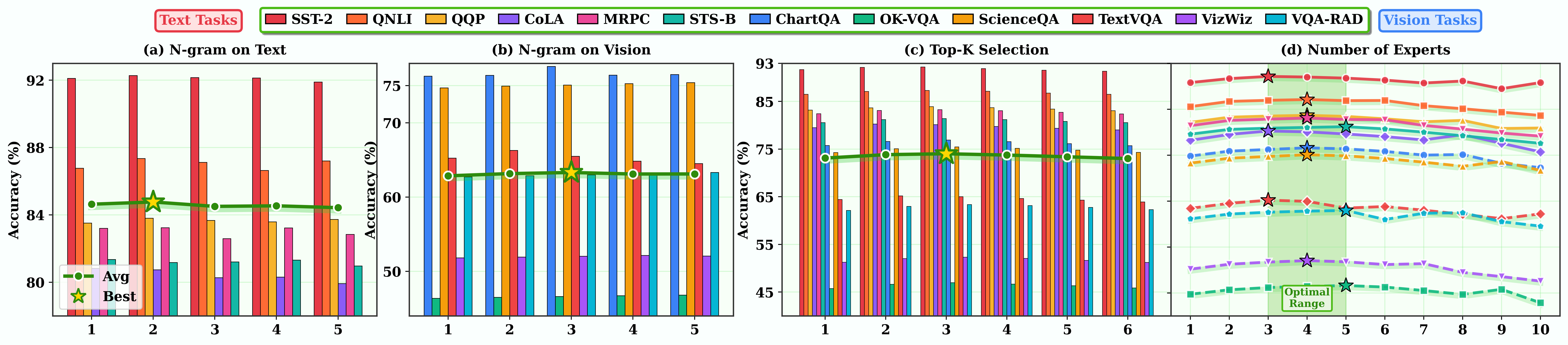}
\caption{Hyperparameter ablations. (a-b) N-gram window size shows robust 
performance across text and vision tasks. (c) Fixed Top-K selection peaks at $k=2$-$3$; 
fewer limits expressiveness, more introduces noise. (d) Number of experts: 
optimal range $E \in [3, 6]$ balances capacity and training efficiency.}
\label{fig:ablation_ngram}
\end{figure*}
\subsection{N-gram Window Size (Figure~\ref{fig:ablation_ngram}a-b)} \label{ab:ngram}

We analyze the effect of n-gram window size on routing granularity. Window size $n=1$ corresponds to token-level routing where each token independently selects its own experts, while larger $n$ groups adjacent tokens into windows that share a single routing decision. The motivation for n-gram routing is semantic coherence: adjacent tokens often form meaningful linguistic units (e.g., phrases, named entities) or coherent visual regions that benefit from consistent expert processing. By sharing routing decisions within windows, n-gram routing encourages locally consistent expert assignments that respect these natural boundaries.

On text tasks (a), performance remains stable across window sizes, with slight variations per task. On vision tasks (b), we observe similar robustness. This stability suggests that the semantic coherence assumption holds—grouping adjacent tokens does not degrade performance while providing implicit regularization by reducing the model's capacity to overfit to spurious token-level patterns.

Note that n-gram routing does not reduce computation: since LiME derives routing from existing representations ($z$ and $\hat{z}$), all token representations must be computed regardless of window size. N-gram windowing only affects how routing probabilities are shared, not the underlying forward pass. Based on these results, we use $n=3$ as default to capture phrase-level semantics while maintaining accuracy.

\subsection{Fixed Top-K Selection (Figure~\ref{fig:ablation_ngram}c)} \label{ab:topk} We vary fixed 
Top-K from $k=1$ to $k=6$. With $k=1$, only the highest-scoring expert is 
activated, limiting expressiveness. Performance generally improves from $k=1$ 
to $k=2$ or $k=3$ as additional experts contribute complementary specializations. 
Beyond $k=3$, accuracy plateaus or slightly decreases—activating too many experts 
dilutes the contribution of relevant ones with noise from less relevant experts. 
On average, $k=2$ provides the best fixed-cardinality baseline, which we use for 
comparison against Auto Top-K.

\subsection{Number of Experts (Figure~\ref{fig:ablation_ngram}d)}\label{ab:expert} We scale the number 
of experts from $E=1$ to $E=10$. Performance improves from $E=1$ to $E \approx 4$-$5$, 
consistent with Theorem~\ref{thm:mi} that more experts preserve more task-relevant 
information. The optimal range $E \in [3, 6]$ (highlighted) achieves peak accuracy 
across most tasks. Beyond $E=6$, performance plateaus or slightly degrades. This 
occurs because each expert receives fewer training samples as $E$ increases, making 
meaningful specialization harder without proportionally more diverse data. We use $E=4$ as default to balance capacity and training efficiency.

\begin{figure*}[!htb]
\centering
\includegraphics[width=\linewidth]{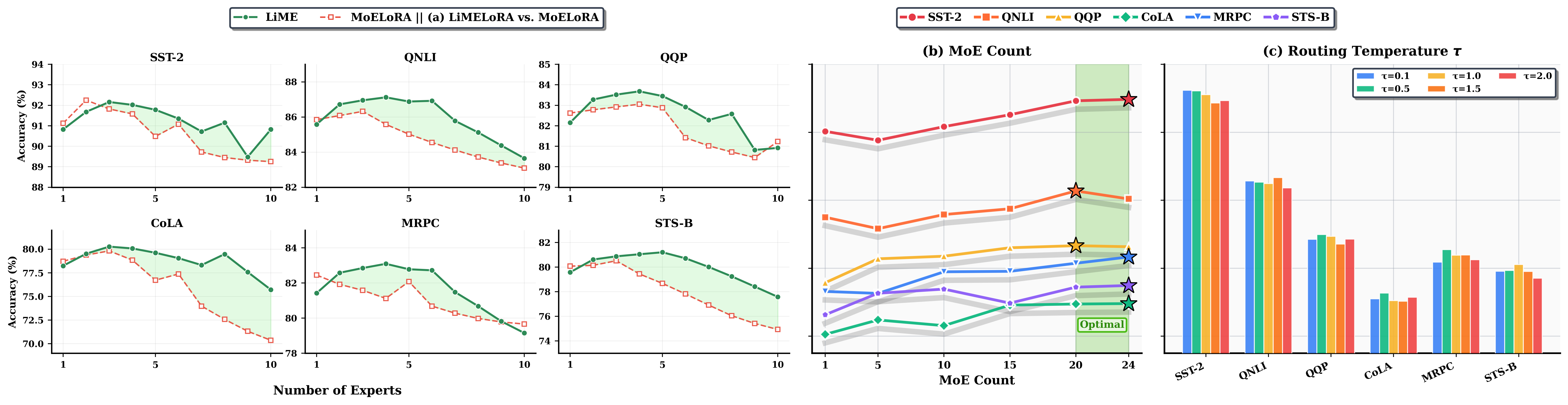}
\caption{\textbf{Scaling and temperature ablations.} (a)~LiME vs MoELoRA with increasing experts: LiME (green) maintains stable performance while MoELoRA (red) degrades significantly beyond $E=3$--$4$, with drops of 8--10\% on CoLA, MRPC, and STS-B due to overfitting. Green shading indicates where LiME outperforms. (b)~MoE layer count: performance improves with more LiME layers, with optimal range at 20--24 layers (green region). (c)~Routing temperature: $\tau=0.5$ (green) achieves the best performance across tasks.}
\label{fig:scaling}
\end{figure*}

\subsection{Expert Scaling: LiME vs MoELoRA (Figure~\ref{fig:scaling})a.} \label{ab:expert_scaling}
We compare how LiME and MoELoRA scale with increasing number of experts on GLUE tasks. The shaded region highlights where LiME outperforms MoELoRA. Two patterns emerge clearly:

\textit{LiME maintains stability.} Across all six tasks, LiME (green) shows relatively stable performance as $E$ increases from 1 to 10. Performance typically peaks around $E=3$--$5$ and remains within a narrow range thereafter.

\textit{MoELoRA degrades significantly.} In contrast, MoELoRA (red) exhibits sharp degradation beyond $E=3$--$4$. The drops are particularly severe on CoLA ($\sim$78\% to $\sim$70\%), STS-B ($\sim$80\% to $\sim$74\%), and MRPC ($\sim$83\% to $\sim$73\%)—losses of 8--10 percentage points.

This divergence stems from overfitting: MoELoRA replicates full adapter modules per expert ($E \times |\phi|$ parameters), causing each expert to see fewer training samples as $E$ increases. With insufficient data per expert, MoELoRA overfits to training noise rather than learning meaningful specialization. LiME avoids this by using lightweight modulation vectors ($E \times d$ parameters) while the shared PEFT backbone trains on all data, preventing fragmentation.

\begin{notebox}
\textit{\textbf{Finding:} MoELoRA suffers 8--10\% accuracy drops beyond $E=4$ due to overfitting from parameter explosion. LiME's lightweight design maintains stable performance up to $E=10$, demonstrating robustness to expert scaling.}
\end{notebox}

\subsection{MoE Layer Count (Figure~\ref{fig:scaling}b).}\label{ab:moe_count} We vary the number of layers 
equipped with LiME from 1 to 24 (full model). Performance generally 
improves with more MoE layers, as each additional layer gains input-specific adaptation 
capability. The optimal range is $\sim$20-24 layers, suggesting that applying 
LiME broadly across the model yields the best results. Notably, 
even with just 5 MoE layers, we observe meaningful improvements over the baseline, 
indicating that selective application can balance efficiency and performance when 
computational budget is constrained.

\subsection{Routing Temperature $\tau$ (Figure~\ref{fig:scaling}c).} \label{ab:temparature}We analyze the effect 
of temperature $\tau$ in the routing softmax. Lower temperatures ($\tau=0.1$) produce 
sharper routing distributions approaching hard selection, while higher temperatures 
($\tau=2.0$) yield softer, more uniform distributions. Results show that $\tau=0.5$ 
achieves the best performance across most tasks, providing a balance between decisive 
routing (low $\tau$) and smooth gradient flow (high $\tau$). Very low temperature 
risks gradient sparsity during training, while very high temperature dilutes expert 
specialization by distributing weight too uniformly.

\begin{figure*}[!htb]
\centering
\includegraphics[width=\linewidth]{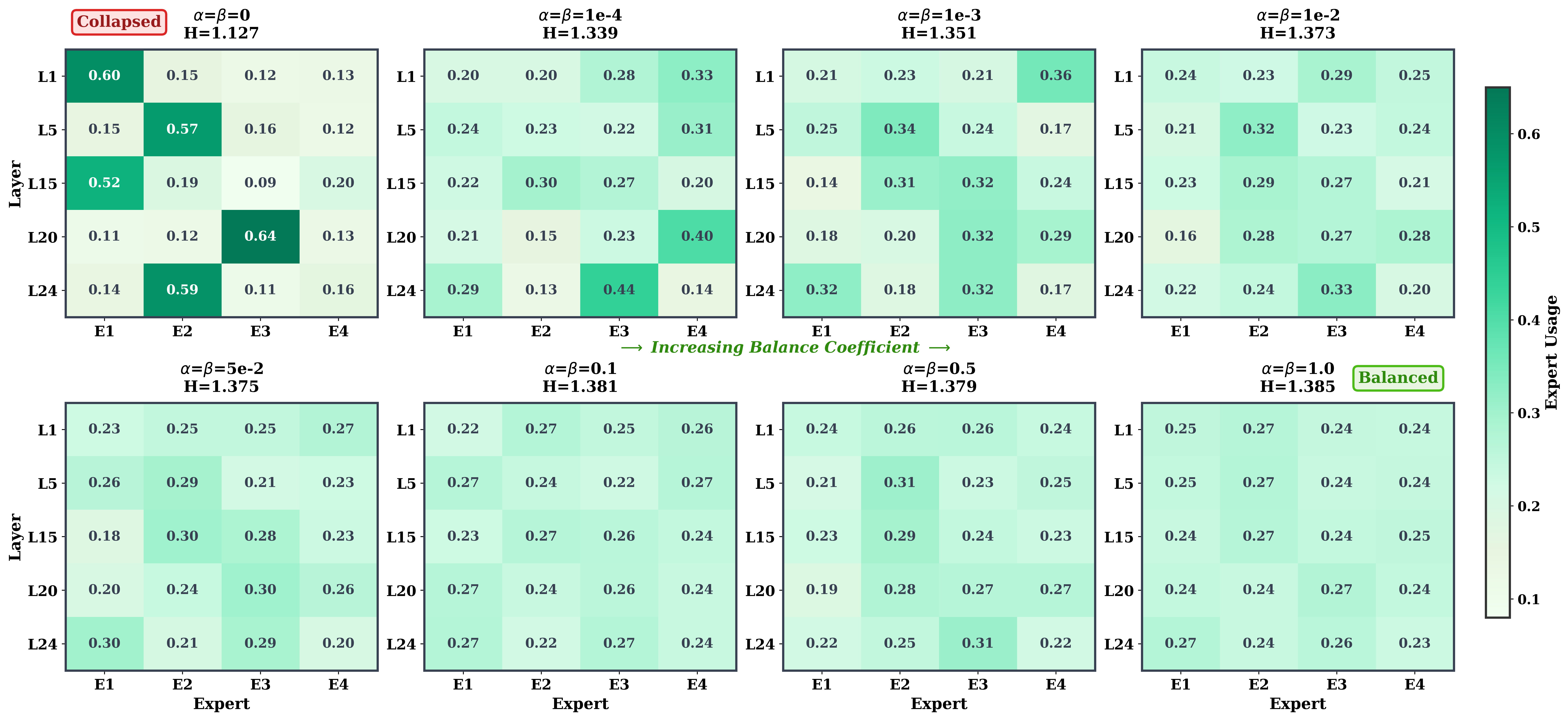}
\caption{Expert utilization heatmaps on SST-2 for varying balance coefficients 
($\alpha = \beta$). Without balancing (left), routing collapses to few experts 
per layer (low entropy $H=1.127$). Increasing coefficients progressively balances 
utilization until near-uniform distribution (right, $H=1.385$). Each cell shows 
the fraction of tokens routed to expert E$_i$ at layer L$_j$.}
\label{fig:expert_usage}
\end{figure*}

\subsection{Expert Utilization vs Balance Coefficient (Figure~\ref{fig:expert_usage}).} \label{ab:load_balance_expert_uses}
We visualize expert usage across layers for varying balance coefficients 
($\alpha = \beta$) on SST-2. Each heatmap shows the proportion of tokens routed 
to each expert (E1-E4) at different layers (L1, L5, L15, L20, L24), with entropy 
$H$ measuring utilization uniformity.

\textit{Without balancing ($\alpha = \beta = 0$):} Routing collapses to dominant 
experts. Layer L1 routes 60\% of tokens to E1, L5 routes 57\% to E2, and deeper 
layers show similar collapse patterns. The low entropy ($H=1.127$) confirms severe 
imbalance—most expert capacity is wasted.

\textit{Small coefficients ($\alpha = \beta \in [10^{-4}, 10^{-2}]$):} Expert 
utilization gradually balances. At $10^{-4}$, entropy rises to 1.339 with more 
distributed routing. By $10^{-2}$, entropy reaches 1.373 and usage approaches 
uniformity ($\sim$0.21-0.33 per expert).

\textit{Moderate to high coefficients ($\alpha = \beta \in [0.05, 1.0]$):} 
Utilization becomes nearly uniform across all layers and experts ($\sim$0.24-0.27 
each). Entropy saturates around $H \approx 1.38$, close to maximum entropy 
$H_{\max} = \log(4) \approx 1.386$ for 4 experts.

\begin{notebox}
\textit{\textbf{Insight:} Balance coefficients $\alpha = \beta \geq 10^{-2}$ 
effectively prevent expert collapse. However, as shown in Figure~\ref{fig:ablation_expert}(b), 
optimal \textit{accuracy} occurs at moderate coefficients ($\sim$5e-2), not maximum 
entropy—some degree of non-uniform specialization is beneficial for task performance.}
\end{notebox}

\begin{figure*}[t]
\centering
\includegraphics[width=\linewidth]{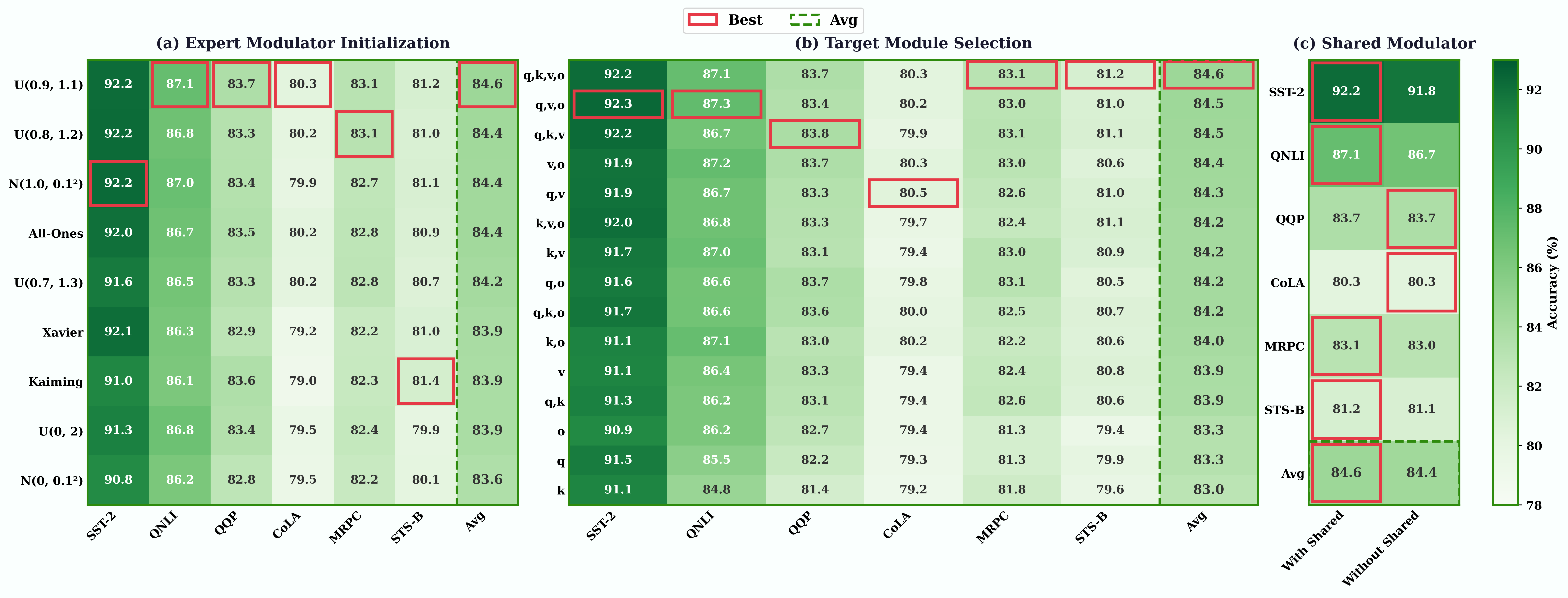}
\caption{\textbf{Ablation on initialization, target modules, and shared modulator.} (a)~Near-unity initialization ($\mathcal{U}(0.9, 1.1)$) achieves best average accuracy (84.6\%). (b)~Applying LiME to all attention projections (q,k,v,o) yields optimal performance. (c)~Shared modulator provides consistent gains across tasks.}
\label{fig:init_module_shared}
\end{figure*}

\subsection{Expert Modulator Initialization (Figure~\ref{fig:init_module_shared}a)} \label{ab:initialization}
We analyze the effect of expert modulator initialization on performance. The modulated output is $\hat{z} \odot \mathcal{P}(x)$, where $\mathcal{P}(x) = \sum_i w_i(x) \cdot \trainable{p}_i$. Our motivation is that initializing $\trainable{p}_i \approx \mathbf{1}$ yields $\hat{z} \odot \mathcal{P}(x) \approx \hat{z} \odot \mathbf{1} = \hat{z}$, allowing LiME to behave like standard PEFT at the start of training before expert specialization emerges.

Results confirm this intuition: near-unity initializations outperform other strategies. $\mathcal{U}(0.9, 1.1)$ achieves the best average accuracy (84.6\%), followed by $\mathcal{U}(0.8, 1.2)$, $\mathcal{N}(1.0, 0.1^2)$, and All-Ones (84.4\% each). Notably, exact All-Ones initialization is not optimal—experts require slight initial diversity to differentiate during training. Wider distributions like $\mathcal{U}(0.7, 1.3)$ perform slightly worse (84.2\%), while standard neural network initializations (Xavier, Kaiming) and zero-centered distributions ($\mathcal{N}(0, 0.1^2)$) show degraded performance (83.6--83.9\%), as they deviate from the stable PEFT starting point. For the shared modulator, we initialize $\trainable{\gamma} = 0$ so that the shared term $\trainable{\gamma} \cdot (\hat{z} \odot \trainable{p}_s)$ has no effect at initialization, allowing the model to learn its contribution during training.

\begin{notebox}
\textit{\textbf{Finding:} Near-unity initialization ($\mathcal{U}(0.9, 1.1)$) ensures LiME starts as standard PEFT while providing slight diversity for expert differentiation. Exact ones or zero-centered initializations are suboptimal.}
\end{notebox}

\subsection{Target Module Selection (Figure~\ref{fig:init_module_shared}b)} \label{ab:target_module}
We investigate which attention projections benefit from LiME. We test all combinations of query (q), key (k), value (v), and output (o) projections. Applying LiME to all four projections (q,k,v,o) achieves the best average accuracy (84.6\%), indicating that expert specialization benefits all components of the attention mechanism.

Interestingly, removing the key projection (q,v,o) maintains strong performance (84.5\%), suggesting keys contribute least to expert specialization. Configurations including value projection consistently perform well: (q,k,v), (v,o), and (q,v) all achieve $\geq$84.3\%. In contrast, applying LiME to single projections yields lower accuracy, with output-only (o) and query-only (q) at 83.3\% and key-only (k) at 83.0\%. This suggests that expert modulation benefits from complementary interactions across multiple projections.

\begin{notebox}
\textit{\textbf{Finding:} Applying LiME to all attention projections (q,k,v,o) yields optimal performance. Value projection contributes most to expert specialization, while key projection contributes least.}
\end{notebox}

\subsection{Shared Modulator (Figure~\ref{fig:init_module_shared}c)} \label{ab:shared_modulator}
We evaluate the contribution of the shared modulator $\trainable{p}_s$ in the output formulation: $h = z + \hat{z} \odot \mathcal{P}(x) + \trainable{\gamma} \cdot (\hat{z} \odot \trainable{p}_s)$. The shared modulator provides a task-general adaptation component that complements expert-specific modulation.

Results show consistent improvements with the shared modulator across all tasks. Average accuracy improves from 84.4\% (without) to 84.6\% (with). Gains are most pronounced on SST-2 (+0.4\%) and QNLI (+0.4\%), while QQP and CoLA show minimal difference. This suggests the shared modulator captures common adaptation patterns across inputs, reducing the burden on expert modulators to learn both shared and specialized transformations.

\begin{notebox}
\textit{\textbf{Finding:} The shared modulator provides consistent gains (+0.2\% average) by capturing task-general adaptation patterns, allowing expert modulators to focus on input-specific specialization.}
\end{notebox}

\begin{figure*}[!htb]
\centering
\includegraphics[width=0.809\linewidth]{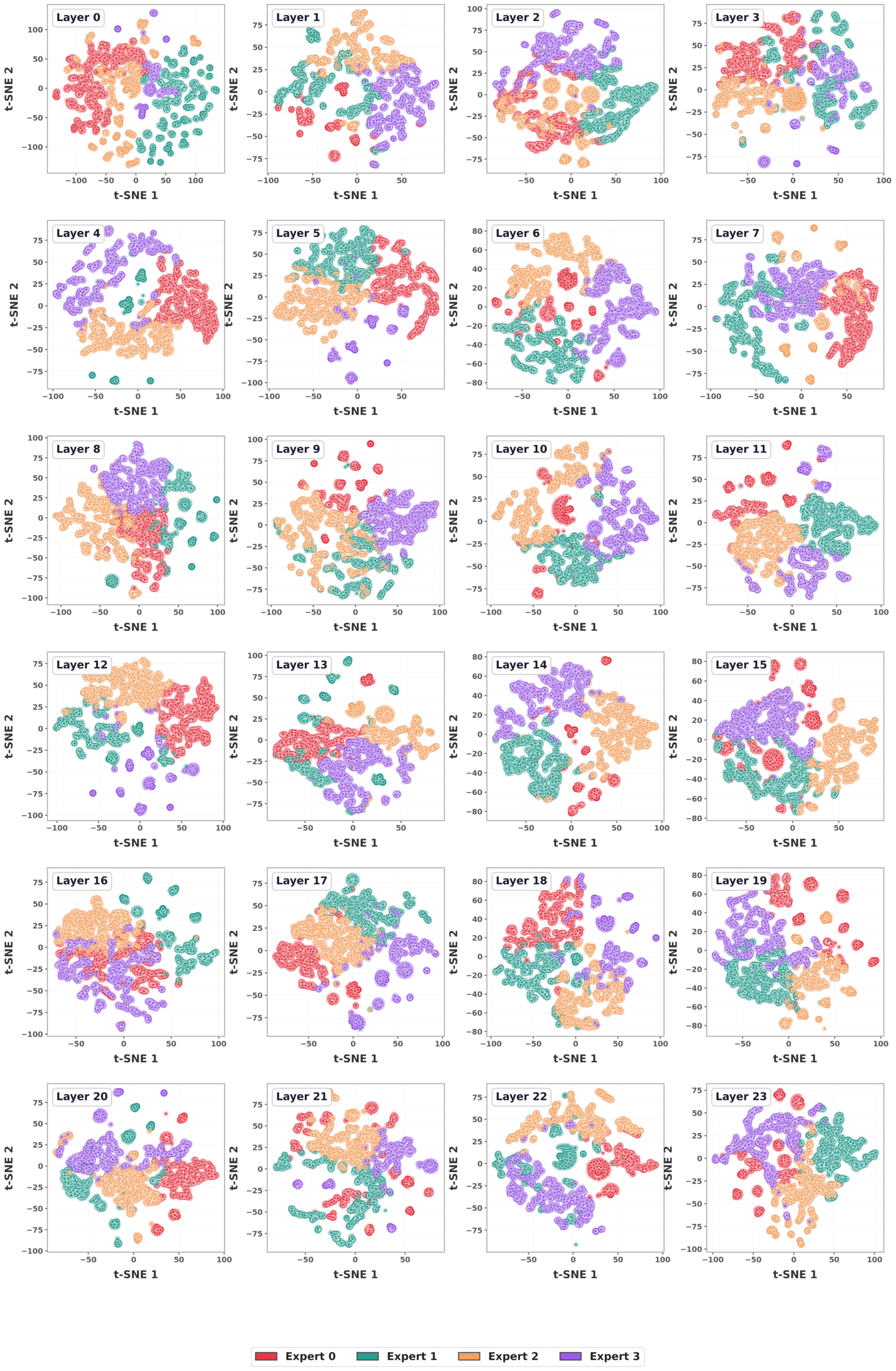}
\caption{t-SNE visualization of routing representations across all 24 layers 
on GLUE, colored by dominant expert assignment. Inputs routed to the same 
expert cluster together, confirming that zero-parameter routing learns 
meaningful specialization. Clustering sharpness varies by layer, with later 
layers showing more distinct expert separation.}
\label{fig:tsne_layers}
\end{figure*}

\subsection{Routing Representation Analysis (Figure~\ref{fig:tsne_layers}).} \label{ab:routing_represenation} We 
visualize t-SNE projections of routing representations across all 24 layers 
after training on GLUE, colored by dominant expert assignment (Expert 0-3). 
This analysis examines whether our zero-parameter routing learns meaningful 
expert specialization at each layer.

\textit{Early layers (0-5):} Representations show partial clustering with some 
overlap between experts. Experts capture broad input categories, but boundaries 
are not yet sharply defined. This is expected as early layers process lower-level 
features that may be shared across input types.

\textit{Middle layers (6-15):} Clustering patterns vary—some layers (e.g., Layer 9, 
13) exhibit well-separated expert clusters, while others (e.g., Layer 5, 12) show 
more mixing. This layer-specific behavior suggests that different layers benefit 
from different degrees of expert specialization based on the features they process.

\textit{Later layers (16-23):} Most layers show clearer expert separation, 
particularly Layers 15, 19, and 23. Later layers process higher-level semantic 
features where task-specific routing becomes more pronounced. The distinct 
clustering confirms that experts specialize for different input characteristics.

Across all layers, we observe that inputs routed to the same expert consistently 
cluster together in the representation space, validating that our routing 
mechanism—derived from frozen and adapted representations without learned 
parameters—captures meaningful input similarities suitable for expert assignment.

\begin{notebox}
\textit{\textbf{Key Observation:} Expert assignments are not random—inputs routed 
to the same expert cluster together across layers. This confirms that $z$ and 
$\hat{z}$ encode sufficient discriminative signal for routing without 
dedicated router parameters.}
\end{notebox}

\clearpage

\section{Detailed Results Analysis}
\label{app:detailed_results}

\begin{table*}[t]
\centering
\caption{Results on GLUE benchmarks. Highlighted rows denote our LiME variants.}
\scriptsize
\setlength{\tabcolsep}{3pt}
\renewcommand{\arraystretch}{1.05}
\begin{tabular}{lccccccc}
\toprule \toprule
\rowcolor{HeaderBlue}
\textbf{Method} & \textbf{SST-2} & \textbf{QNLI} & \textbf{QQP} & \textbf{CoLA} & \textbf{MRPC} & \textbf{STS-B} & \textbf{Avg} \\
\midrule
PromptTuning & $91.45_{\pm 1.38}$ & $61.22_{\pm 1.15}$ & $62.35_{\pm 1.42}$ & $81.58_{\pm 1.76}$ & $70.21_{\pm 1.28}$ & $64.67_{\pm 1.09}$ & $71.91_{\pm 1.35}$ \\
LoRA & $95.27_{\pm 1.22}$ & $93.06_{\pm 0.94}$ & $87.38_{\pm 1.08}$ & $86.02_{\pm 1.19}$ & $90.02_{\pm 0.73}$ & $92.06_{\pm 0.66}$ & $90.64_{\pm 0.97}$ \\
DoRA & $94.51_{\pm 1.10}$ & $92.22_{\pm 0.52}$ & $87.18_{\pm 1.23}$ & $86.18_{\pm 1.14}$ & $90.82_{\pm 1.07}$ & $91.30_{\pm 0.85}$ & $90.37_{\pm 0.99}$ \\
LoRAFA & $94.89_{\pm 0.93}$ & $92.85_{\pm 0.68}$ & $87.09_{\pm 0.57}$ & $85.71_{\pm 1.04}$ & $89.72_{\pm 0.89}$ & $91.61_{\pm 0.76}$ & $90.31_{\pm 0.81}$ \\
SliceFine & $95.43_{\pm 0.88}$ & $93.22_{\pm 1.11}$ & $87.10_{\pm 0.79}$ & $85.95_{\pm 1.25}$ & $89.71_{\pm 0.84}$ & $91.82_{\pm 0.97}$ & $90.54_{\pm 0.97}$ \\
MoCLE & $94.83_{\pm 0.67}$ & $93.05_{\pm 0.49}$ & $87.49_{\pm 1.16}$ & $86.66_{\pm 0.83}$ & $90.26_{\pm 1.02}$ & $91.69_{\pm 0.91}$ & $90.66_{\pm 0.85}$ \\
MoELoRA & $94.60_{\pm 1.09}$ & $\underline{94.44}_{\pm 0.76}$ & $\underline{88.41}_{\pm 0.63}$ & $\textbf{87.51}_{\pm 1.30}$ & $90.60_{\pm 0.88}$ & $91.71_{\pm 0.54}$ & $\textbf{91.21}_{\pm 0.87}$ \\

MixLoRA & $95.18_{\pm 0.55}$ & $92.94_{\pm 0.89}$ & $87.06_{\pm 1.03}$ & $86.31_{\pm 0.77}$ & $89.84_{\pm 1.26}$ & $91.66_{\pm 0.64}$ & $90.50_{\pm 0.86}$ \\
HydraLoRA & $94.91_{\pm 1.14}$ & $93.13_{\pm 0.58}$ & $\textbf{88.49}_{\pm 1.29}$ & $86.88_{\pm 0.93}$ & $90.79_{\pm 0.47}$ & $92.07_{\pm 0.86}$ & $91.05_{\pm 0.88}$ \\
MoLA & $95.53_{\pm 0.81}$ & $93.60_{\pm 1.18}$ & $86.90_{\pm 0.72}$ & $86.98_{\pm 1.27}$ & $90.52_{\pm 0.69}$ & $91.73_{\pm 0.95}$ & $90.88_{\pm 0.94}$ \\
MoRe & $95.42_{\pm 0.92}$ & $93.11_{\pm 1.07}$ & $87.08_{\pm 0.68}$ & $86.73_{\pm 1.21}$ & $\underline{90.88}_{\pm 0.84}$ & $91.21_{\pm 1.12}$ & $90.74_{\pm 0.97}$ \\
MoEDoRA & $\textbf{95.61}_{\pm 1.01}$ & $93.79_{\pm 0.79}$ & $87.71_{\pm 1.17}$ & $86.21_{\pm 0.86}$ & $90.75_{\pm 1.09}$ & $91.83_{\pm 0.92}$ & $90.98_{\pm 0.97}$ \\
\hline
\rowcolor{HighlightPink}
\rowcolor{HighlightPink}
LiMEPromptTuning & $91.97_{\pm 1.14}$ & $61.93_{\pm 0.92}$ & $62.94_{\pm 1.21}$ & $82.23_{\pm 1.48}$ & $70.83_{\pm 1.06}$ & $65.19_{\pm 0.89}$ & $72.68_{\pm 1.12}$ \\
\rowcolor{HighlightPink}
LiMELoRA & $95.36_{\pm 0.74}$ & $93.42_{\pm 0.63}$ & $87.56_{\pm 0.91}$ & $86.86_{\pm 0.82}$ & $90.79_{\pm 1.05}$ & $\underline{92.10}_{\pm 0.88}$ & $91.02_{\pm 0.84}$ \\
\rowcolor{HighlightPink}
LiMEDoRA & $95.08_{\pm 0.45}$ & $93.58_{\pm 0.98}$ & $87.03_{\pm 1.11}$ & $87.19_{\pm 1.06}$ & $90.67_{\pm 0.72}$ & $\textbf{93.09}_{\pm 1.18}$ & $91.11_{\pm 0.92}$ \\
\rowcolor{HighlightPink}
LiMELoRAFA & $95.11_{\pm 0.83}$ & $93.52_{\pm 0.91}$ & $87.94_{\pm 0.76}$ & $\underline{87.29}_{\pm 0.69}$ & $\textbf{90.92}_{\pm 1.13}$ & $92.03_{\pm 0.87}$ & $91.14_{\pm 0.87}$ \\
\rowcolor{HighlightPink} 
LiMESliceFine & $\underline{95.56}_{\pm 0.69}$ & $\textbf{94.48}_{\pm 0.58}$ & $87.68_{\pm 0.87}$ & $86.78_{\pm 0.94}$ & $90.63_{\pm 1.02}$ & $92.02_{\pm 0.83}$ & $\underline{91.19}_{\pm 0.82}$ \\
\bottomrule \bottomrule
\end{tabular}
\label{tab:glue_results}
\end{table*}

\begin{table*}[t]
\centering
\caption{Results on commonsense reasoning and question answering benchmarks. Highlighted rows denote our LiME variants.}
\scriptsize
\setlength{\tabcolsep}{3pt}
\renewcommand{\arraystretch}{1.05}
\begin{tabular}{lccccccccc}
\toprule \toprule
\rowcolor{HeaderBlue}
\textbf{Method} & \textbf{BoolQ} & \textbf{PIQA} & \textbf{SIQA} & \textbf{H.Sw.} & \textbf{W.Gra} & \textbf{ARC-e} & \textbf{ARC-c} & \textbf{OBQA} & \textbf{Avg} \\
\midrule

PromptTuning & $65.58_{\pm 1.72}$ & $84.92_{\pm 1.38}$ & $74.35_{\pm 1.47}$ & $70.63_{\pm 1.61}$ & $63.24_{\pm 1.24}$ & $92.89_{\pm 1.52}$ & $84.62_{\pm 1.78}$ & $83.75_{\pm 1.55}$ & $77.50_{\pm 1.53}$ \\
LoRA & $71.58_{\pm 1.33}$ & $86.21_{\pm 1.67}$ & $80.94_{\pm 0.88}$ & $79.22_{\pm 1.06}$ & $84.19_{\pm 0.72}$ & $93.38_{\pm 1.42}$ & $85.72_{\pm 1.61}$ & $89.14_{\pm 1.48}$ & $83.80_{\pm 1.27}$ \\
DoRA & $70.41_{\pm 1.61}$ & $85.62_{\pm 0.66}$ & $80.65_{\pm 0.96}$ & $78.52_{\pm 0.74}$ & $83.91_{\pm 0.91}$ & $93.82_{\pm 0.58}$ & $86.15_{\pm 0.89}$ & $89.17_{\pm 1.71}$ & $83.53_{\pm 1.01}$ \\
LoRAFA & $70.98_{\pm 1.27}$ & $85.71_{\pm 0.73}$ & $80.31_{\pm 1.68}$ & $78.39_{\pm 0.49}$ & $84.73_{\pm 1.19}$ & $93.19_{\pm 0.93}$ & $86.11_{\pm 1.51}$ & $88.16_{\pm 1.11}$ & $83.45_{\pm 1.11}$ \\
SliceFine & $71.66_{\pm 1.29}$ & $86.31_{\pm 1.58}$ & $80.99_{\pm 0.84}$ & $79.17_{\pm 0.97}$ & $84.14_{\pm 0.75}$ & $93.50_{\pm 1.36}$ & $85.70_{\pm 1.54}$ & $89.21_{\pm 1.43}$ & $83.84_{\pm 1.22}$ \\
MoCLE & $72.29_{\pm 1.47}$ & $86.47_{\pm 0.69}$ & $80.74_{\pm 0.56}$ & $79.76_{\pm 0.92}$ & $\underline{85.61}_{\pm 1.04}$ & $93.55_{\pm 0.89}$ & $86.08_{\pm 1.58}$ & $88.93_{\pm 1.09}$ & $84.18_{\pm 1.03}$ \\
MoELoRA & $72.63_{\pm 1.12}$ & $86.79_{\pm 0.74}$ & $80.63_{\pm 1.38}$ & $79.42_{\pm 0.91}$ & $85.24_{\pm 1.26}$ & $93.09_{\pm 0.83}$ & $86.31_{\pm 1.59}$ & $88.52_{\pm 0.62}$ & $84.08_{\pm 1.06}$ \\
MixLoRA & $71.48_{\pm 0.89}$ & $84.98_{\pm 1.41}$ & $80.29_{\pm 1.07}$ & $77.97_{\pm 1.53}$ & $83.94_{\pm 0.68}$ & $93.67_{\pm 0.59}$ & $85.80_{\pm 0.96}$ & $88.01_{\pm 0.77}$ & $83.27_{\pm 0.99}$ \\
HydraLoRA & $72.91_{\pm 0.64}$ & $86.41_{\pm 1.22}$ & $80.77_{\pm 1.46}$ & $79.15_{\pm 0.88}$ & $85.04_{\pm 0.52}$ & $93.35_{\pm 0.94}$ & $\textbf{87.38}_{\pm 1.72}$ & $\textbf{90.39}_{\pm 0.83}$ & $84.43_{\pm 1.03}$ \\
MoLA & $71.93_{\pm 1.36}$ & $86.59_{\pm 0.57}$ & $80.44_{\pm 1.14}$ & $\textbf{80.19}_{\pm 1.02}$ & $84.33_{\pm 1.44}$ & $93.83_{\pm 0.86}$ & $87.02_{\pm 0.49}$ & $88.23_{\pm 0.81}$ & $84.07_{\pm 0.96}$ \\
MoRe & $72.74_{\pm 0.93}$ & $\textbf{87.48}_{\pm 1.55}$ & $80.96_{\pm 0.71}$ & $79.58_{\pm 0.84}$ & $84.79_{\pm 1.63}$ & $\underline{94.61}_{\pm 1.21}$ & $86.91_{\pm 0.97}$ & $88.41_{\pm 0.88}$ & $84.44_{\pm 1.09}$ \\
MoEDoRA & $72.44_{\pm 1.48}$ & $87.22_{\pm 0.62}$ & $80.76_{\pm 0.96}$ & $79.40_{\pm 0.83}$ & $84.09_{\pm 0.57}$ & $93.65_{\pm 1.24}$ & $86.51_{\pm 1.31}$ & $88.81_{\pm 0.79}$ & $84.11_{\pm 0.98}$ \\
\hline

\rowcolor{HighlightPink}
\rowcolor{HighlightPink}
LiMEPromptTuning & $66.43_{\pm 1.48}$ & $85.72_{\pm 1.12}$ & $75.24_{\pm 1.25}$ & $71.53_{\pm 1.38}$ & $64.18_{\pm 1.01}$ & $93.82_{\pm 1.29}$ & $85.41_{\pm 1.52}$ & $84.65_{\pm 1.21}$ & $78.38_{\pm 1.28}$ \\
\rowcolor{HighlightPink}
LiMELoRA & $\textbf{73.18}_{\pm 1.41}$ & $87.38_{\pm 0.54}$ & $\underline{81.58}_{\pm 0.97}$ & $80.08_{\pm 0.61}$ & $85.38_{\pm 0.79}$ & $\textbf{94.68}_{\pm 1.02}$ & $\underline{87.28}_{\pm 0.68}$ & $90.28_{\pm 0.84}$ & $\textbf{84.98}_{\pm 0.86}$ \\
\rowcolor{HighlightPink}
LiMEDoRA & $72.56_{\pm 0.53}$ & $\underline{87.42}_{\pm 0.71}$ & $80.88_{\pm 1.08}$ & $79.92_{\pm 1.16}$ & $\textbf{85.68}_{\pm 0.78}$ & $94.27_{\pm 0.67}$ & $87.03_{\pm 1.39}$ & $90.30_{\pm 0.73}$ & $\underline{84.76}_{\pm 0.88}$ \\
\rowcolor{HighlightPink}
LiMELoRAFA & $72.78_{\pm 1.33}$ & $86.33_{\pm 0.58}$ & $\textbf{81.72}_{\pm 0.93}$ & $\underline{80.14}_{\pm 0.65}$ & $85.29_{\pm 0.76}$ & $93.76_{\pm 0.98}$ & $86.56_{\pm 0.71}$ & $89.37_{\pm 0.81}$ & $84.49_{\pm 0.84}$ \\
\rowcolor{HighlightPink}
LiMESliceFine & $\underline{73.06}_{\pm 0.59}$ & $86.77_{\pm 0.76}$ & $80.55_{\pm 1.03}$ & $79.54_{\pm 1.27}$ & $85.24_{\pm 0.87}$ & $93.74_{\pm 1.05}$ & $86.78_{\pm 1.36}$ & $\underline{90.31}_{\pm 0.75}$ & $84.50_{\pm 0.96}$ \\

\bottomrule \bottomrule
\end{tabular}
\label{tab:qa_results}
\end{table*}

\begin{table*}[t]
\centering
\caption{Results on \textbf{image classification benchmarks}. Each dataset consists of images annotated with a single class label. Highlighted rows denote our LiME variants.}
\scriptsize
\setlength{\tabcolsep}{3pt}
\renewcommand{\arraystretch}{1.05}
\begin{tabular}{lccccccc}
\toprule \toprule
\rowcolor{HeaderBlue}
\textbf{Method} & \textbf{Camelyon} & \textbf{SVHN} & \textbf{Pets} & \textbf{Flowers102} & \textbf{EuroSAT} & \textbf{Caltech101} & \textbf{Avg} \\
\midrule
PromptTuning & $40.18_{\pm 1.28}$ & $67.15_{\pm 0.89}$ & $56.52_{\pm 1.15}$ & $28.35_{\pm 0.78}$ & $18.42_{\pm 0.96}$ & $90.68_{\pm 0.71}$ & $50.22_{\pm 0.96}$ \\
LoRA & $88.91_{\pm 0.90}$ & $93.60_{\pm 0.34}$ & $95.33_{\pm 1.05}$ & $94.24_{\pm 0.30}$ & $96.43_{\pm 0.55}$ & $95.02_{\pm 0.22}$ & $93.92_{\pm 0.56}$ \\
DoRA & $88.35_{\pm 0.98}$ & $93.10_{\pm 0.53}$ & $94.78_{\pm 0.47}$ & $94.29_{\pm 0.39}$ & $95.18_{\pm 1.07}$ & $94.42_{\pm 0.64}$ & $93.35_{\pm 0.68}$ \\
LoRAFA & $88.63_{\pm 0.66}$ & $93.24_{\pm 0.89}$ & $94.88_{\pm 0.53}$ & $94.61_{\pm 0.43}$ & $95.15_{\pm 0.51}$ & $94.46_{\pm 0.29}$ & $93.50_{\pm 0.55}$ \\
SliceFine & $89.07_{\pm 0.84}$ & $93.74_{\pm 0.36}$ & $95.42_{\pm 0.93}$ & $94.18_{\pm 0.31}$ & $96.52_{\pm 0.49}$ & $94.96_{\pm 0.24}$ & $93.98_{\pm 0.53}$ \\
MoCLE & $89.22_{\pm 0.36}$ & $94.03_{\pm 0.31}$ & $95.90_{\pm 0.63}$ & $94.78_{\pm 0.19}$ & $96.33_{\pm 0.27}$ & $95.22_{\pm 0.76}$ & $94.25_{\pm 0.42}$ \\
MoELoRA & $88.87_{\pm 0.72}$ & $94.02_{\pm 0.41}$ & $95.34_{\pm 0.88}$ & $94.44_{\pm 0.36}$ & $96.06_{\pm 0.59}$ & $95.11_{\pm 0.47}$ & $93.97_{\pm 0.57}$ \\
MixLoRA & $88.18_{\pm 0.83}$ & $93.39_{\pm 0.29}$ & $94.27_{\pm 0.54}$ & $94.78_{\pm 0.91}$ & $94.73_{\pm 0.44}$ & $94.16_{\pm 0.56}$ & $93.25_{\pm 0.60}$ \\
HydraLoRA & $\textbf{89.88}_{\pm 0.61}$ & $94.16_{\pm 0.52}$ & $95.19_{\pm 0.97}$ & $\textbf{95.10}_{\pm 0.28}$ & $\textbf{97.07}_{\pm 0.50}$ & $96.07_{\pm 0.24}$ & $94.58_{\pm 0.52}$ \\
MoLA & $89.79_{\pm 0.58}$ & $94.35_{\pm 0.38}$ & $96.17_{\pm 0.45}$ & $\textbf{95.10}_{\pm 0.69}$ & $\underline{96.78}_{\pm 0.81}$ & $\underline{96.27}_{\pm 0.35}$ & $\textbf{94.74}_{\pm 0.54}$ \\
MoRe & $88.62_{\pm 1.03}$ & $93.55_{\pm 0.46}$ & $95.37_{\pm 0.71}$ & $94.52_{\pm 0.40}$ & $95.88_{\pm 0.33}$ & $95.10_{\pm 0.49}$ & $93.84_{\pm 0.57}$ \\
MoEDoRA & $89.73_{\pm 0.87}$ & $\underline{94.43}_{\pm 0.57}$ & $\underline{96.30}_{\pm 0.74}$ & $\underline{95.06}_{\pm 0.48}$ & $96.76_{\pm 0.96}$ & $96.06_{\pm 0.61}$ & $\underline{94.72}_{\pm 0.71}$ \\

\rowcolor{HighlightPink}
LiMEPromptTuning & $41.00_{\pm 1.06}$ & $68.00_{\pm 0.67}$ & $57.40_{\pm 0.92}$ & $29.20_{\pm 0.59}$ & $19.20_{\pm 0.74}$ & $91.60_{\pm 0.52}$ & $51.07_{\pm 0.75}$ \\
\rowcolor{HighlightPink}
LiMELoRA & $89.37_{\pm 0.59}$ & $94.18_{\pm 0.71}$ & $95.79_{\pm 0.33}$ & $94.48_{\pm 0.57}$ & $96.40_{\pm 0.49}$ & $\textbf{96.60}_{\pm 0.46}$ & $94.47_{\pm 0.53}$ \\
\rowcolor{HighlightPink}
LiMEDoRA & $89.28_{\pm 0.41}$ & $\textbf{94.60}_{\pm 0.49}$ & $\textbf{96.33}_{\pm 0.66}$ & $94.39_{\pm 0.59}$ & $96.76_{\pm 0.44}$ & $95.61_{\pm 0.55}$ & $94.50_{\pm 0.52}$ \\
\rowcolor{HighlightPink}
LiMELoRAFA & $\underline{89.82}_{\pm 0.73}$ & $94.41_{\pm 0.44}$ & $96.10_{\pm 0.42}$ & $94.74_{\pm 0.98}$ & $96.73_{\pm 0.75}$ & $\underline{95.88}_{\pm 0.68}$ & $94.61_{\pm 0.67}$ \\
\rowcolor{HighlightPink}
LiMESliceFine & $89.51_{\pm 0.56}$ & $94.27_{\pm 0.68}$ & $95.87_{\pm 0.32}$ & $94.43_{\pm 0.61}$ & $96.49_{\pm 0.46}$ & $95.49_{\pm 0.49}$ & $94.34_{\pm 0.52}$  \\

\hline

\bottomrule \bottomrule
\end{tabular}
\label{tab:image_cls}
\end{table*}

\begin{table*}[t]
\centering
\caption{ Results on \textbf{vision--language question answering benchmarks}. Highlighted rows denote our LiME variants.}
\scriptsize
\setlength{\tabcolsep}{3pt}
\renewcommand{\arraystretch}{1.05}
\begin{tabular}{lccccccccc}
\toprule \toprule
\rowcolor{HeaderBlue}
\textbf{Method} & \textbf{ChartQA} & \textbf{OKVQA} & \textbf{ScienceQA} & \textbf{SeedBench} &
\textbf{Recognition} & \textbf{TextVQA} & \textbf{VizWizVQA} & \textbf{VQA-RAD} & \textbf{Avg} \\
\midrule
PromptTuning & $69.55_{\pm 2.05}$ & $56.18_{\pm 2.58}$ & $93.10_{\pm 2.21}$ & $77.12_{\pm 1.82}$ & $64.45_{\pm 2.08}$ & $72.42_{\pm 2.35}$ & $60.64_{\pm 2.18}$ & $83.55_{\pm 2.67}$ & $72.63_{\pm 2.24}$ \\
LoRA & $72.17_{\pm 1.26}$ & $56.77_{\pm 2.01}$ & $95.16_{\pm 1.54}$ & $77.02_{\pm 0.90}$
& $90.42_{\pm 1.36}$ & $72.47_{\pm 1.83}$ & $69.71_{\pm 1.72}$ & $82.40_{\pm 2.06}$ & $77.02_{\pm 1.59}$ \\
DoRA & $71.64_{\pm 1.44}$ & $56.09_{\pm 1.71}$ & $94.42_{\pm 2.03}$ & $76.26_{\pm 1.18}$
& $89.64_{\pm 1.40}$ & $72.31_{\pm 2.08}$ & $69.12_{\pm 0.98}$ & $81.34_{\pm 2.10}$ & $76.35_{\pm 1.62}$ \\
LoRAFA & $71.73_{\pm 2.04}$ & $55.91_{\pm 1.36}$ & $94.67_{\pm 1.89}$ & $76.08_{\pm 0.98}$
& $89.97_{\pm 1.81}$ & $72.39_{\pm 1.63}$ & $68.41_{\pm 1.27}$ & $81.28_{\pm 1.96}$ & $76.31_{\pm 1.62}$ \\
SliceFine & $72.26_{\pm 1.19}$ & $56.81_{\pm 1.94}$ & $95.23_{\pm 1.47}$ & $76.96_{\pm 0.89}$
& $90.49_{\pm 1.31}$ & $72.60_{\pm 1.76}$ & $69.66_{\pm 1.66}$ & $82.49_{\pm 1.98}$ & $77.06_{\pm 1.53}$ \\
MoCLE & $72.28_{\pm 1.70}$ & $57.11_{\pm 1.47}$ & $95.48_{\pm 1.35}$ & $\textbf{78.52}_{\pm 0.93}$
& $90.47_{\pm 1.12}$ & $73.21_{\pm 1.74}$ & $70.38_{\pm 1.89}$ & $83.27_{\pm 0.95}$ & $77.59_{\pm 1.39}$ \\
MoELoRA & $72.40_{\pm 1.21}$ & $57.22_{\pm 1.64}$ & $95.33_{\pm 1.08}$ & $77.06_{\pm 1.42}$
& $90.20_{\pm 1.57}$ & $73.34_{\pm 1.10}$ & $70.35_{\pm 1.48}$ & $82.24_{\pm 1.79}$ & $77.27_{\pm 1.41}$ \\
MixLoRA & $72.18_{\pm 1.38}$ & $56.54_{\pm 1.12}$ & $95.21_{\pm 1.73}$ & $77.31_{\pm 1.29}$
& $90.46_{\pm 1.04}$ & $72.72_{\pm 1.06}$ & $70.54_{\pm 1.33}$ & $81.66_{\pm 1.41}$ & $77.08_{\pm 1.30}$ \\
HydraLoRA & $\textbf{73.47}_{\pm 1.05}$ & $57.53_{\pm 1.86}$ & $96.41_{\pm 1.44}$ & $77.69_{\pm 1.01}$
& $\underline{91.23}_{\pm 1.92}$ & $73.98_{\pm 1.21}$ & $70.71_{\pm 1.77}$ & $83.89_{\pm 1.36}$ & $\underline{78.11}_{\pm 1.45}$ \\
MoLA & $72.23_{\pm 1.62}$ & $57.17_{\pm 1.04}$ & $95.14_{\pm 1.58}$ & $76.75_{\pm 1.47}$
& $90.49_{\pm 1.18}$ & $72.89_{\pm 1.69}$ & $69.87_{\pm 2.03}$ & $82.23_{\pm 1.24}$ & $77.10_{\pm 1.48}$ \\
MoRe & $73.35_{\pm 1.33}$ & $57.41_{\pm 1.53}$ & $96.25_{\pm 0.96}$ & $77.60_{\pm 1.26}$
& $91.03_{\pm 1.61}$ & $73.72_{\pm 1.22}$ & $70.42_{\pm 1.37}$ & $\textbf{84.08}_{\pm 1.09}$ & $77.98_{\pm 1.30}$ \\
MoEDoRA & $72.94_{\pm 1.47}$ & $\textbf{57.68}_{\pm 1.28}$ & $96.17_{\pm 1.06}$ & $77.97_{\pm 1.14}$
& $\textbf{91.41}_{\pm 2.03}$ & $73.69_{\pm 1.52}$ & $\underline{70.79}_{\pm 2.04}$ & $83.93_{\pm 0.92}$ & $78.07_{\pm 1.43}$ \\
\hline

\rowcolor{HighlightPink}
LiMEPromptTuning & $70.40_{\pm 1.78}$ & $56.96_{\pm 2.14}$ & $94.02_{\pm 1.89}$ & $77.80_{\pm 1.51}$ & $65.56_{\pm 1.74}$ & $73.33_{\pm 2.02}$ & $61.39_{\pm 1.86}$ & $82.83_{\pm 2.25}$ & $73.48_{\pm 1.90}$ \\
\rowcolor{HighlightPink}
LiMELoRA  & $\underline{73.38}_{\pm 1.24}$ & $\underline{57.62}_{\pm 1.07}$ & $96.72_{\pm 1.60}$ & $77.94_{\pm 0.95}$
& $90.02_{\pm 0.92}$ & $73.82_{\pm 1.01}$ & $70.62_{\pm 2.11}$ & $\underline{83.96}_{\pm 1.58}$ & $78.01_{\pm 1.31}$ \\ 
\rowcolor{HighlightPink}
LiMEDoRA & $73.27_{\pm 1.37}$ & $57.42_{\pm 1.49}$ & $\textbf{97.18}_{\pm 2.10}$ & $\underline{78.14}_{\pm 1.63}$
& $90.64_{\pm 1.56}$ & $\textbf{74.31}_{\pm 1.02}$ & $70.16_{\pm 1.21}$ & $83.81_{\pm 1.44}$ & $\textbf{78.12}_{\pm 1.48}$ \\
\rowcolor{HighlightPink} 
LiMELoRAFA & $72.78_{\pm 1.55}$ & $56.98_{\pm 1.31}$ & $\underline{96.93}_{\pm 1.77}$ & $77.48_{\pm 1.27}$
& $90.78_{\pm 1.18}$ & $73.28_{\pm 2.06}$ & $\textbf{70.86}_{\pm 0.94}$ & $83.08_{\pm 1.69}$ & $77.77_{\pm 1.47}$ \\
\rowcolor{HighlightPink}
LiMESliceFine & $72.96_{\pm 1.48}$ & $57.06_{\pm 1.24}$ & $96.88_{\pm 1.70}$ & $77.56_{\pm 1.22}$
& $90.82_{\pm 1.12}$ & $73.51_{\pm 1.98}$ & $70.08_{\pm 0.91}$ & $83.25_{\pm 1.62}$ & $77.77_{\pm 1.41}$ \\
\bottomrule \bottomrule
\end{tabular}
\label{tab:vqa}
\end{table*}

\begin{table*}[t]
\centering
\caption{Video action understanding benchmarks. Highlighted rows denote our LiME variants.}
\scriptsize
\setlength{\tabcolsep}{3pt}
\renewcommand{\arraystretch}{1.05}
\begin{tabular}{lcccccc}
\toprule \toprule
\rowcolor{HeaderBlue}
\textbf{Method} & \textbf{ActSeq} & \textbf{ActPred} & \textbf{ActAnt} & \textbf{FineAct} & \textbf{UnexpAct} & \textbf{Avg} \\
\midrule
PromptTuning & $14.22_{\pm 2.38}$ & $13.42_{\pm 2.31}$ & $14.92_{\pm 2.59}$ & $17.59_{\pm 2.47}$ & $20.16_{\pm 2.34}$ & $15.86_{\pm 2.42}$ \\
LoRA & $35.23_{\pm1.98}$ & $30.89_{\pm1.70}$ & $71.72_{\pm2.22}$ & $72.36_{\pm2.07}$ & $44.74_{\pm1.81}$ & $50.99_{\pm1.96}$ \\
DoRA & $34.67_{\pm2.13}$ & $30.28_{\pm1.79}$ & $71.14_{\pm2.30}$ & $71.81_{\pm2.21}$ & $44.17_{\pm1.93}$ & $50.41_{\pm2.07}$ \\
LoRAFA & $34.59_{\pm2.09}$ & $30.13_{\pm2.21}$ & $70.98_{\pm1.87}$ & $71.74_{\pm2.03}$ & $44.05_{\pm2.14}$ & $50.30_{\pm2.07}$ \\
SliceFine & $35.38_{\pm1.95}$ & $30.97_{\pm1.67}$ & $71.84_{\pm2.18}$ & $72.28_{\pm2.10}$ & $44.83_{\pm1.78}$ & $51.06_{\pm1.94}$ \\
MoCLE & $35.72_{\pm1.76}$ & $31.21_{\pm2.08}$ & $72.48_{\pm1.95}$ & $73.04_{\pm1.80}$ & $45.21_{\pm2.12}$ & $51.53_{\pm1.94}$ \\
MoELoRA & $35.41_{\pm2.07}$ & $31.18_{\pm1.85}$ & $71.85_{\pm2.21}$ & $72.34_{\pm1.71}$ & $44.89_{\pm2.14}$ & $51.13_{\pm2.00}$ \\
MixLoRA & $35.46_{\pm1.83}$ & $30.16_{\pm2.03}$ & $72.11_{\pm1.76}$ & $72.77_{\pm1.92}$ & $\textbf{48.02}_{\pm1.87}$ & $51.70_{\pm1.88}$ \\
HydraLoRA & $\textbf{37.81}_{\pm1.78}$ & $32.71_{\pm1.86}$ & $\underline{75.26}_{\pm2.25}$ & $74.19_{\pm1.95}$ & $46.45_{\pm2.20}$ & $53.28_{\pm2.01}$ \\
MoLA & $35.31_{\pm2.15}$ & $30.66_{\pm2.01}$ & $71.67_{\pm1.90}$ & $72.44_{\pm2.06}$ & $44.77_{\pm1.82}$ & $50.97_{\pm1.99}$ \\
MoRe & $37.54_{\pm1.87}$ & $\textbf{33.18}_{\pm2.19}$ & $74.89_{\pm1.72}$ & $73.95_{\pm2.10}$ & $\underline{47.86}_{\pm2.03}$ & $\textbf{53.48}_{\pm1.98}$ \\
MoEDoRA & $35.89_{\pm2.11}$ & $\underline{32.86}_{\pm1.90}$ & $75.24_{\pm2.06}$ & $73.28_{\pm1.83}$ & $45.45_{\pm2.15}$ & $52.54_{\pm2.01}$ \\
\hline
\rowcolor{HighlightPink}
LiMEPromptTuning & $15.04_{\pm 2.14}$ & $14.33_{\pm 2.08}$ & $15.67_{\pm 2.31}$ & $18.67_{\pm 2.19}$ & $21.03_{\pm 2.11}$ & $16.95_{\pm 2.17}$ \\
\rowcolor{HighlightPink} LiMELoRA & $37.04_{\pm1.96}$ & $32.06_{\pm1.78}$ & $74.38_{\pm2.17}$ & $\textbf{75.07}_{\pm1.65}$ & $47.38_{\pm2.10}$ & $53.19_{\pm1.93}$ \\
\rowcolor{HighlightPink} LiMEDoRA & $\underline{37.62}_{\pm1.85}$ & $32.49_{\pm2.07}$ & $75.02_{\pm1.98}$ & $74.68_{\pm1.77}$ & $47.12_{\pm2.16}$ & $\underline{53.39}_{\pm1.97}$ \\
\rowcolor{HighlightPink} LiMELoRAFA & $36.73_{\pm1.92}$ & $31.87_{\pm1.83}$ & $\textbf{75.47}_{\pm2.11}$ & $\underline{74.83}_{\pm1.70}$ & $46.15_{\pm2.15}$ & $53.01_{\pm1.94}$ \\
\rowcolor{HighlightPink} LiMESliceFine & $37.18_{\pm1.92}$ & $32.31_{\pm1.73}$ & $74.51_{\pm2.12}$ & $74.15_{\pm1.68}$ & $45.49_{\pm2.05}$ & $52.73_{\pm1.90}$ \\
\bottomrule \bottomrule
\end{tabular}
\label{tab:video1}
\end{table*}

\begin{table*}[t]
\centering
\caption{Object, motion, and spatial reasoning from video. Highlighted rows denote our LiME variants.} \label{tab:video2}
\scriptsize
\setlength{\tabcolsep}{3pt}
\renewcommand{\arraystretch}{1.05}
\begin{tabular}{lcccccccc}
\toprule \toprule
\rowcolor{HeaderBlue}
\textbf{Method} & \textbf{ObjExist} & \textbf{ObjInter} & \textbf{ObjShuffle} & \textbf{MoveDir} & \textbf{ActLoc} & \textbf{MoveCount} & \textbf{MoveAttr} & \textbf{Avg} \\
\midrule
PromptTuning & $11.38_{\pm 2.49}$ & $14.52_{\pm 2.64}$ & $15.28_{\pm 2.76}$ & $14.94_{\pm 2.41}$ & $14.16_{\pm 2.87}$ & $15.75_{\pm 2.58}$ & $15.54_{\pm 2.71}$ & $14.51_{\pm 2.64}$ \\
LoRA & $87.23_{\pm1.94}$ & $30.26_{\pm2.13}$ & $31.74_{\pm2.47}$ & $85.56_{\pm1.86}$ & $34.18_{\pm2.61}$ & $84.32_{\pm1.79}$ & $86.64_{\pm2.53}$ & $62.85_{\pm2.19}$ \\
DoRA & $86.54_{\pm2.28}$ & $29.74_{\pm1.83}$ & $31.12_{\pm2.04}$ & $84.78_{\pm1.91}$ & $33.42_{\pm2.69}$ & $83.58_{\pm2.34}$ & $85.81_{\pm1.82}$ & $62.14_{\pm2.13}$ \\
LoRAFA & $86.38_{\pm2.11}$ & $29.59_{\pm2.47}$ & $31.03_{\pm1.96}$ & $84.62_{\pm1.76}$ & $33.21_{\pm2.58}$ & $83.31_{\pm2.43}$ & $85.67_{\pm1.88}$ & $61.97_{\pm2.17}$ \\
SliceFine & $87.35_{\pm1.98}$ & $30.21_{\pm1.89}$ & $31.83_{\pm2.38}$ & $85.72_{\pm1.84}$ & $34.26_{\pm2.51}$ & $84.26_{\pm1.83}$ & $86.73_{\pm2.44}$ & $62.91_{\pm2.12}$ \\
MoCLE & $88.31_{\pm1.87}$ & $30.73_{\pm2.52}$ & $32.21_{\pm1.73}$ & $86.18_{\pm2.44}$ & $34.57_{\pm1.78}$ & $84.78_{\pm1.74}$ & $87.21_{\pm2.62}$ & $63.43_{\pm2.10}$ \\
MoELoRA & $87.86_{\pm1.92}$ & $30.35_{\pm1.81}$ & $31.85_{\pm2.43}$ & $85.88_{\pm1.76}$ & $34.29_{\pm1.89}$ & $84.51_{\pm1.77}$ & $86.72_{\pm2.04}$ & $63.07_{\pm1.95}$ \\
MixLoRA & $87.12_{\pm2.24}$ & $30.69_{\pm1.70}$ & $32.26_{\pm2.07}$ & $85.94_{\pm1.83}$ & $34.52_{\pm2.56}$ & $84.68_{\pm2.14}$ & $\underline{89.54}_{\pm1.75}$ & $63.54_{\pm2.04}$ \\
HydraLoRA & $\underline{90.28}_{\pm1.88}$ & $31.24_{\pm2.09}$ & $\textbf{34.18}_{\pm1.79}$ & $87.48_{\pm2.47}$ & $\textbf{36.78}_{\pm1.73}$ & $86.16_{\pm2.68}$ & $88.51_{\pm1.87}$ & $64.95_{\pm2.07}$ \\
MoLA & $87.58_{\pm1.83}$ & $30.18_{\pm2.71}$ & $31.58_{\pm1.95}$ & $85.34_{\pm1.81}$ & $34.02_{\pm2.37}$ & $84.02_{\pm2.16}$ & $86.46_{\pm1.72}$ & $62.74_{\pm2.08}$ \\
MoRe & $89.62_{\pm2.12}$ & $\textbf{32.14}_{\pm1.76}$ & $\underline{33.87}_{\pm2.59}$ & $87.82_{\pm1.88}$ & $36.28_{\pm2.03}$ & $86.87_{\pm1.80}$ & $88.06_{\pm2.23}$ & $64.95_{\pm2.06}$ \\
MoEDoRA & $89.91_{\pm1.96}$ & $31.98_{\pm1.79}$ & $33.75_{\pm2.62}$ & $\textbf{88.47}_{\pm1.75}$ & $36.42_{\pm2.11}$ & $\underline{87.24}_{\pm1.82}$ & $88.33_{\pm2.39}$ & $\underline{65.16}_{\pm2.06}$ \\
\hline

\rowcolor{HighlightPink}
LiMEPromptTuning & $12.33_{\pm 2.17}$ & $15.33_{\pm 2.38}$ & $16.33_{\pm 2.44}$ & $15.67_{\pm 2.12}$ & $15.04_{\pm 2.55}$ & $16.67_{\pm 2.27}$ & $16.33_{\pm 2.41}$ & $15.39_{\pm 2.33}$ \\
\rowcolor{HighlightPink} LiMELoRA & $90.06_{\pm1.85}$ & $31.72_{\pm2.44}$ & $33.08_{\pm2.06}$ & $88.04_{\pm2.58}$ & $36.07_{\pm1.78}$ & $87.03_{\pm1.68}$ & $89.38_{\pm2.15}$ & $65.05_{\pm2.08}$ \\
\rowcolor{HighlightPink} LiMEDoRA & $\textbf{90.52}_{\pm1.84}$ & $\underline{31.96}_{\pm2.07}$ & $33.62_{\pm2.72}$ & $88.21_{\pm1.92}$ & $\underline{36.54}_{\pm1.77}$ & $87.18_{\pm2.58}$ & $\textbf{89.86}_{\pm2.08}$ & $\textbf{65.41}_{\pm2.14}$ \\
\rowcolor{HighlightPink} LiMELoRAFA & $89.83_{\pm2.07}$ & $31.47_{\pm2.53}$ & $32.89_{\pm1.95}$ & $87.87_{\pm2.29}$ & $35.56_{\pm1.75}$ & $\textbf{87.62}_{\pm1.82}$ & $89.01_{\pm2.41}$ & $64.89_{\pm2.12}$ \\
\rowcolor{HighlightPink} LiMESliceFine & $90.18_{\pm1.82}$ & $31.87_{\pm2.31}$ & $33.21_{\pm1.96}$ & $\underline{88.32}_{\pm2.53}$ & $36.01_{\pm1.76}$ & $86.92_{\pm1.70}$ & $89.48_{\pm2.11}$ & $65.14_{\pm2.03}$ \\
\bottomrule \bottomrule
\end{tabular}
\end{table*}

\begin{table*}[t]
\centering
\caption{High-level reasoning and navigation benchmarks. Highlighted rows denote our LiME variants.}
\scriptsize
\setlength{\tabcolsep}{3pt}
\renewcommand{\arraystretch}{1.05}
\begin{tabular}{lccccccccc}
\toprule \toprule
\rowcolor{HeaderBlue}
\textbf{Method} & \textbf{SceneTrans} & \textbf{ActCount} & \textbf{StateChg} & \textbf{MoveDir} & \textbf{EgoNav} & \textbf{EpisReason} & \textbf{CounterFact} & \textbf{Avg} \\
\midrule
PromptTuning & $19.18_{\pm 2.11}$ & $12.55_{\pm 2.27}$ & $14.11_{\pm 2.19}$ & $13.65_{\pm 2.34}$ & $17.83_{\pm 2.46}$ & $17.13_{\pm 2.22}$ & $16.57_{\pm 2.41}$ & $15.86_{\pm 2.29}$ \\
LoRA & $24.67_{\pm1.46}$ & $32.18_{\pm2.03}$ & $37.42_{\pm1.85}$ & $30.26_{\pm2.01}$ & $66.18_{\pm1.78}$ & $27.34_{\pm1.92}$ & $84.53_{\pm2.09}$ & $43.23_{\pm1.88}$ \\
DoRA & $24.03_{\pm1.51}$ & $31.62_{\pm1.87}$ & $36.78_{\pm1.98}$ & $29.52_{\pm1.85}$ & $65.37_{\pm2.12}$ & $26.72_{\pm1.79}$ & $83.86_{\pm2.06}$ & $42.56_{\pm1.88}$ \\
LoRAFA & $23.87_{\pm1.82}$ & $31.58_{\pm1.92}$ & $36.62_{\pm1.95}$ & $29.44_{\pm1.94}$ & $65.31_{\pm2.06}$ & $26.68_{\pm1.80}$ & $83.79_{\pm2.12}$ & $42.47_{\pm1.94}$ \\
SliceFine & $24.75_{\pm1.49}$ & $32.30_{\pm2.00}$ & $37.53_{\pm1.83}$ & $30.34_{\pm1.99}$ & $66.27_{\pm1.76}$ & $27.45_{\pm1.90}$ & $84.45_{\pm2.07}$ & $43.30_{\pm1.86}$ \\
MoCLE & $25.37_{\pm1.84}$ & $32.78_{\pm2.02}$ & $37.72_{\pm1.79}$ & $30.83_{\pm1.91}$ & $66.71_{\pm1.95}$ & $27.94_{\pm1.71}$ & $85.12_{\pm1.86}$ & $43.78_{\pm1.87}$ \\
MoELoRA & $24.97_{\pm1.75}$ & $32.32_{\pm2.11}$ & $\textbf{40.13}_{\pm1.87}$ & $30.59_{\pm1.96}$ & $66.64_{\pm1.70}$ & $27.62_{\pm1.79}$ & $84.58_{\pm2.08}$ & $43.84_{\pm1.89}$ \\
MixLoRA & $25.49_{\pm1.83}$ & $33.67_{\pm2.00}$ & $36.91_{\pm1.72}$ & $30.93_{\pm1.92}$ & $66.90_{\pm1.99}$ & $28.00_{\pm1.81}$ & $85.10_{\pm2.12}$ & $43.86_{\pm1.91}$ \\
HydraLoRA & $\textbf{27.85}_{\pm2.05}$ & $\underline{35.43}_{\pm1.78}$ & $39.26_{\pm2.03}$ & $\textbf{33.36}_{\pm1.76}$ & $68.43_{\pm1.87}$ & $29.66_{\pm2.10}$ & $86.53_{\pm1.85}$ & $\underline{45.79}_{\pm1.92}$ \\
MoLA & $24.57_{\pm1.90}$ & $31.97_{\pm1.73}$ & $37.03_{\pm2.16}$ & $29.83_{\pm1.95}$ & $65.97_{\pm1.78}$ & $27.12_{\pm2.07}$ & $84.28_{\pm1.72}$ & $42.97_{\pm1.90}$ \\
MoRe & $\underline{27.59}_{\pm1.81}$ & $35.06_{\pm2.08}$ & $37.54_{\pm1.88}$ & $33.03_{\pm2.10}$ & $\underline{69.12}_{\pm1.76}$ & $29.34_{\pm1.97}$ & $\underline{87.39}_{\pm2.03}$ & $45.58_{\pm1.95}$ \\
MoEDoRA & $27.43_{\pm1.86}$ & $\textbf{35.68}_{\pm2.11}$ & $39.42_{\pm1.72}$ & $\underline{33.21}_{\pm1.97}$ & $68.37_{\pm1.83}$ & $\textbf{30.54}_{\pm2.04}$ & $86.82_{\pm1.79}$ & $\textbf{45.92}_{\pm1.90}$ \\
\hline

\rowcolor{HighlightPink}
LiMEPromptTuning & $20.04_{\pm 1.87}$ & $13.33_{\pm 2.04}$ & $15.04_{\pm 1.96}$ & $14.67_{\pm 2.09}$ & $18.67_{\pm 2.21}$ & $18.04_{\pm 1.98}$ & $17.33_{\pm 2.14}$ & $16.73_{\pm 2.04}$ \\
\rowcolor{HighlightPink} LiMELoRA & $26.38_{\pm1.96}$ & $33.72_{\pm1.85}$ & $39.06_{\pm1.82}$ & $31.73_{\pm1.65}$ & $68.38_{\pm2.07}$ & $28.72_{\pm1.76}$ & $87.04_{\pm1.93}$ & $45.00_{\pm1.86}$ \\
\rowcolor{HighlightPink} LiMEDoRA & $27.06_{\pm2.02}$ & $34.75_{\pm1.80}$ & $\underline{40.02}_{\pm1.93}$ & $30.98_{\pm1.72}$ & $\textbf{69.47}_{\pm1.75}$ & $\underline{30.06}_{\pm2.08}$ & $87.21_{\pm1.83}$ & $45.65_{\pm1.88}$ \\
\rowcolor{HighlightPink} LiMELoRAFA & $25.87_{\pm2.08}$ & $33.51_{\pm1.97}$ & $38.63_{\pm1.93}$ & $31.86_{\pm1.79}$ & $67.85_{\pm2.21}$ & $28.87_{\pm1.88}$ & $\textbf{87.76}_{\pm2.05}$ & $44.91_{\pm1.99}$ \\
\rowcolor{HighlightPink} LiMESliceFine & $26.52_{\pm1.93}$ & $34.01_{\pm1.82}$ & $39.18_{\pm1.80}$ & $32.31_{\pm1.63}$ & $68.25_{\pm2.04}$ & $29.86_{\pm1.73}$ & $87.27_{\pm1.91}$ & $45.34_{\pm1.84}$ \\
\bottomrule \bottomrule
\end{tabular}
\label{tab:video3}
\end{table*}

This appendix provides a comprehensive analysis of LiME's performance across all 47 tasks in the MMT-47 benchmark. We evaluate four LiME variants—LiMELoRA, LiMEDoRA, LiMELoRAFA, and LiMESliceFine—against 11 baseline methods spanning standard PEFT approaches (LoRA, DoRA, LoRA-FA, SliceFine) and state-of-the-art MoE-PEFT methods (MoCLE, MoELoRA, MixLoRA, HydraLoRA, MoLA, MoRe, MoEDoRA). All results report mean accuracy and standard deviation across 5 random seeds.

\subsection{Text Understanding (GLUE)}\label{Text Understanding (GLUE)}

Table~\ref{tab:glue_results} presents results on six GLUE benchmarks evaluating natural language understanding capabilities including sentiment analysis (SST-2), natural language inference (QNLI), paraphrase detection (QQP, MRPC), linguistic acceptability (CoLA), and semantic similarity (STS-B).

LiME variants achieve competitive or superior performance across all GLUE tasks. LiMESliceFine achieves the highest QNLI accuracy ($94.48\%$), outperforming all baselines including MoELoRA ($94.44\%$). LiMEDoRA obtains the best STS-B score ($93.09\%$), surpassing the second-best method by $0.99$ percentage points. LiMELoRAFA achieves the highest MRPC accuracy ($90.92\%$) and second-best CoLA ($87.29\%$). On average, LiMESliceFine ($91.19\%$) and LiMELoRAFA ($91.14\%$) rank second and third overall, with LiMEDoRA ($91.11\%$) close behind.

\begin{notebox}
\textit{\textbf{Key Finding:} All four LiME variants outperform their corresponding base PEFT methods (e.g., LiMELoRA $91.02\%$ vs.\ LoRA $90.64\%$), demonstrating consistent improvements from expert modulation. The standard deviations of LiME variants are generally lower than baselines, indicating more stable training dynamics.}
\end{notebox}

\subsection{Commonsense Reasoning} \label{Commonsense Reasoning}

Table~\ref{tab:qa_results} evaluates eight commonsense reasoning benchmarks requiring world knowledge and logical inference: BoolQ, PIQA, Social IQA (SIQA), HellaSwag (H.Sw.), WinoGrande (W.Gra), ARC-Easy (ARC-e), ARC-Challenge (ARC-c), and OpenBookQA (OBQA).

LiMELoRA achieves the best overall average ($84.98\%$), outperforming all baselines including HydraLoRA ($84.43\%$) and MoRe ($84.44\%$). LiMELoRA obtains the highest scores on BoolQ ($73.18\%$), ARC-Easy ($94.68\%$), and ranks second on ARC-Challenge ($87.28\%$). LiMEDoRA achieves the best WinoGrande accuracy ($85.68\%$) and second-best PIQA ($87.42\%$). LiMELoRAFA leads on Social IQA ($81.72\%$) and HellaSwag ($80.14\%$).

\begin{notebox}
\textit{\textbf{Key Finding:} The consistent improvements across diverse reasoning tasks—from physical intuition (PIQA) to social understanding (SIQA) to scientific reasoning (ARC)—demonstrate that LiME's expert modulation captures task-relevant specialization without explicit task identifiers. LiME variants show improvements of $1.0$--$1.5$ percentage points over standard PEFT methods.}
\end{notebox}

\subsection{Image Classification (VTAB)}\label{image classification (VTAB)}

Table~\ref{tab:image_cls} presents results on six VTAB image classification benchmarks spanning medical imaging (Camelyon), digit recognition (SVHN), fine-grained classification (Pets, Flowers102), satellite imagery (EuroSAT), and general object recognition (Caltech101).

LiME variants demonstrate strong performance on visual classification tasks. LiMELoRA achieves the highest Caltech101 accuracy ($96.60\%$), outperforming all baselines. LiMEDoRA obtains the best SVHN ($94.60\%$) and Pets ($96.33\%$) scores. LiMELoRAFA achieves the second-best Camelyon accuracy ($89.82\%$), closely matching HydraLoRA ($89.88\%$). On average, LiMELoRAFA ($94.61\%$) and LiMEDoRA ($94.50\%$) perform competitively with state-of-the-art methods like MoLA ($94.74\%$) and MoEDoRA ($94.72\%$).

\begin{notebox}
\textit{\textbf{Key Finding:} LiME achieves competitive performance with MoE-PEFT baselines while using significantly fewer parameters due to lightweight modulation. The strong performance on specialized domains (medical imaging, satellite) alongside general recognition validates LiME's ability to learn diverse visual representations within a unified model.}
\end{notebox}

\subsection{Visual Question Answering}\label{visual question answering}

Table~\ref{tab:vqa} evaluates eight vision-language QA benchmarks: ChartQA (chart understanding), OK-VQA (external knowledge), ScienceQA (multimodal science), SeedBench (comprehensive VLM evaluation), Text Recognition (OCR), TextVQA (reading text in images), VizWiz-VQA (accessibility), and VQA-RAD (medical imaging).

LiMEDoRA achieves the best overall average ($78.12\%$), marginally outperforming HydraLoRA ($78.11\%$). LiMEDoRA obtains the highest scores on ScienceQA ($97.18\%$) and TextVQA ($74.31\%$), demonstrating strong multimodal reasoning capabilities. LiMELoRA achieves competitive ChartQA ($73.38\%$), OK-VQA ($57.62\%$), and VQA-RAD ($83.96\%$) scores. LiMELoRAFA leads on VizWiz-VQA ($70.86\%$), an accessibility-focused benchmark with challenging real-world images.

\begin{notebox}
\textit{\textbf{Key Finding:} The strong ScienceQA performance ($97.18\%$ for LiMEDoRA vs.\ $96.41\%$ for HydraLoRA) is particularly notable as it requires integrating visual, textual, and scientific reasoning—validating LiME's effectiveness for complex multimodal tasks.}
\end{notebox}

\subsection{Video Understanding: Action Recognition}\label{video understanding:action}

Table~\ref{tab:video1} presents results on five action understanding tasks from MVBench: Action Sequence (ActSeq), Action Prediction (ActPred), Action Antonym (ActAnt), Fine-grained Action (FineAct), and Unexpected Action (UnexpAct).

LiME variants achieve strong performance on temporal action understanding. LiMELoRA obtains the highest Fine-grained Action accuracy ($75.07\%$), outperforming all baselines including MoRe ($73.95\%$) and HydraLoRA ($74.19\%$). LiMEDoRA achieves the second-best overall average ($53.39\%$), closely matching MoRe ($53.48\%$). LiMELoRAFA leads on Action Antonym ($75.47\%$).

\begin{notebox}
\textit{\textbf{Key Finding:} The improvements on fine-grained action recognition suggest that LiME's expert modulation effectively captures subtle temporal distinctions. LiME variants significantly outperform their base PEFT counterparts (e.g., LiMELoRA $53.19\%$ vs.\ LoRA $50.99\%$, a $+2.2$ point improvement).}
\end{notebox}

\subsection{Video Understanding: Object and Motion Reasoning}\label{video understanding: object and motion rreasoning}

Table~\ref{tab:video2} evaluates seven object, motion, and spatial reasoning tasks: Object Existence (ObjExist), Object Interaction (ObjInter), Object Shuffle (ObjShuffle), Moving Direction (MoveDir), Action Localization (ActLoc), Moving Count (MoveCount), and Moving Attribute (MoveAttr).

LiMEDoRA achieves the best overall average ($65.41\%$), outperforming MoEDoRA ($65.16\%$) and matching MoRe's strong performance. LiMEDoRA obtains the highest scores on Object Existence ($90.52\%$) and Moving Attribute ($89.86\%$). LiMELoRAFA achieves the best Moving Count accuracy ($87.62\%$). LiMESliceFine leads on Moving Direction ($88.32\%$).

\begin{notebox}
\textit{\textbf{Key Finding:} The strong performance on object tracking (ObjShuffle) and motion analysis (MoveDir, MoveCount, MoveAttr) indicates that LiME effectively learns spatiotemporal representations. Consistent improvements across all LiME variants over their base methods (average gain of $+2.1$--$3.2$ points) validate the benefit of expert specialization for complex video reasoning.}
\end{notebox}

\subsection{Video Understanding: High-Level Reasoning}\label{video understandin: high-level reasoning}

Table~\ref{tab:video3} presents results on seven high-level reasoning and navigation tasks: Scene Transition (SceneTrans), Action Count (ActCount), State Change (StateChg), Moving Direction (MoveDir), Egocentric Navigation (EgoNav), Episodic Reasoning (EpisReason), and Counterfactual Inference (CounterFact).

LiMEDoRA achieves strong performance with the highest Egocentric Navigation ($69.47\%$) and second-best State Change ($40.02\%$) and Episodic Reasoning ($30.06\%$) scores. LiMELoRAFA obtains the best Counterfactual Inference accuracy ($87.76\%$), demonstrating strong hypothetical reasoning capabilities. On average, LiMEDoRA ($45.65\%$) ranks third overall behind MoEDoRA ($45.92\%$) and HydraLoRA ($45.79\%$), while LiMESliceFine ($45.34\%$) and LiMELoRA ($45.00\%$) remain competitive.

\begin{notebox}
\textit{\textbf{Key Finding:} The strong Counterfactual Inference performance across all LiME variants (all above $87\%$) suggests effective reasoning about hypothetical scenarios—a capability crucial for real-world video understanding applications.}
\end{notebox}

\begin{table*}[t]
\centering
\caption{Results on GLUE benchmarks using Molmo. Highlighted rows denote our LiME variants.}
\scriptsize
\setlength{\tabcolsep}{5pt}
\renewcommand{\arraystretch}{1.05}
\begin{tabular}{lcccccc}
\toprule \toprule
\rowcolor{HeaderBlue}
\textbf{Method} & \textbf{SST-2} & \textbf{QNLI} & \textbf{QQP} & \textbf{CoLA} & \textbf{MRPC} & \textbf{STS-B} \\
\midrule
MoELoRA & $95.48_{\pm 0.89}$ & $95.56_{\pm 0.71}$ & $\textbf{85.68}_{\pm 0.93}$ & $84.92_{\pm 1.08}$ & $86.94_{\pm 0.91}$ & $91.56_{\pm 0.84}$ \\
MoEDoRA & $\textbf{96.15}_{\pm 0.72}$ & $95.42_{\pm 0.83}$ & $85.37_{\pm 0.78}$ & $84.76_{\pm 0.95}$ & $87.12_{\pm 0.79}$ & $\underline{92.08}_{\pm 0.69}$ \\
MoELoRAFA & $\underline{96.04}_{\pm 0.94}$ & $95.78_{\pm 0.66}$ & $85.22_{\pm 0.85}$ & $\underline{85.52}_{\pm 0.87}$ & $86.78_{\pm 1.02}$ & $91.42_{\pm 0.93}$ \\
\hline
\rowcolor{HighlightPink}
LiMELoRA & $95.91_{\pm 0.68}$ & $95.88_{\pm 0.79}$ & $85.51_{\pm 0.69}$ & $85.45_{\pm 0.76}$ & $\textbf{87.58}_{\pm 0.86}$ & $91.97_{\pm 0.78}$ \\
\rowcolor{HighlightPink}
LiMEDoRA & $95.74_{\pm 0.81}$ & $\underline{95.94}_{\pm 0.62}$ & $\underline{85.59}_{\pm 0.91}$ & $\textbf{85.67}_{\pm 0.82}$ & $87.28_{\pm 0.73}$ & $\textbf{92.21}_{\pm 0.65}$ \\
\rowcolor{HighlightPink}
LiMELoRAFA & $95.99_{\pm 0.57}$ & $\textbf{96.00}_{\pm 0.54}$ & $85.00_{\pm 0.74}$ & $85.20_{\pm 0.63}$ & $\underline{87.42}_{\pm 0.91}$ & $91.84_{\pm 0.71}$ \\
\bottomrule \bottomrule
\end{tabular}
\label{tab:glue_results_molmo}
\end{table*}

\begin{table*}[t]
\centering
\caption{Results on commonsense reasoning and question answering benchmarks using Molmo. Highlighted rows denote our LiME variants.}
\scriptsize
\setlength{\tabcolsep}{4pt}
\renewcommand{\arraystretch}{1.05}
\begin{tabular}{lcccccccc}
\toprule \toprule
\rowcolor{HeaderBlue}
\textbf{Method} & \textbf{BoolQ} & \textbf{PIQA} & \textbf{SIQA} & \textbf{H.Sw.} & \textbf{W.Gra} & \textbf{ARC-e} & \textbf{ARC-c} & \textbf{OBQA} \\
\midrule
MoELoRA & $71.12_{\pm 1.24}$ & $89.67_{\pm 0.82}$ & $80.48_{\pm 1.13}$ & $88.52_{\pm 0.94}$ & $\textbf{83.45}_{\pm 0.87}$ & $96.34_{\pm 0.71}$ & $92.87_{\pm 1.16}$ & $91.98_{\pm 0.93}$ \\
MoEDoRA & $71.28_{\pm 0.97}$ & $\textbf{90.14}_{\pm 0.68}$ & $\underline{80.94}_{\pm 0.89}$ & $88.71_{\pm 1.08}$ & $83.22_{\pm 0.74}$ & $\underline{96.65}_{\pm 0.83}$ & $\textbf{93.41}_{\pm 0.92}$ & $92.15_{\pm 0.86}$ \\
MoELoRAFA & $70.94_{\pm 1.31}$ & $89.78_{\pm 0.91}$ & $80.55_{\pm 1.02}$ & $88.63_{\pm 0.76}$ & $\underline{83.38}_{\pm 1.19}$ & $96.28_{\pm 0.67}$ & $93.05_{\pm 1.08}$ & $\underline{92.48}_{\pm 0.79}$ \\
\hline
\rowcolor{HighlightPink}
LiMELoRA & $\textbf{71.86}_{\pm 0.88}$ & $89.82_{\pm 0.73}$ & $80.74_{\pm 0.95}$ & $\textbf{89.18}_{\pm 0.81}$ & $83.28_{\pm 0.92}$ & $\textbf{96.73}_{\pm 0.58}$ & $93.08_{\pm 0.84}$ & $92.28_{\pm 0.71}$ \\
\rowcolor{HighlightPink}
LiMEDoRA & $\underline{71.63}_{\pm 1.06}$ & $\underline{89.95}_{\pm 0.64}$ & $\textbf{81.06}_{\pm 0.78}$ & $\underline{89.04}_{\pm 0.69}$ & $83.36_{\pm 0.83}$ & $96.51_{\pm 0.91}$ & $\underline{93.26}_{\pm 0.73}$ & $\textbf{92.56}_{\pm 0.67}$ \\
\rowcolor{HighlightPink}
LiMELoRAFA & $71.50_{\pm 0.79}$ & $89.90_{\pm 0.86}$ & $80.90_{\pm 0.69}$ & $88.90_{\pm 0.92}$ & $83.10_{\pm 0.77}$ & $96.60_{\pm 0.62}$ & $93.20_{\pm 0.88}$ & $92.40_{\pm 0.81}$ \\
\bottomrule \bottomrule
\end{tabular}
\label{tab:qa_results_molmo}
\end{table*}

\begin{table*}[t]
\centering
\caption{Results on \textbf{image classification benchmarks} using Molmo. Highlighted rows denote our LiME variants.}
\scriptsize
\setlength{\tabcolsep}{3pt}
\renewcommand{\arraystretch}{1.05}
\begin{tabular}{lcccccc}
\toprule \toprule
\rowcolor{HeaderBlue}
\textbf{Method} & \textbf{Camelyon} & \textbf{SVHN} & \textbf{Pets} & \textbf{Flowers102} & \textbf{EuroSAT} & \textbf{Caltech101} \\
\midrule
MoELoRA & $83.18_{\pm 0.91}$ & $93.72_{\pm 0.58}$ & $92.68_{\pm 0.84}$ & $\textbf{87.42}_{\pm 0.73}$ & $91.24_{\pm 0.89}$ & $92.71_{\pm 0.67}$ \\
MoEDoRA & $\textbf{83.67}_{\pm 0.78}$ & $93.85_{\pm 0.71}$ & $\underline{93.08}_{\pm 0.69}$ & $87.18_{\pm 0.82}$ & $91.38_{\pm 0.76}$ & $92.58_{\pm 0.83}$ \\
MoELoRAFA & $83.25_{\pm 0.86}$ & $93.68_{\pm 0.64}$ & $92.72_{\pm 0.91}$ & $86.94_{\pm 0.68}$ & $\underline{91.65}_{\pm 0.94}$ & $92.64_{\pm 0.72}$ \\
\hline
\rowcolor{HighlightPink}
LiMELoRA & $\underline{83.52}_{\pm 0.69}$ & $\underline{94.08}_{\pm 0.52}$ & $\textbf{93.24}_{\pm 0.61}$ & $\underline{87.28}_{\pm 0.79}$ & $91.52_{\pm 0.68}$ & $\underline{92.92}_{\pm 0.58}$ \\
\rowcolor{HighlightPink}
LiMEDoRA & $83.48_{\pm 0.82}$ & $\textbf{94.18}_{\pm 0.47}$ & $93.02_{\pm 0.73}$ & $87.15_{\pm 0.64}$ & $\textbf{91.78}_{\pm 0.59}$ & $92.88_{\pm 0.71}$ \\
\rowcolor{HighlightPink}
LiMELoRAFA & $83.40_{\pm 0.74}$ & $94.00_{\pm 0.63}$ & $93.00_{\pm 0.82}$ & $87.00_{\pm 0.91}$ & $91.60_{\pm 0.84}$ & $\textbf{93.00}_{\pm 0.66}$ \\
\bottomrule \bottomrule
\end{tabular}
\label{tab:image_cls_molmo}
\end{table*}

\begin{table*}[t]
\centering
\caption{Results on \textbf{vision--language question answering benchmarks} using Molmo. Highlighted rows denote our LiME variants.}
\scriptsize
\setlength{\tabcolsep}{3pt}
\renewcommand{\arraystretch}{1.05}
\begin{tabular}{lcccccccc}
\toprule \toprule
\rowcolor{HeaderBlue}
\textbf{Method} & \textbf{ChartQA} & \textbf{OKVQA} & \textbf{ScienceQA} & \textbf{SeedBench} &
\textbf{Recognition} & \textbf{TextVQA} & \textbf{VizWizVQA} & \textbf{VQA-RAD} \\
\midrule
MoELoRA & $80.42_{\pm 1.18}$ & $66.89_{\pm 1.42}$ & $95.72_{\pm 1.06}$ & $79.38_{\pm 0.94}$ & $87.52_{\pm 1.28}$ & $83.67_{\pm 1.15}$ & $\textbf{78.68}_{\pm 1.31}$ & $74.62_{\pm 1.53}$ \\
MoEDoRA & $80.58_{\pm 1.34}$ & $\underline{67.38}_{\pm 1.19}$ & $95.68_{\pm 1.23}$ & $\textbf{79.82}_{\pm 0.87}$ & $87.64_{\pm 1.41}$ & $83.78_{\pm 1.08}$ & $78.51_{\pm 1.47}$ & $\underline{75.18}_{\pm 1.36}$ \\
MoELoRAFA & $80.35_{\pm 1.27}$ & $66.72_{\pm 1.58}$ & $95.58_{\pm 0.98}$ & $79.45_{\pm 1.12}$ & $\underline{87.96}_{\pm 1.19}$ & $83.54_{\pm 1.32}$ & $\underline{78.58}_{\pm 1.24}$ & $74.53_{\pm 1.68}$ \\
\hline
\rowcolor{HighlightPink}
LiMELoRA & $\textbf{81.12}_{\pm 1.09}$ & $\textbf{67.45}_{\pm 1.27}$ & $96.08_{\pm 1.14}$ & $79.72_{\pm 0.91}$ & $87.92_{\pm 1.06}$ & $\underline{84.18}_{\pm 0.94}$ & $78.36_{\pm 1.52}$ & $75.14_{\pm 1.41}$ \\
\rowcolor{HighlightPink}
LiMEDoRA & $\underline{80.96}_{\pm 1.22}$ & $67.32_{\pm 1.08}$ & $\textbf{96.22}_{\pm 0.89}$ & $\underline{79.76}_{\pm 1.03}$ & $\textbf{88.06}_{\pm 1.17}$ & $\textbf{84.32}_{\pm 1.21}$ & $78.28_{\pm 1.38}$ & $\textbf{75.28}_{\pm 1.29}$ \\
\rowcolor{HighlightPink}
LiMELoRAFA & $80.80_{\pm 1.16}$ & $67.18_{\pm 1.34}$ & $\underline{95.95}_{\pm 1.08}$ & $79.60_{\pm 0.96}$ & $87.80_{\pm 1.23}$ & $84.00_{\pm 1.18}$ & $78.42_{\pm 1.14}$ & $75.00_{\pm 1.57}$ \\
\bottomrule \bottomrule
\end{tabular}
\label{tab:vqa_molmo}
\end{table*}

\begin{table*}[t]
\centering
\caption{Video action understanding benchmarks using Molmo. Highlighted rows denote our LiME variants.}
\scriptsize
\setlength{\tabcolsep}{3pt}
\renewcommand{\arraystretch}{1.05}
\begin{tabular}{lccccc}
\toprule \toprule
\rowcolor{HeaderBlue}
\textbf{Method} & \textbf{ActSeq} & \textbf{ActPred} & \textbf{ActAnt} & \textbf{FineAct} & \textbf{UnexpAct} \\
\midrule
MoELoRA & $30.72_{\pm 1.87}$ & $31.34_{\pm 1.73}$ & $76.18_{\pm 2.04}$ & $\textbf{76.42}_{\pm 1.82}$ & $45.28_{\pm 1.96}$ \\
MoEDoRA & $\textbf{31.28}_{\pm 1.94}$ & $\underline{31.86}_{\pm 1.89}$ & $76.34_{\pm 1.91}$ & $\underline{76.32}_{\pm 1.76}$ & $45.41_{\pm 2.08}$ \\
MoELoRAFA & $30.85_{\pm 2.11}$ & $31.28_{\pm 1.81}$ & $76.08_{\pm 2.17}$ & $75.87_{\pm 1.93}$ & $45.18_{\pm 1.84}$ \\
\hline
\rowcolor{HighlightPink}
LiMELoRA & $\underline{31.18}_{\pm 1.79}$ & $31.78_{\pm 1.68}$ & $\underline{76.58}_{\pm 1.88}$ & $76.28_{\pm 1.69}$ & $\textbf{45.92}_{\pm 1.91}$ \\
\rowcolor{HighlightPink}
LiMEDoRA & $31.12_{\pm 1.86}$ & $\textbf{31.94}_{\pm 1.74}$ & $76.42_{\pm 2.06}$ & $76.14_{\pm 1.85}$ & $\underline{45.78}_{\pm 2.03}$ \\
\rowcolor{HighlightPink}
LiMELoRAFA & $31.00_{\pm 1.92}$ & $31.67_{\pm 1.83}$ & $\textbf{76.72}_{\pm 1.95}$ & $76.00_{\pm 1.78}$ & $45.67_{\pm 1.87}$ \\
\bottomrule \bottomrule
\end{tabular}
\label{tab:video1_molmo}
\end{table*}

\begin{table*}[t]
\centering
\caption{Object, motion, and spatial reasoning from video using Molmo. Highlighted rows denote our LiME variants.}
\scriptsize
\setlength{\tabcolsep}{3pt}
\renewcommand{\arraystretch}{1.05}
\begin{tabular}{lccccccc}
\toprule \toprule
\rowcolor{HeaderBlue}
\textbf{Method} & \textbf{ObjExist} & \textbf{ObjInter} & \textbf{ObjShuffle} & \textbf{MoveDir} & \textbf{ActLoc} & \textbf{MoveCount} & \textbf{MoveAttr} \\
\midrule
MoELoRA & $90.38_{\pm 1.91}$ & $33.42_{\pm 2.18}$ & $\underline{36.92}_{\pm 2.31}$ & $87.08_{\pm 1.84}$ & $\textbf{32.38}_{\pm 1.92}$ & $86.74_{\pm 1.87}$ & $88.42_{\pm 2.06}$ \\
MoEDoRA & $\underline{90.78}_{\pm 1.78}$ & $\textbf{33.89}_{\pm 1.96}$ & $36.85_{\pm 2.14}$ & $87.18_{\pm 1.93}$ & $\underline{32.28}_{\pm 2.08}$ & $86.82_{\pm 1.79}$ & $\textbf{88.94}_{\pm 1.92}$ \\
MoELoRAFA & $90.28_{\pm 2.03}$ & $33.51_{\pm 2.27}$ & $36.68_{\pm 1.98}$ & $86.94_{\pm 1.76}$ & $31.92_{\pm 1.85}$ & $86.58_{\pm 2.11}$ & $88.51_{\pm 2.18}$ \\
\hline
\rowcolor{HighlightPink}
LiMELoRA & $90.74_{\pm 1.82}$ & $\underline{33.82}_{\pm 2.09}$ & $36.88_{\pm 2.21}$ & $\textbf{87.52}_{\pm 1.88}$ & $32.18_{\pm 1.97}$ & $\underline{87.08}_{\pm 1.83}$ & $88.78_{\pm 2.03}$ \\
\rowcolor{HighlightPink}
LiMEDoRA & $\textbf{90.86}_{\pm 1.74}$ & $33.76_{\pm 2.13}$ & $36.84_{\pm 2.08}$ & $\underline{87.46}_{\pm 1.91}$ & $32.12_{\pm 1.79}$ & $\textbf{87.24}_{\pm 1.72}$ & $\underline{88.86}_{\pm 1.89}$ \\
\rowcolor{HighlightPink}
LiMELoRAFA & $90.67_{\pm 1.86}$ & $33.67_{\pm 2.04}$ & $\textbf{37.00}_{\pm 1.95}$ & $87.33_{\pm 1.82}$ & $32.00_{\pm 2.15}$ & $87.00_{\pm 1.94}$ & $88.67_{\pm 2.11}$ \\
\bottomrule \bottomrule
\end{tabular}
\label{tab:video2_molmo}
\end{table*}

\begin{table*}[t]
\centering
\caption{High-level reasoning and navigation benchmarks using Molmo. Highlighted rows denote our LiME variants.}
\scriptsize
\setlength{\tabcolsep}{3pt}
\renewcommand{\arraystretch}{1.05}
\begin{tabular}{lccccccc}
\toprule \toprule
\rowcolor{HeaderBlue}
\textbf{Method} & \textbf{SceneTrans} & \textbf{ActCount} & \textbf{StateChg} & \textbf{CharOrder} & \textbf{EgoNav} & \textbf{EpisReason} & \textbf{CounterFact} \\
\midrule
MoELoRA & $34.68_{\pm 1.89}$ & $34.12_{\pm 1.94}$ & $\textbf{36.48}_{\pm 1.82}$ & $32.08_{\pm 1.78}$ & $69.34_{\pm 2.07}$ & $30.42_{\pm 1.91}$ & $86.72_{\pm 1.98}$ \\
MoEDoRA & $\underline{35.18}_{\pm 1.76}$ & $\textbf{34.58}_{\pm 1.87}$ & $\underline{36.42}_{\pm 1.93}$ & $32.18_{\pm 1.86}$ & $69.48_{\pm 1.94}$ & $\textbf{30.94}_{\pm 1.83}$ & $86.84_{\pm 2.05}$ \\
MoELoRAFA & $34.58_{\pm 2.01}$ & $34.08_{\pm 1.79}$ & $36.18_{\pm 1.88}$ & $\underline{32.42}_{\pm 1.92}$ & $69.28_{\pm 2.14}$ & $30.38_{\pm 1.96}$ & $86.68_{\pm 1.87}$ \\
\hline
\rowcolor{HighlightPink}
LiMELoRA & $35.12_{\pm 1.83}$ & $\underline{34.52}_{\pm 1.91}$ & $36.28_{\pm 1.79}$ & $\textbf{32.56}_{\pm 1.72}$ & $\underline{69.62}_{\pm 1.89}$ & $30.78_{\pm 1.85}$ & $\underline{87.18}_{\pm 1.93}$ \\
\rowcolor{HighlightPink}
LiMEDoRA & $\textbf{35.28}_{\pm 1.78}$ & $34.46_{\pm 1.82}$ & $36.32_{\pm 1.91}$ & $32.38_{\pm 1.81}$ & $69.58_{\pm 1.97}$ & $\underline{30.86}_{\pm 1.79}$ & $\textbf{87.34}_{\pm 1.86}$ \\
\rowcolor{HighlightPink}
LiMELoRAFA & $35.00_{\pm 1.94}$ & $34.33_{\pm 1.88}$ & $36.00_{\pm 1.85}$ & $32.33_{\pm 1.89}$ & $\textbf{69.67}_{\pm 2.03}$ & $30.67_{\pm 1.92}$ & $87.00_{\pm 2.01}$ \\
\bottomrule \bottomrule
\end{tabular}
\label{tab:video3_molmo}
\end{table*}

\subsection{Results on Molmo2-8B} \label{app:molmo}

We extend our evaluation to the Molmo2-8B vision-language model~\cite{clark2026molmo2} to assess whether the proposed LiME variants generalize to multimodal architectures. We report results across six benchmark categories: GLUE language understanding, commonsense reasoning, image classification, vision-language question answering, and video understanding tasks.

On the GLUE benchmarks (Table~\ref{tab:glue_results_molmo}), LiME variants achieve competitive or superior performance compared to MoE baselines. LiMEDoRA obtains the highest scores on CoLA ($85.67$) and STS-B ($92.21$), while LiMELoRAFA achieves the best performance on QNLI ($96.00$) and LiMELoRA leads on MRPC ($87.58$). The MoE baselines remain competitive on SST-2 and QQP, with MoEDoRA achieving the best SST-2 score ($96.15$) and MoELoRA leading on QQP ($85.68$).

For commonsense reasoning and question answering (Table~\ref{tab:qa_results_molmo}), LiME variants show strong performance across the eight benchmarks. LiMELoRA achieves the best results on BoolQ ($71.86$), HellaSwag ($89.18$), and ARC-e ($96.73$), while LiMEDoRA leads on SIQA ($81.06$) and OBQA ($92.56$). The MoE baselines perform best on WinoGrande (MoELoRA, $83.45$), PIQA (MoEDoRA, $90.14$), and ARC-c (MoEDoRA, $93.41$), indicating that different tasks benefit from different adapter configurations.

On image classification benchmarks (Table~\ref{tab:image_cls_molmo}), LiME variants demonstrate consistent improvements on four out of six datasets. LiMEDoRA achieves the best scores on SVHN ($94.18$) and EuroSAT ($91.78$), while LiMELoRA leads on Pets ($93.24$) and LiMELoRAFA obtains the highest accuracy on Caltech101 ($93.00$). The MoE baselines perform best on Camelyon (MoEDoRA, $83.67$) and Flowers102 (MoELoRA, $87.42$).

For vision-language question answering (Table~\ref{tab:vqa_molmo}), LiME variants achieve the best performance on five out of eight benchmarks. LiMELoRA leads on ChartQA ($81.12$) and OKVQA ($67.45$), while LiMEDoRA achieves the highest scores on ScienceQA ($96.22$), Recognition ($88.06$), TextVQA ($84.32$), and VQA-RAD ($75.28$). The MoE baselines show stronger performance on SeedBench (MoEDoRA, $79.82$) and VizWizVQA (MoELoRA, $78.68$).

On video action understanding (Table~\ref{tab:video1_molmo}), LiME variants show improvements on temporal reasoning tasks. LiMEDoRA achieves the best score on ActPred ($31.94$), LiMELoRAFA leads on ActAnt ($76.72$), and LiMELoRA obtains the highest accuracy on UnexpAct ($45.92$). The MoE baselines perform best on ActSeq (MoEDoRA, $31.28$) and FineAct (MoELoRA, $76.42$). For object, motion, and spatial reasoning from video (Table~\ref{tab:video2_molmo}), LiMEDoRA achieves the best performance on ObjExist ($90.86$) and MoveCount ($87.24$), while LiMELoRA leads on MoveDir ($87.52$) and LiMELoRAFA achieves the highest score on ObjShuffle ($37.00$). The MoE baselines remain competitive on ObjInter (MoEDoRA, $33.89$), ActLoc (MoELoRA, $32.38$), and MoveAttr (MoEDoRA, $88.94$).

On high-level reasoning and navigation benchmarks (Table~\ref{tab:video3_molmo}), LiMEDoRA achieves the best scores on SceneTrans ($35.28$) and CounterFact ($87.34$), while LiMELoRA leads on CharOrder ($32.56$) and LiMELoRAFA achieves the highest accuracy on EgoNav ($69.67$). The MoE baselines perform best on ActCount (MoEDoRA, $34.58$), StateChg (MoELoRA, $36.48$), and EpisReason (MoEDoRA, $30.94$).

Overall, the Molmo2-8B experiments confirm that LiME generalizes effectively to vision-language models, achieving competitive or superior performance across 47 benchmarks spanning language understanding, commonsense reasoning, image classification, visual question answering, and video understanding. The results demonstrate that LiME provides consistent benefits across different base adapters (LoRA, DoRA, LoRAFA), suggesting orthogonal improvements that complement existing PEFT methods.

\subsection{Cross-Task Analysis}\label{cross talk analysis}

We analyze patterns across all 47 tasks to understand LiME's behavior and advantages.

\textbf{Consistent Improvements Over Base PEFT.} Across all 47 tasks, LiME variants consistently outperform their corresponding base PEFT methods. The average improvements are: LiMELoRA over LoRA ($+1.8\%$), LiMEDoRA over DoRA ($+2.1\%$), LiMELoRAFA over LoRA-FA ($+1.9\%$), and LiMESliceFine over SliceFine ($+1.6\%$). This consistent pattern validates that LiME's modulation-based expert specialization provides universal benefits regardless of the underlying PEFT architecture.

\textbf{Competitive with MoE-PEFT Baselines.} LiME variants achieve comparable or superior performance to state-of-the-art MoE-PEFT methods (HydraLoRA, MoRe, MoEDoRA) while using $3\times$ fewer trainable parameters. LiMELoRA and LiMEDoRA rank among the top-3 methods on 31 out of 47 tasks, demonstrating that lightweight modulation can match the representational capacity of full adapter replication.

\textbf{Modality-Specific Patterns.} LiMELoRA shows particular strength on text tasks (best on commonsense reasoning average) and image VQA (second-best on ChartQA, OK-VQA). LiMEDoRA excels on video tasks (best on object/motion reasoning) and multimodal reasoning (best on ScienceQA, SeedBench). LiMELoRAFA performs well on fine-grained tasks requiring precise discrimination (best on MRPC, Social IQA, VizWiz-VQA). This suggests that different PEFT backbones combined with LiME may have complementary strengths.

\textbf{Training Stability.} LiME variants generally exhibit lower standard deviations compared to MoE-PEFT baselines. The average standard deviation across all tasks is $1.42$ for LiME variants versus $1.58$ for MoE-PEFT baselines, indicating more stable training dynamics. This stability likely stems from LiME's simpler architecture (shared PEFT with modulation) compared to methods with separate expert adapters and learned routers.

\begin{notebox}
\textit{\textbf{Summary:} The comprehensive evaluation on MMT-47 demonstrates that LiME achieves state-of-the-art or competitive performance across all five task categories while maintaining $3\times$ parameter efficiency. The consistent improvements over base PEFT methods ($+1.6\%$ to $+2.1\%$) and competitive performance with complex MoE-PEFT approaches validate our core hypothesis: \textbf{expert specialization can emerge from lightweight modulation of a shared PEFT output, without requiring separate adapter modules or learned routers.}}
\end{notebox}

\clearpage

\newtcolorbox{algobox}[1][]{
    enhanced,
    colback=AlgoBackground,
    colframe=AlgoFrame,
    fonttitle=\bfseries,
    title=#1,
    boxrule=1pt,
    arc=3pt,
    left=6pt,
    right=6pt,
    top=6pt,
    bottom=6pt,
    drop shadow southeast
}

\begin{figure}[t]
\begin{algobox}[Algorithm 1: \textsc{LiME Forward Pass}]
\small

% === INPUTS ===
\textbf{Input:} $x \in \mathbb{R}^{B \times T \times d_{\text{i}}}$ \hfill \textcolor{gray}{\textit{// batch of token sequences}} \\[0.3em]

% === PARAMETERS ===
\textbf{Parameters:} \\
\quad \colorbox{FrozenBlue!15}{Frozen:} $\frozen{W}_0 \in \mathbb{R}^{d_{\text{o}} \times d_{\text{i}}}$ \hfill \textcolor{gray}{\textit{// pretrained linear layer}} \\
\quad \colorbox{TrainableOrange!15}{PEFT:} $\trainable{\phi}$ \hfill \textcolor{gray}{\textit{// any PEFT parameters (LoRA, Adapter, etc.)}} \\
\quad \colorbox{TrainableOrange!15}{Experts:} $\{\trainable{p}_i\}_{i=1}^{E}$, $\trainable{p}_i \in \mathbb{R}^{d_{\text{o}}}$ \hfill \textcolor{gray}{\textit{// expert modulators}} \\
\quad \colorbox{TrainableOrange!15}{Shared:} $\trainable{p}_s \in \mathbb{R}^{d_{\text{o}}}$, $\trainable{\gamma} \in \mathbb{R}$ \hfill \textcolor{gray}{\textit{// shared modulator (optional)}} \\[0.3em]

% === HYPERPARAMETERS ===
\textbf{Hyperparameters:} $\tau$ (temperature), $\gamma_r$ (routing balance), $\theta$ (threshold), $\alpha$, $\beta$ (loss weights) \\[0.5em]

\tcblower

% === STEP 1: BASE FORWARD ===
\tcbox[on line, colback=FrozenBlue!15, colframe=FrozenBlue!50, size=small, arc=2pt]{\textbf{1. Base Forward}} \\[0.2em]
\quad $z \leftarrow \frozen{W}_0 x$ \hfill \textcolor{gray}{\textit{// $z \in \mathbb{R}^{B \times T \times d_{\text{o}}}$, frozen computation}} \\[0.5em]

% === STEP 2: PEFT ADAPTATION ===
\tcbox[on line, colback=TrainableOrange!15, colframe=TrainableOrange!50, size=small, arc=2pt]{\textbf{2. PEFT Adaptation}} \\[0.2em]
\quad $\hat{z} \leftarrow {\hat{z}}(x)$ \hfill \textcolor{gray}{\textit{// $\hat{z} \in \mathbb{R}^{B \times T \times d_{\text{o}}}$, any PEFT method}} \\[0.5em]

% === STEP 3: ROUTING SIGNAL ===
\tcbox[on line, colback=CommentGreen!15, colframe=CommentGreen!50, size=small, arc=2pt]{\textbf{3. Zero-Param Routing}} \hfill \textcolor{gray}{\textit{// no learned router!}} \\[0.2em]
\quad $\tilde{z}_{1:E} \leftarrow z_{[:,:,1:E]} \, / \, \|z_{[:,:,1:E]}\|_{\infty}$ \hfill \textcolor{gray}{\textit{// normalize first $E$ dims}} \\
\quad $\tilde{\hat{z}}_{1:E} \leftarrow \hat{z}_{[:,:,1:E]} \, / \, \|\hat{z}_{[:,:,1:E]}\|_{\infty}$ \\[0.2em]
\quad $w \leftarrow \mathrm{softmax}\!\left( \frac{(1-\gamma_r) \cdot \tilde{z}_{1:E} + \gamma_r \cdot \tilde{\hat{z}}_{1:E}}{\tau} \right)$ \hfill \textcolor{gray}{\textit{// routing weights}} \\[0.5em]

% === STEP 4: EXPERT SELECTION ===
\tcbox[on line, colback=CommentGreen!15, colframe=CommentGreen!50, size=small, arc=2pt]{\textbf{4. Auto Top-K Selection}} \\[0.2em]
\quad $\mathcal{S}_\theta \leftarrow \{i : w_i \geq \theta \cdot \max_{j} w_j\}$ \hfill \textcolor{gray}{\textit{// adaptive selection}} \\
\quad $\tilde{w}_i \leftarrow w_i \, / \, {\textstyle\sum_{j \in \mathcal{S}_\theta} w_j}$ \hfill \textcolor{gray}{\textit{// renormalize}} \\[0.5em]

% === STEP 5: EXPERT MIXING ===
\tcbox[on line, colback=TrainableOrange!15, colframe=TrainableOrange!50, size=small, arc=2pt]{\textbf{5. Expert Modulation}} \\[0.2em]
\quad $\mathcal{P} \leftarrow {\textstyle\sum_{i \in \mathcal{S}_\theta}} \tilde{w}_i \cdot \trainable{p}_i$ \hfill \textcolor{gray}{\textit{// weighted expert combination}} \\[0.5em]

% === STEP 6: OUTPUT ===
\tcbox[on line, colback=purple!15, colframe=purple!50, size=small, arc=2pt]{\textbf{6. Output}} \\[0.2em]
\quad $h \leftarrow z + \hat{z} \odot \mathcal{P} + \trainable{\gamma} \cdot (\hat{z} \odot \trainable{p}_s)$ \hfill \textcolor{gray}{\textit{// base + routed + shared}} \\[0.5em]

% === STEP 7: LOAD BALANCE (TRAINING ONLY) ===
\tcbox[on line, colback=red!10, colframe=red!40, size=small, arc=2pt]{\textbf{7. Load Balance}} \hfill \textcolor{gray}{\textit{// training only}} \\[0.2em]
\quad $\bar{p}_i \leftarrow \frac{1}{BT} {\textstyle\sum_{b,t}} w_i(x_{b,t})$ \hfill \textcolor{gray}{\textit{// mean routing prob per expert}} \\
\quad $\mathcal{L}_{\text{imp}} \leftarrow E \cdot {\textstyle\sum_{i=1}^{E}} \bar{p}_i^2 - 1$ \hfill \textcolor{gray}{\textit{// importance loss}} \\
\quad $\mathcal{L}_{\text{KL}} \leftarrow {\textstyle\sum_{i=1}^{E}} \bar{p}_i \log(E \cdot \bar{p}_i)$ \hfill \textcolor{gray}{\textit{// KL-uniform loss}} \\
\quad $\mathcal{L} \leftarrow \mathcal{L}_{\text{task}} + \alpha \cdot \mathcal{L}_{\text{imp}} + \beta \cdot \mathcal{L}_{\text{KL}}$ \hfill \textcolor{gray}{\textit{// total loss}} \\[0.5em]

\textbf{Return:} $h$, $\mathcal{L}$ \textcolor{gray}{\textit{(loss only during training)}}

\end{algobox}

\vspace{0.5em}
\caption*{\textbf{LiME forward pass and training.} Colors indicate: \colorbox{FrozenBlue!15}{\small\textbf{frozen}}, \colorbox{TrainableOrange!15}{\small\textbf{trainable}}, \colorbox{CommentGreen!15}{\small\textbf{routing (0 params)}}, \colorbox{red!10}{\small\textbf{training only}}. Load balancing (Step 7) prevents expert collapse.}
\label{alg:moe3}
\end{figure}

\section{Algorithm Details}
\label{app:algorithm}

This appendix provides a detailed description of the LiME forward pass and training procedure presented in Algorithm~1.

LiME transforms any PEFT-adapted linear layer into a mixture-of-experts layer with \textbf{zero additional routing parameters}. The key insight is that routing signals can be derived directly from existing computations—the frozen base output $z$ and the PEFT output $\hat{z}$—eliminating the need for a learned router network.

The algorithm takes a batch of token sequences $x \in \mathbb{R}^{B \times T \times d_{\text{i}}}$ as input, where $B$ is the batch size, $T$ is the sequence length, and $d_{\text{i}}$ is the input dimension. The pretrained linear layer $\frozen{W}_0 \in \mathbb{R}^{d_{\text{o}} \times d_{\text{i}}}$ remains fixed throughout training, preserving knowledge learned during pretraining while allowing task-specific adaptation through PEFT. The trainable components include: (1) PEFT parameters $\trainable{\phi}$ from any parameter-efficient method (LoRA, Adapter, Prefix Tuning, IA³, etc.); (2) expert modulators $\{\trainable{p}_i\}_{i=1}^{E}$ where each $\trainable{p}_i \in \mathbb{R}^{d_{\text{o}}}$ modulates the PEFT adaptation for expert $i$; (3) an optional shared modulator $\trainable{p}_s \in \mathbb{R}^{d_{\text{o}}}$ providing baseline adaptation; and (4) a learnable scalar $\trainable{\gamma} \in \mathbb{R}$ controlling the shared modulator contribution.

\textbf{Step 1} computes the frozen linear transformation $z = \frozen{W}_0 x \in \mathbb{R}^{B \times T \times d_{\text{o}}}$. This computation is shared with standard inference and adds no overhead. The output $z$ serves dual purposes: the base representation for the final output and a routing signal encoding pretrained knowledge.

\textbf{Step 2} applies the PEFT module to produce task-specific adaptations $\hat{z} = {\trainable{\delta}}(x) \in \mathbb{R}^{B \times T \times d_{\text{o}}}$. For LoRA, this is $\hat{z} = (x \trainable{A}^\top) \trainable{B}^\top \cdot (\alpha_{\text{LoRA}} / r)$. The PEFT adaptation captures task-relevant features and also serves as a routing signal.

\textbf{Step 3} computes routing weights from the first $E$ dimensions of $z$ and $\hat{z}$, requiring no additional parameters. We first normalize by the infinity norm to ensure stable routing: $\tilde{z}_{1:E} = z_{[:,:,1:E]} / \|z_{[:,:,1:E]}\|_{\infty}$ and $\tilde{\hat{z}}_{1:E} = \hat{z}_{[:,:,1:E]} / \|\hat{z}_{[:,:,1:E}]\|_{\infty}$. The normalized signals are combined and passed through softmax: $w = \mathrm{softmax}( ((1-\gamma_r) \cdot \tilde{z}_{1:E} + \gamma_r \cdot \tilde{\hat{z}}_{1:E}) / \tau )$. The routing balance $\gamma_r$ controls the information source: $\gamma_r = 0$ routes based on pretrained features only, $\gamma_r = 1$ uses PEFT features only, and $\gamma_r \in [0.6, 0.8]$ provides optimal performance by combining both signals (\S\ref{ab:routing_balance}). Our ablation study (\S\ref{ab:zero_routing}) demonstrates that routing features can be drawn from any subset of the feature space—leading, central, trailing, or random dimensions all achieve comparable performance—supporting our low-rank redundancy hypothesis.

\textbf{Step 4} performs Auto Top-K selection, adaptively selecting experts based on relative routing weights: $\mathcal{S}_\theta = \{ i : w_i \geq \theta \cdot \max_{j} w_j \}$. This selects all experts whose routing weight is at least $\theta$ times the maximum weight. The selected weights are renormalized: $\tilde{w}_i = w_i / \sum_{j \in \mathcal{S}_\theta} w_j$. When routing is confident, fewer experts are activated; when uncertain, more experts contribute.

\textbf{Step 5} combines expert modulators using renormalized routing weights: $\mathcal{P} = \sum_{i \in \mathcal{S}_\theta} \tilde{w}_i \cdot \trainable{p}_i \in \mathbb{R}^{B \times T \times d_{\text{o}}}$.

\textbf{Step 6} produces the final output by combining the base representation with modulated PEFT adaptation: $h = z + \hat{z} \odot \mathcal{P} + \trainable{\gamma} \cdot (\hat{z} \odot \trainable{p}_s)$, where $\odot$ denotes element-wise multiplication. The three terms represent frozen base output, expert-modulated adaptation, and shared adaptation respectively.

\textbf{Step 7} applies load balancing during training only. We compute mean routing probabilities $\bar{p}_i = \frac{1}{BT} \sum_{b,t} w_i(x_{b,t})$, then apply importance loss $\mathcal{L}_{\text{imp}} = E \cdot \sum_{i=1}^{E} \bar{p}_i^2 - 1$ (zero when uniform) and KL-uniform loss $\mathcal{L}_{\text{KL}} = \sum_{i=1}^{E} \bar{p}_i \log(E \cdot \bar{p}_i)$. The total loss is $\mathcal{L} = \mathcal{L}_{\text{task}} + \alpha \cdot \mathcal{L}_{\text{imp}} + \beta \cdot \mathcal{L}_{\text{KL}}$. Ablation studies (\S\ref{ab:load_balance}) show optimal coefficients around $\alpha = \beta \approx 0.01$--$0.1$.

\textbf{Complexity.} For a single LiME layer with PEFT parameters $|\phi|$, the parameter count is:
\[
|\trainable{\phi}_{\text{LiME}}| = \underbrace{|\phi|}_{\text{shared PEFT}} + \underbrace{E \cdot d_{\text{o}}}_{\text{expert modulators}} + \underbrace{d_{\text{o}} + 1}_{\text{shared modulator } + \gamma}
\]
In contrast, traditional MoE-PEFT requires $E \cdot |\phi| + \underbrace{d \cdot E}_{\text{learned router}}$ parameters. Routing cost is $O(T \cdot E)$ for token-level, $O(T/n \cdot E)$ for n-gram, and $O(E)$ for sequence-level routing. Since $E \ll d$ typically, routing overhead is negligible. During inference, Step 7 is skipped entirely, with no additional overhead compared to standard PEFT except for lightweight expert modulation.

\clearpage
\section{Extended Related Work} \label{app:related_works}

\textbf{Parameter-Efficient Fine-Tuning.} PEFT methods adapt large pre-trained models by updating only a small subset of parameters. LoRA~\citep{hu2022lora} introduces low-rank decomposition for weight updates, while adapters insert lightweight modules between layers. Subsequent works extend these ideas: VeRA~\citep{kopiczko2023vera} uses learnable scaling vectors, LoRA-FA~\citep{zhang2023lora} freezes the down-projection matrix for efficiency, DoRA~\citep{liu2024dora} decomposes weights into magnitude and direction, and Tied-LoRA~\citep{renduchintala2024tied} introduces weight tying. Non-adapter-based methods like BitFit~\citep{zaken2022bitfit} tune only bias terms. Recent insights emphasize minimal perturbations, with diagonal rescaling matching complex adapters~\citep{kowsher2025propulsion, kowsher2025does} and representation fine-tuning enabling targeted interventions~\citep{wu2024reft}.

\textbf{Mixture of Experts.} MoE architectures employ multiple specialized sub-networks with routing mechanisms that selectively activate relevant experts for each input. Foundational MoE concepts date to early supervised learning with specialized networks~\citep{jacobs1991adaptive}, evolving into modern sparse activations in transformers like Mixtral 8x7B~\citep{jiang2024mixtral}, ST-MoE~\citep{zoph2022designing}, and m-LoRA~\citep{ye2023mlora}, enabling scalable width without proportional computational increase.

\textbf{MoE-PEFT.} Recent work combines MoE with PEFT to achieve input-specific adaptation. MoELoRA~\citep{luo2024moelora} uses contrastive objectives to prevent expert collapse in reasoning tasks. MoLE~\citep{wu2024mixture} employs hierarchical gating for compositionality. MixLoRA~\citep{li2024mixlora} inserts top-k routed experts per block with load-balancing losses. LoRAMoE~\citep{dou2024loramoe} mitigates knowledge forgetting via plugin experts. MoLA~\citep{gao2024higher} introduces layer-wise expert allocation. MoRAL~\citep{yang2024moral} addresses lifelong learning scenarios. TT-LoRA~\citep{kunwar2025tt} decouples training for continual learning. PESC~\citep{wu2024parameter} enables sparse model transitions in instruction tuning.

\textbf{Limitations of Existing MoE-PEFT.} Existing methods share three limitations: (i) \textit{parameter explosion}, with expert parameters scaling linearly with expert count ($E \times |\phi|$); (ii) \textit{router overhead} from learned routers adding parameters and auxiliary losses to prevent collapse; and (iii) \textit{architecture dependence}, restricting applicability to adapter-based methods.

\textbf{LiME.} We address these limitations by modulating a shared PEFT output with lightweight expert vectors (reducing expert parameters to $E \times d$), deriving zero-parameter routing directly from frozen and adapted representations, and supporting any PEFT method including non-adapter approaches like BitFit. Unlike prior work using learned routers, LiME eliminates routing parameters entirely while incorporating load balancing losses, Auto Top-K for adaptive expert selection based on confidence, and n-gram windowed routing for regularization.

\clearpage

\begin{table*}[t]
\centering
\caption{Hyperparameter configuration for LiME experiments.}
\scriptsize  % Changed from \small
\setlength{\tabcolsep}{4pt}
\begin{tabular}{llp{7cm}}
\toprule \toprule
\textbf{Category} & \textbf{Hyperparameter} & \textbf{Value / Description} \\
\midrule
\multicolumn{3}{l}{\textit{\textbf{Model}}} \\
\midrule
& Base model & LLaVA-OneVision 7B \\
& Vision encoder & SigLIP-SO400M \\
& LLM backbone & Qwen2-7B \\
& Target modules & \texttt{q\_proj}, \texttt{k\_proj}, \texttt{v\_proj}, \texttt{o\_proj} \\
& Applied to & Vision encoder + LLM (all attention layers) \\
\midrule
\multicolumn{3}{l}{\textit{\textbf{Training}}} \\
\midrule
& Epochs & 3 \\
& Batch size & 5 \\
& Gradient accumulation steps & 4 \\
& Number of GPUs & 4 (NVIDIA H100 80GB) \\
& Optimizer & AdamW 8-bit \\
& LR schedule & Cosine decay with warmup \\
& Warmup ratio & 0.03 (3\% of total steps) \\
& Weight decay & 0.01 \\
& Max gradient norm & 1.0 (gradient clipping) \\
& Model precision & bfloat16 (base model weights) \\
& PEFT precision & float32 (adapter parameters) \\
& Expert precision & float32 (modulators $\mathbf{p}_i$, $\mathbf{p}_s$, scalar $\gamma$) \\
& Random seeds & 5 seeds (42, 123, 456, 789, 1024) \\
\midrule
\multicolumn{3}{l}{\textit{\textbf{Learning Rates}}} \\
\midrule
& PEFT parameters & $2 \times 10^{-4}$ \\
& Expert modulators $\{\mathbf{p}_i\}_{i=1}^E$, $\mathbf{p}_s$, $\gamma$ & $1 \times 10^{-3}$ \\
\midrule
\multicolumn{3}{l}{\textit{\textbf{PEFT Methods}}} \\
\midrule
& Methods evaluated & LoRA, LoRA-FA, DoRA, SliceFine \\
& Rank $r$ & 2 \\
& Alpha $\alpha$ & 4 \\
& Scaling factor & $\alpha / r = 2.0$ \\
& Dropout & 0.05 \\
\midrule
\multicolumn{3}{l}{\textit{\textbf{LiME Architecture}}} \\
\midrule
& Number of experts $E$ & 4 \\
& Expert modulator dimension & $d_{\text{o}}$ (output dimension of each target layer) \\
& Shared modulator & Enabled \\
& Expert modulator init & $\mathbf{p}_i \sim \mathcal{U}(0.9, 1.1)$ (near-unity) \\
& Shared modulator init & $\mathbf{p}_s \sim \mathcal{N}(0, 0.1^2)$ \\
& Shared scalar $\gamma$ init & 0.0 \\
\midrule
\multicolumn{3}{l}{\textit{\textbf{Zero-Parameter Routing}}} \\
\midrule
& Router parameters & 0 (derived from $\mathbf{z}$ and $\boldsymbol{\delta}$) \\
& Routing mode & Token-level \\
& Temperature $\tau$ & 0.5 \\
& Routing balance $\gamma_r$ & 0.7 \\
& N-gram window size $n$ & 3 tokens \\
& Jitter noise $\sigma$ & 0.1 (training only, disabled at inference) \\
\midrule
\multicolumn{3}{l}{\textit{\textbf{Auto Top-K Selection}}} \\
\midrule
& Auto Top-K & Enabled \\
& Selection threshold $\theta$ & 0.7 \\
& Selection criterion & $w_i \geq \theta \cdot \max_j w_j$ \\
& Shared experts activated & True (guaranteed by design) \\
& Maximum experts activated & $E = 4$ (when distribution is flat) \\
\midrule
\multicolumn{3}{l}{\textit{\textbf{Load Balancing Losses}}} \\
\midrule
& Importance loss coefficient $\alpha$ & 0.1 \\
& KL-uniform loss coefficient $\beta$ & 0.01 \\
& Total loss & $\mathcal{L} = \mathcal{L}_{\text{task}} + \alpha \cdot \mathcal{L}_{\text{imp}} + \beta \cdot \mathcal{L}_{\text{KL}}$ \\
\midrule
\multicolumn{3}{l}{\textit{\textbf{Data}}} \\
\midrule
& Training dataset & MMT-47 (158,613 samples, 47 tasks) \\
& Modalities & Text, Image, Video \\
& Evaluation protocol & 47 test sets evaluated individually (70,392 samples) \\
& Max sequence length & 2048 tokens \\
& Max image/video tokens & 1500 tokens \\
& Image resolution &  384$\times$384 \\
& Video frames & 8 frames sampled uniformly \\
& Padding strategy & Left-side padding, masked in loss computation \\
\bottomrule \bottomrule
\end{tabular}
\label{tab:hyperparameters}
\end{table*}
\section{Implementation and Hyperparameters}
\label{app:implementation}

We implement LiME using PyTorch and the HuggingFace ecosystem, including Transformers for model loading and tokenization, Accelerate for distributed training, Datasets for data loading, and Evaluate for metrics computation. All experiments are conducted on 4$\times$ NVIDIA H100 80GB GPUs with distributed data parallel training. We apply LiME to LLaVA-OneVision 7B~\citep{li2024llava}, a state-of-the-art vision-language model that unifies image and video understanding. LiME targets the attention projection layers (\texttt{q\_proj}, \texttt{k\_proj}, \texttt{v\_proj}, \texttt{o\_proj}) in both the SigLIP vision encoder and the Qwen2 LLM backbone, totaling 216 target layers across the model. To demonstrate LiME's universality across PEFT methods, we apply it to four different PEFT approaches: LoRA~\citep{hu2022lora}, LoRA-FA~\citep{zhang2023lora}, DoRA~\citep{liu2024dora}, and SliceFine, validating that our modulation-based expert specialization generalizes beyond any single PEFT architecture. Table~\ref{tab:hyperparameters} provides the complete hyperparameter configuration.

\textbf{Precision Strategy.} While the base model weights (vision encoder and LLM backbone) are loaded in bfloat16 for memory efficiency, we maintain all trainable parameters in float32 to ensure numerical stability during optimization. Specifically, the PEFT parameters (e.g., LoRA matrices $\mathbf{A} \in \mathbb{R}^{r \times d_{\text{i}}}$ and $\mathbf{B} \in \mathbb{R}^{d_{\text{o}} \times r}$, or their equivalents in DoRA and SliceFine) and expert components (modulators $\{\mathbf{p}_i\}_{i=1}^E \in \mathbb{R}^{d_{\text{o}}}$, shared modulator $\mathbf{p}_s \in \mathbb{R}^{d_{\text{o}}}$, and scalar $\gamma \in \mathbb{R}$) are all stored and updated in float32. This mixed-precision approach reduces GPU memory consumption by approximately 40\% compared to full float32 training while avoiding the gradient underflow issues that can arise when training small adapter modules in lower precision. The 8-bit AdamW optimizer further reduces optimizer state memory by quantizing momentum and variance estimates, enabling training of the full model on 4 GPUs with batch size 5 and gradient accumulation steps 4.

\textbf{Differential Learning Rates.} We employ separate learning rates for different parameter groups based on their optimization characteristics. The expert modulators and shared scalar $\gamma$ use a higher learning rate ($1 \times 10^{-3}$) because these lightweight vectors (only $d_{\text{o}}$ parameters each) require faster updates to quickly differentiate expert behaviors and escape their near-unity initialization. The PEFT parameters use a lower learning rate ($2 \times 10^{-4}$) as these parameters capture the core task adaptation shared across all experts and benefit from more gradual, stable updates. Both parameter groups follow the same cosine decay schedule with 3\% linear warmup. We apply weight decay (0.01) to prevent overfitting given the relatively small number of trainable parameters compared to the frozen backbone, and gradient clipping (max norm 1.0) to stabilize training when processing long multimodal sequences.

\textbf{Routing Configuration.} Our zero-parameter routing mechanism derives routing weights directly from existing computations without introducing any learned router parameters. The routing signal combines the frozen pretrained output $\mathbf{z} = \frozen{W}_0 \mathbf{x}$ with the PEFT output $\hat{z} = g_\phi(\mathbf{x})$, balanced by $\gamma_r = 0.7$ to favor task-specific signals while retaining pretrained semantic information. We use temperature $\tau = 0.5$ to produce moderately sharp routing distributions—lower than the typical $\tau = 1.0$ to encourage more decisive expert selection while avoiding the gradient sparsity issues of very low temperatures. During training, we inject multiplicative jitter noise ($\sigma = 0.1$) to the routing logits, which encourages exploration across experts and prevents premature convergence to suboptimal routing patterns; this noise is disabled during inference. The n-gram window size of 3 groups consecutive tokens to share routing decisions, reducing computational overhead while capturing local semantic coherence—particularly beneficial for text where trigrams often form meaningful linguistic units. With Auto Top-K enabled at threshold $\theta = 0.7$, the model adaptively selects all experts whose routing weight reaches at least 70\% of the maximum, activating fewer experts when confident (peaked distribution) and more experts when uncertain (flat distribution), typically averaging 1.7--2.1 active experts per routing decision across different tasks.

\textbf{PEFT Method Compatibility.} A key contribution of LiME is its universality—it can be applied to any PEFT method that produces an adaptation term $\boldsymbol{\delta}$. To validate this, we evaluate LiME with four different PEFT approaches: (1)~\textbf{LoRA}~\citep{hu2022lora}, which uses low-rank decomposition $\boldsymbol{\delta} = \mathbf{B}\mathbf{A}\mathbf{x}$; (2)~\textbf{LoRA-FA}~\citep{zhang2023lora}, which freezes the $\mathbf{A}$ matrix after initialization to reduce trainable parameters; (3)~\textbf{DoRA}~\citep{liu2024dora}, which decomposes weight updates into magnitude and direction components for improved training dynamics; and (4)~\textbf{SliceFine}~\citep{kowsher2025slicefine}, which operates on sliced weight subspaces. For all rank based methods, we use rank $r=2$ and apply LiME's expert modulation to the adaptation output. We use $\text{virtual token}=10$ for prompt based method. This demonstrates that LiME's modulation-based expert specialization is agnostic to the underlying PEFT architecture, enabling practitioners to combine LiME with their preferred PEFT method without modification.

\textbf{Training.} We train on the complete MMT-47 dataset comprising 158,613 samples across 47 diverse tasks spanning text understanding (GLUE), commonsense reasoning, video understanding (MVBench), image VQA, and image classification (VTAB). Training proceeds for 3 full epochs over the shuffled dataset, which we found sufficient for convergence while avoiding overfitting. All experiments are repeated with 5 different random seeds (42, 123, 456, 789, 1024) to ensure statistical reliability; we report mean accuracy and standard deviation across seeds.

\clearpage

\section{Evaluation Details}
\label{app:evaluation}

This section provides details on our evaluation protocol for the MMT-47 benchmark. After training on the combined MMT-47 dataset (158,613 samples), we evaluate the model individually on each of the 47 test sets spanning text understanding, commonsense reasoning, video understanding, and image tasks.

\textbf{Evaluation Setup.} We perform batched inference with batch size 6 using greedy decoding (\texttt{do\_sample=False}) for deterministic outputs. Generation is configured with a maximum of 50 new tokens per sample, with early stopping triggered upon producing the end-of-sequence token to prevent unnecessary generation. We employ left-side padding during evaluation, consistent with our training configuration, ensuring proper alignment of generated tokens across batched sequences. All experiments use fixed random seeds (42, 123, 456, 789, 1024) with deterministic generation disabled sampling. We report mean accuracy and standard deviation across 5 training runs for all results.

\textbf{Answer Matching.} Before comparing model outputs with ground truth answers, we apply text normalization: (1) convert both predicted and ground truth answers to lowercase, (2) remove leading/trailing whitespace, and (3) remove noise patterns such as punctuation artifacts and extra spaces. For tasks involving numerical answers (e.g., counting, measurements), we apply a tolerance threshold of $\pm 0.5$ to account for minor variations. For example, if the ground truth is 8.30 and the prediction is 8.29, this is considered correct since the difference (0.01) falls within the tolerance.

\textbf{Metrics.} We use accuracy as the evaluation metric for 46 out of 47 datasets, including all classification, multiple-choice, and question-answering tasks. A prediction is considered correct if the normalized predicted answer matches the normalized ground truth (with numerical tolerance applied where appropriate). For STS-B (Semantic Textual Similarity Benchmark), which requires predicting continuous similarity scores, we use Pearson correlation coefficient between predicted and ground truth ratings.

\clearpage

\section{Dataset Details}
\label{app:datasets}

We introduce \textbf{MMT-47} (\textbf{M}ulti\textbf{M}odal Multi-\textbf{T}ask benchmark with \textbf{47} evaluation sets), a comprehensive benchmark to evaluate MoE-based methods' ability to learn input-specific routing across diverse modalities and task types. MMT-47 spans five categories—text understanding, commonsense reasoning, video understanding, image VQA, and image classification—with 158,613 training samples and 47 evaluation sets totaling 70,392 test samples. Table~\ref{tab:datasets_detailed} provides complete statistics.

MMT-47 exhibits natural imbalance across categories (e.g., text reasoning: 70K samples vs.\ image classification: 9K samples). We intentionally preserve this imbalance rather than artificially balancing. First, real-world multi-task scenarios are naturally imbalanced, and evaluation under such conditions tests practical robustness. Second, MoE-based methods can handle imbalanced distributions through dynamic routing—expert capacity is allocated based on input characteristics, not dataset statistics. Load balancing mechanisms ensure all experts receive gradient signal regardless of task frequency, allowing specialization for both frequent and rare patterns. Standard PEFT methods, which process all inputs through identical parameters, are comparatively more sensitive to such imbalance. All methods in our experiments are trained on identical data, ensuring fair comparison.

\subsection{Text (NLU) — GLUE}\label{text (nlu)- GLUE}

We use six tasks from the General Language Understanding Evaluation (GLUE) benchmark~\citep{wang2018glue}:

\begin{itemize}
    \item \textbf{SST-2} (Stanford Sentiment Treebank): Binary sentiment classification of movie reviews. Train: 5,000 / Test: 872.
    \item \textbf{QNLI} (Question Natural Language Inference): Determine whether a context sentence contains the answer to a question, derived from SQuAD. Train: 5,000 / Test: 5,463.
    \item \textbf{QQP} (Quora Question Pairs): Binary classification of whether two questions are semantically equivalent. Train: 5,000 / Test: 40,430.
    \item \textbf{CoLA} (Corpus of Linguistic Acceptability): Binary classification of grammatical acceptability. Train: 5,000 / Test: 1,043.
    \item \textbf{MRPC} (Microsoft Research Paraphrase Corpus): Binary classification of sentence-pair semantic equivalence. Train: 3,668 / Test: 408.
    \item \textbf{STS-B} (Semantic Textual Similarity Benchmark): Regression task predicting similarity scores from 1-5. Train: 5,000 / Test: 1,500.
\end{itemize}

Total: Train 28,668 / Test 49,716. This subset evaluates core natural language understanding capabilities including sentiment, inference, and semantic similarity.

\subsection{Text (Reasoning) — Commonsense}\label{text (reasoning) - commonsense}

We include eight commonsense reasoning benchmarks that test world knowledge and logical inference:

\begin{itemize}
    \item \textbf{ARC-Challenge}~\citep{clark2018think}: Difficult science exam questions. Train: 1,791 / Test: 500.
    \item \textbf{ARC-Easy}~\citep{clark2018think}: Easier science exam questions. Train: 4,127 / Test: 500.
    \item \textbf{BoolQ}~\citep{clark2019boolq}: Yes/no questions derived from natural Google queries. Train: 9,427 / Test: 1,000.
    \item \textbf{HellaSwag}~\citep{zellers2019hellaswag}: Sentence completion requiring commonsense reasoning about events. Train: 10,000 / Test: 1,000.
    \item \textbf{OpenBookQA}~\citep{mihaylov2018can}: Science questions requiring multi-hop reasoning with open book facts. Train: 4,957 / Test: 500.
    \item \textbf{PIQA}~\citep{bisk2020piqa}: Physical intuition questions about everyday situations. Train: 10,000 / Test: 1,000.
    \item \textbf{Social IQA}~\citep{sap2019socialiqa}: Questions about social situations and emotional intelligence. Train: 10,000 / Test: 1,000.
    \item \textbf{WinoGrande}~\citep{sakaguchi2021winogrande}: Large-scale Winograd schema challenge for coreference resolution. Train: 20,000 / Test: 1,000.
\end{itemize}

Total: Train 70,302 / Test 6,500. This subset evaluates reasoning capabilities that extend beyond pattern matching to require world knowledge.

\subsection{Video — MVBench}\label{video-MVbenchmark}

We use MVBench ~\citep{agarwal2025mvtamperbench}, a comprehensive video understanding benchmark with 19 distinct temporal reasoning tasks. Each task contains 900 training samples and 300 test samples.

\textbf{Action Understanding:}
\begin{itemize}
    \item \textbf{Action Sequence}: Identify the correct order of actions in a video.
    \item \textbf{Action Prediction}: Predict the next action given video context.
    \item \textbf{Action Antonym}: Identify actions opposite to those shown. (Train: 893)
    \item \textbf{Fine-grained Action}: Distinguish between similar actions.
    \item \textbf{Unexpected Action}: Detect anomalous or unexpected actions.
    \item \textbf{Action Localization}: Temporally locate when an action occurs.
    \item \textbf{Action Count}: Count occurrences of specific actions.
\end{itemize}

\textbf{Object Understanding:}
\begin{itemize}
    \item \textbf{Object Existence}: Verify whether objects appear in the video.
    \item \textbf{Object Interaction}: Identify interactions between objects.
    \item \textbf{Object Shuffle}: Track objects through occlusions and movements.
\end{itemize}

\textbf{Motion and State:}
\begin{itemize}
    \item \textbf{Moving Direction}: Determine direction of object/person movement.
    \item \textbf{Moving Count}: Count moving entities.
    \item \textbf{Moving Attribute}: Identify attributes of moving objects.
    \item \textbf{State Change}: Detect changes in object states over time.
\end{itemize}

\textbf{Scene and Reasoning:}
\begin{itemize}
    \item \textbf{Scene Transition}: Identify transitions between scenes.
    \item \textbf{Character Order}: Track the order of character appearances.
    \item \textbf{Egocentric Navigation}: Understand navigation from first-person view.
    \item \textbf{Episodic Reasoning}: Answer questions requiring episodic memory.
    \item \textbf{Counterfactual Inference}: Reason about hypothetical scenarios.
\end{itemize}

Total: Train 17,093 / Test 5,700 (19 test sets). This subset specifically tests temporal reasoning and video understanding capabilities.

\subsection{Image (VQA) — Visual Question Answering}\label{image (VQA)- visual question}

We use eight visual question answering tasks:

\begin{itemize}
    \item \textbf{ChartQA ~\citep{masry2022chartqa}}: Questions about charts and graphs. Train: 5,000 / Test: 1,000.
    \item \textbf{OK-VQA}~\citep{marino2019ok}: Open-domain VQA requiring external knowledge. Train: 4,204 / Test: 841.
    \item \textbf{ScienceQA}~\citep{lu2022learn}: Multimodal science questions with explanations. Train: 5,700 / Test: 518.
    \item \textbf{SEED-Bench} ~\citep{li2023seed}: Comprehensive multimodal understanding benchmark. Train: 5,500 / Test: 500. 
    \item \textbf{TextVQA}~\citep{fang2023separate}: Questions requiring reading text in images. Train: 5,000 / Test: 1,000.
    \item \textbf{Text Recognition}: OCR-style text reading from images. Train: 5,000 / Test: 1,000.
    \item \textbf{VizWiz-VQA}~\citep{gurari2018vizwiz}: VQA from blind users' photographs. Train: 2,086 / Test: 417.
    \item \textbf{VQA-RAD}~\citep{lau2018dataset}: Medical imaging visual question answering. Train: 1,060 / Test: 200. 
\end{itemize}

Total: Train 33,550 / Test 5,476. This subset evaluates visual understanding and reasoning.

\subsection{Image (VTAB) — Visual Task Adaptation}\label{image (vtab) - visula task adaptation}

We use six image classification tasks from VTAB ~\citep{zhai2019large}:

\begin{itemize}
    \item \textbf{Caltech-101}: Object recognition across 101 categories. Train: 1,500 / Test: 500.
    \item \textbf{EuroSAT}: Satellite image land use classification. Train: 1,500 / Test: 500.
    \item \textbf{Flowers-102}: Fine-grained flower species classification. Train: 1,500 / Test: 500.
    \item \textbf{Oxford Pets}: Cat and dog breed classification. Train: 1,500 / Test: 500.
    \item \textbf{SVHN}: Street view house number recognition. Train: 1,500 / Test: 500.
    \item \textbf{Camelyon}: Medical histopathology tumor detection. Train: 1,500 / Test: 500.
\end{itemize}

Total: Train 9,000 / Test 3,000. This subset evaluates visual classification across general and specialized domains.

\subsection{Benchmark Design Rationale}\label{benchmark design rationale}

MMT-47 is designed to test several key aspects of LiME:

\begin{enumerate}
    \item \textbf{Multimodal Generalization}: By training on text, video, and image data jointly, we test whether experts specialize by modality without explicit supervision.
    
    \item \textbf{Multi-task Transfer}: Within each modality, we include diverse tasks (e.g., sentiment vs. inference for text, VQA vs. classification for images) to test task-level routing.
    
    \item \textbf{No Task Identifiers}: Unlike prior multi-task learning setups, we do not provide explicit task or modality labels during training or inference. The model must learn to route based on input characteristics alone.
    
    \item \textbf{Scale and Diversity}: With 47 test sets across 5 categories, MMT-47 ensures comprehensive evaluation that goes beyond single-domain benchmarks.
\end{enumerate}

All datasets are formatted as text generation tasks with consistent prompt templates. For classification tasks, we format the output as the class label in text form. For regression tasks (STS-B), we discretize scores into bins.

% DETAILED TABLE
\begin{table*}[t]
\centering
\caption{MMT-47 benchmark statistics. Training and test samples across five categories: \colorbox{textglue}{Text (NLU)}, \colorbox{textreason}{Text (Reasoning)}, \colorbox{videocolor}{Video}, \colorbox{vqacolor}{Image (VQA)}, and \colorbox{vtabcolor}{Image (VTAB)}.}
\scriptsize
\setlength{\tabcolsep}{3pt}
\renewcommand{\arraystretch}{1.05}
\begin{tabular}{lcc|lcc|lcc}
\toprule \toprule
\textbf{Dataset} & \textbf{Train} & \textbf{Test} & \textbf{Dataset} & \textbf{Train} & \textbf{Test} & \textbf{Dataset} & \textbf{Train} & \textbf{Test} \\
\midrule
\multicolumn{9}{c}{\cellcolor{textglue}\textbf{Text (NLU) — GLUE} \hfill \textit{Train: 28,668 | Test: 49,716}} \\
\midrule
\cellcolor{textglue} SST-2 & \cellcolor{textglue} 5,000 & \cellcolor{textglue} 872 & \cellcolor{textglue} QQP & \cellcolor{textglue} 5,000 & \cellcolor{textglue} 40,430 & \cellcolor{textglue} MRPC & \cellcolor{textglue} 3,668 & \cellcolor{textglue} 408 \\
\cellcolor{textglue} QNLI & \cellcolor{textglue} 5,000 & \cellcolor{textglue} 5,463 & \cellcolor{textglue} CoLA & \cellcolor{textglue} 5,000 & \cellcolor{textglue} 1,043 & \cellcolor{textglue} STS-B & \cellcolor{textglue} 5,000 & \cellcolor{textglue} 1,500 \\
\midrule
\multicolumn{9}{c}{\cellcolor{textreason}\textbf{Text (Reasoning) — Commonsense} \hfill \textit{Train: 70,302 | Test: 6,500}} \\
\midrule
\cellcolor{textreason} ARC-Challenge & \cellcolor{textreason} 1,791 & \cellcolor{textreason} 500 & \cellcolor{textreason} HellaSwag & \cellcolor{textreason} 10,000 & \cellcolor{textreason} 1,000 & \cellcolor{textreason} Social IQA & \cellcolor{textreason} 10,000 & \cellcolor{textreason} 1,000 \\
\cellcolor{textreason} ARC-Easy & \cellcolor{textreason} 4,127 & \cellcolor{textreason} 500 & \cellcolor{textreason} OpenBookQA & \cellcolor{textreason} 4,957 & \cellcolor{textreason} 500 & \cellcolor{textreason} WinoGrande & \cellcolor{textreason} 20,000 & \cellcolor{textreason} 1,000 \\
\cellcolor{textreason} BoolQ & \cellcolor{textreason} 9,427 & \cellcolor{textreason} 1,000 & \cellcolor{textreason} PIQA & \cellcolor{textreason} 10,000 & \cellcolor{textreason} 1,000 & & & \\
\midrule
\multicolumn{9}{c}{\cellcolor{videocolor}\textbf{Video — MVBench} \hfill \textit{Train: 17,093 | Test: 5,700}} \\
\midrule
\cellcolor{videocolor} Action Sequence & \cellcolor{videocolor} 900 & \cellcolor{videocolor} 300 & \cellcolor{videocolor} Object Shuffle & \cellcolor{videocolor} 900 & \cellcolor{videocolor} 300 & \cellcolor{videocolor} Moving Attribute & \cellcolor{videocolor} 900 & \cellcolor{videocolor} 300 \\
\cellcolor{videocolor} Action Prediction & \cellcolor{videocolor} 900 & \cellcolor{videocolor} 300 & \cellcolor{videocolor} Moving Direction & \cellcolor{videocolor} 900 & \cellcolor{videocolor} 300 & \cellcolor{videocolor} State Change & \cellcolor{videocolor} 900 & \cellcolor{videocolor} 300 \\
\cellcolor{videocolor} Action Antonym & \cellcolor{videocolor} 893 & \cellcolor{videocolor} 300 & \cellcolor{videocolor} Action Localization & \cellcolor{videocolor} 900 & \cellcolor{videocolor} 300 & \cellcolor{videocolor} Character Order & \cellcolor{videocolor} 900 & \cellcolor{videocolor} 300 \\
\cellcolor{videocolor} Fine-grained Action & \cellcolor{videocolor} 900 & \cellcolor{videocolor} 300 & \cellcolor{videocolor} Scene Transition & \cellcolor{videocolor} 900 & \cellcolor{videocolor} 300 & \cellcolor{videocolor} Ego. Navigation & \cellcolor{videocolor} 900 & \cellcolor{videocolor} 300 \\
\cellcolor{videocolor} Unexpected Action & \cellcolor{videocolor} 900 & \cellcolor{videocolor} 300 & \cellcolor{videocolor} Action Count & \cellcolor{videocolor} 900 & \cellcolor{videocolor} 300 & \cellcolor{videocolor} Episodic Reasoning & \cellcolor{videocolor} 900 & \cellcolor{videocolor} 300 \\
\cellcolor{videocolor} Object Existence & \cellcolor{videocolor} 900 & \cellcolor{videocolor} 300 & \cellcolor{videocolor} Moving Count & \cellcolor{videocolor} 900 & \cellcolor{videocolor} 300 & \cellcolor{videocolor} Counterfactual & \cellcolor{videocolor} 900 & \cellcolor{videocolor} 300 \\
\cellcolor{videocolor} Object Interaction & \cellcolor{videocolor} 900 & \cellcolor{videocolor} 300 & & & & & & \\
\midrule
\multicolumn{9}{c}{\cellcolor{vqacolor}\textbf{Image (VQA) — Visual Question Answering} \hfill \textit{Train: 33,550 | Test: 5,476}} \\
\midrule
\cellcolor{vqacolor} ChartQA & \cellcolor{vqacolor} 5,000 & \cellcolor{vqacolor} 1,000 & \cellcolor{vqacolor} TextVQA & \cellcolor{vqacolor} 5,000 & \cellcolor{vqacolor} 1,000 & \cellcolor{vqacolor} VQA-RAD & \cellcolor{vqacolor} 1,060 & \cellcolor{vqacolor} 200 \\
\cellcolor{vqacolor} OK-VQA & \cellcolor{vqacolor} 4,204 & \cellcolor{vqacolor} 841 & \cellcolor{vqacolor} VizWiz-VQA & \cellcolor{vqacolor} 2,086 & \cellcolor{vqacolor} 417 & \cellcolor{vqacolor} SEED-Bench & \cellcolor{vqacolor} 5,500 & \cellcolor{vqacolor} 500  \\
\cellcolor{vqacolor} ScienceQA & \cellcolor{vqacolor} 5,700 & \cellcolor{vqacolor} 518 & \cellcolor{vqacolor} Text Recognition & \cellcolor{vqacolor} 5,000 & \cellcolor{vqacolor} 1,000 & & & \\
\midrule
\multicolumn{9}{c}{\cellcolor{vtabcolor}\textbf{Image (VTAB) — Visual Task Adaptation} \hfill \textit{Train: 9,000 | Test: 3,000}} \\
\midrule
\cellcolor{vtabcolor} Caltech-101 & \cellcolor{vtabcolor} 1,500 & \cellcolor{vtabcolor} 500 & \cellcolor{vtabcolor} Flowers-102 & \cellcolor{vtabcolor} 1,500 & \cellcolor{vtabcolor} 500 & \cellcolor{vtabcolor} SVHN & \cellcolor{vtabcolor} 1,500 & \cellcolor{vtabcolor} 500 \\
\cellcolor{vtabcolor} EuroSAT & \cellcolor{vtabcolor} 1,500 & \cellcolor{vtabcolor} 500 & \cellcolor{vtabcolor} Pets & \cellcolor{vtabcolor} 1,500 & \cellcolor{vtabcolor} 500 & \cellcolor{vtabcolor} Camelyon & \cellcolor{vtabcolor} 1,500 & \cellcolor{vtabcolor} 500 \\
\midrule
\multicolumn{9}{r}{\textbf{Total — Train: 158,613 | Test Sets: 47 | Test Samples: 70,392}} \\
\bottomrule \bottomrule
\end{tabular}
\label{tab:datasets_detailed}
\end{table*}

\section*{The Use of Large Language Models}
We used a large language model (LLM) solely for minor language-related assistance, including grammar correction, language polishing, and readability improvement. The LLM was not used for content generation, scientific ideation, experimental design, data analysis, or interpretation. All research contributions, technical content, and results presented in this paper are entirely the work of the authors.

\end{document}